\definecolor{blue}{HTML}{006699}
\definecolor{green}{HTML}{009966}
\definecolor{orange}{HTML}{f97306}
\theoremstyle{plain}
\newtheorem{theorem}{Theorem}[section]
\theoremstyle{definition}
\theoremstyle{remark}
\DeclareMathOperator*{\argmax}{arg\,max}
\newcommand{\norm}[1]{\left\lVert#1\right\rVert}
\icmltitlerunning{Mastering Zero-Shot Interactions in Cooperative and Competitive Simultaneous Games}
\begin{document}

\twocolumn[
\icmltitle{Mastering Zero-Shot Interactions in \\Cooperative and Competitive Simultaneous Games}



\icmlsetsymbol{equal}{*}

\begin{icmlauthorlist}
\icmlauthor{Yannik Mahlau}{luh}
\icmlauthor{Frederik Schubert}{luh}
\icmlauthor{Bodo Rosenhahn}{luh}
\end{icmlauthorlist}

\icmlaffiliation{luh}{Department for Information Processing, Leibniz University Hannover, Germany}

\icmlcorrespondingauthor{Yannik Mahlau}{mahlau@tnt.uni-hannover.de}

\icmlkeywords{Machine Learning, ICML}

\vskip 0.3in
]



\printAffiliationsAndNotice{}  

\begin{abstract}
The combination of self-play and planning has achieved great successes in sequential games, for instance in Chess and Go.
However, adapting algorithms such as AlphaZero to simultaneous games poses a new challenge.
In these games, missing information about concurrent actions of other agents is a limiting factor as they may select different Nash equilibria or do not play optimally at all.
Thus, it is vital to model the behavior of the other agents when interacting with them in simultaneous games.
To this end, we propose \textbf{Albatross}: \textbf{A}lphaZero for \textbf{L}earning \textbf{B}ounded-rational \textbf{A}gents and \textbf{T}emperature-based \textbf{R}esponse \textbf{O}ptimization using \textbf{S}imulated \textbf{S}elf-play.
\mbox{Albatross} learns to play the novel equilibrium concept of a \emph{Smooth Best Response Logit Equilibrium (SBRLE)}, which enables cooperation and competition with agents of any playing strength.
We perform an extensive evaluation of Albatross on a set of cooperative and competitive simultaneous perfect-information games.
In contrast to \mbox{AlphaZero}, Albatross is able to exploit weak agents in the competitive game of Battlesnake.
Additionally, it yields an improvement of 37.6\% compared to previous state of the art in the cooperative Overcooked benchmark.

\end{abstract}

\section{Introduction}

\begin{figure}[t]
\vskip 0.2in
\begin{center}
\centerline{\includegraphics[width=\columnwidth]{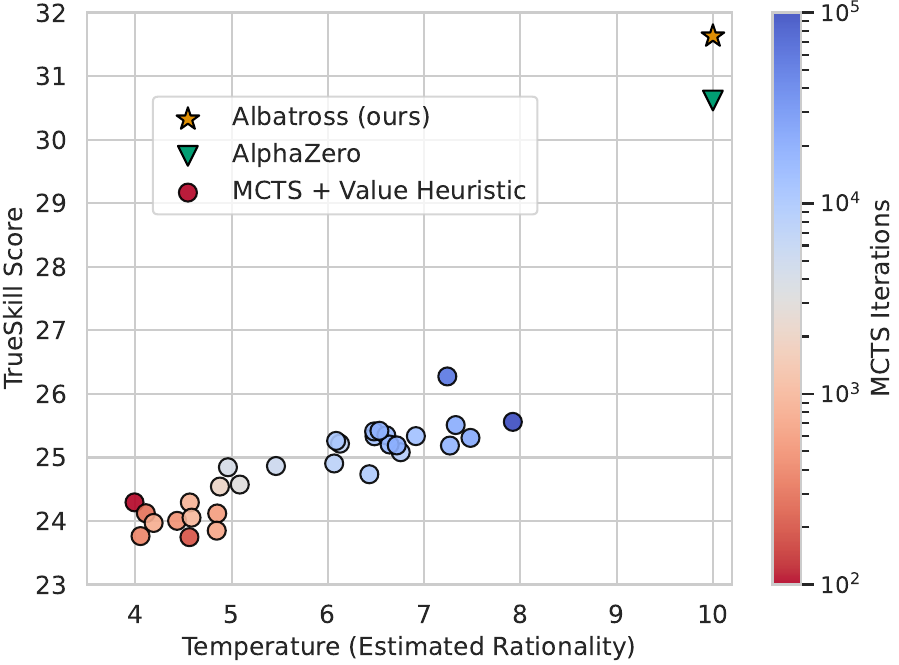}}
\caption{
TrueSkill scores \cite{trueskill} of a tournament consisting of an Albatross agent, Monte-Carlo-Tree-Search (MCTS) baseline agents and an AlphaZero baseline.
Each game takes place in a free for all setting of four agents in the stochastic simultaneous game of Battlesnake.
Albatross estimates the temperature, i.e. rationality, of the baseline agents online using only data from the current game.
A temperature of 0 corresponds to random play and 10 to optimal play if all other agents play optimally as well.
AlphaZero achieves optimal play given that all other agents play optimally (temperature of 10), but fails to adapt to subrational agents.
In contrast, Albatross is able to respond optimally against any combination of weak and strong agents due to its rationality estimation, resulting in a higher TrueSkill tournament score.
}
\label{fig:4nd7_tnmt}
\end{center}
\vskip -0.2in
\end{figure}

Games have been played by humans for centuries, some of the earliest dating back more than 4000 years \cite{senet_game}.
They enable us to measure skill, either in cooperation or competition with other agents.
When facing unseen agents, one has to adapt to their playing style, which is called zero-shot interaction \cite{other_play}.
In sequential games, this only entails finding the best response after observing the action of the other agent.
However, in simultaneous games, where all agents make their move at the same time, it is also necessary predict the next concurrent move.
Therefore, opponent modelling is an important factor for zero-shot interactions in simultaneous games.
Most existing methods learn a policy that performs well with or against as many agents as possible \cite{NEURIPS2021_797134c3, lupu21a, mep, DocKru2017a}.
However, a more scalable approach is the prediction of the other agent's behavior, based on the immediate interactions within a single episode.
Lou et al. \yrcite{pecan} apply this idea and classify other agents into groups of low, medium or high skill.
In contrast, we model their strength as a continuous scalar temperature parameter.
The continuous parametrization is able to accurately model any playing strength.
Additionally, its sparsity enables efficient maximum likelihood estimation (MLE) using only interactions from the current episode.
For this continuous opponent model, we develop the novel concept of a Smooth Best Response Logit equilibrium (SBRLE).
Since the SBRLE is intractable in all but very small games, our method \textbf{Albatross} (\textbf{A}lphaZero for \textbf{L}earning \textbf{B}ounded-rational \textbf{A}gents and \textbf{T}emperature-based \textbf{R}esponse \textbf{O}ptimization using \textbf{S}imulated \textbf{S}elf-play) learns the equilibrium through a combination of self-play and planning akin to AlphaZero \cite{silver_alphazero}.

The adaptive behavior of Albatross allows cooperation with unknown agents, which might not act optimally.
To this end, we evaluate our method in the game of Overcooked \cite{overcooked}, where Albatross is paired with a human behavior cloning agent.
Encoding the estimated rationality via a scalar enables us to dissect the different strategies of Albatross when acting with agents of different strengths.
In competitive games, the opponent modelling allows Albatross to exploit weaker agents as well as compete with strong agents.
In \cref{fig:4nd7_tnmt}, we demonstrate these capabilities in a tournament of Battlesnake \cite{Chung2020BattlesnakeCA}, for which we publish an efficient implementation.
Battlesnake is an extension of the well-studied game Tron \cite{samothrakis_uct_2010, tron_game_length, lanctot13tron, knegt_opponent_2018, rl_tron}, offering additional stochastic environment dynamics for either two or four agents.
In summary, our \textbf{contributions} are the following:
\begin{itemize}
    \item We introduce the novel equilibrium concept of a Smooth Best Response Logit equilibrium for modeling asymmetric bounded rationality with a single rational and an arbitrary number of weak agents.
    \item Our method Albatross learns to approximate an SBRLE using a mixture of self-play and planning, adapting AlphaZero in a principled way based on game theory to zero-shot interactions of simultaneous games.
    \item We empirically evaluate Albatross in several cooperative and competitive games and perform an extensive hyperparameter analysis. Additionally, we qualitatively demonstrate the adaptive behavior of Albatross.
    \item To support reproducibility, all of our code as well as the trained models are open source\footnote{\url{https://github.com/ymahlau/albatross}}.
\end{itemize}









\section{Related Work}
AlphaGo \cite{silver_mastering_2016} was one of the first successful applications of deep reinforcement learning to a complex multi-agent game, namely Go.
It used an actor-critic neural network, which was pre-trained on human games and fine-tuned using self-play.
The pre-training phase was eliminated in its successor AlphaGoZero \cite{silver_mastering_2016} to avoid learning a suboptimal bias from human play.
Independently of AlphaGoZero, a similar system named Expert Iteration was developed for the game of Hex \cite{NIPS2017_d8e1344e}.
Both AlphaGo and AlphaGoZero exploited knowledge about symmetries of the games, which prevented its application to other games.
AlphaZero \cite{silver_alphazero} excludes all game-specific knowledge and therefore is applicable to other games, e.g. Go, Chess and Shogi.
MuZero \cite{schrittwieser_mastering_2020} did not only learn perfect play, but also the environment dynamics through self-play, which makes it suitable for environments with unknown dynamics.
However, for that purpose, MuZero requires much larger compute resources than AlphaZero.


While the mentioned methods are state of the art in sequential competitive games, they do not necessarily work well in zero-shot coordination tasks.
That is, because they do not perform well on all game situations, but rather only on game situations that would arise through self-play \cite{az_attack}.
To achieve good coordination capabilities with a lot of different teammates, Fictitious Co-Play \cite{NEURIPS2021_797134c3} trains against past training checkpoints taken at different time points from multiple self-play agents.
In contrast, Trajectory Diversity \cite{lupu21a} also aims to train a diverse population of agents by regularizing the loss function with Jensen-Shannon Divergence \cite{lin1991divergence}.
Similarly, Maximum Entropy Population-Based training (MEP) \cite{mep} uses population entropy as regularization.
Lou et al. \yrcite{pecan} extend MEP in their framework called Policy Ensemble for Context-Aware Zero-Shot Human-AI Coordination (PECAN) by using a randomly weighted policy ensemble given three groups of agents, ranked based on their self-play performance.
Hidden-Utility Self-Play (HSP) \cite{hsp} trains with agents maximizing a hidden reward function and achieves population diversity by filtering with an event-based metric.
All of the mentioned techniques rely on the idea of training a diverse population to learn to cooperate with as many agents as possible.
However, we believe that explicit opponent modelling based on game theory can produce better cooperation capabilities.

Substantial research regarding opponent modeling is conducted in the field of security games and adversarial domains \cite{Nashed2022ASO}.
Some approaches are based on Variational Autoencoders \cite{vae_opp_model}, Switching tables \cite{Everett2018LearningAN}, model-based approaches using neural networks \cite{knegt_opponent_2018}, recursive reasoning \cite{yu2022modelbased} or expert imitation \cite{DocDoe2017}.
Applications of opponent modeling include real-world situations like wildlife protection against poachers \cite{paws} or protection of security resources \cite{Yang2012ComputingOS}.
However, these methods fail to accurately model behavior in zero-shot interactions, because they require a lot of data to train the opponent model.

\section{Game-Theoretic Background}

\newcommand{\ponecolor}{orange!50}
\newcommand{\ptwocolor}{blue!50}

\begin{figure}[t]
\vskip 0.2in
\begin{center}
\centering
\subfigure{
    \resizebox{0.41\columnwidth}{!}{
        \begin{tabular}{|c|c|c|c|c|c|}
            \hline
             & \multicolumn{5}{c|}{\cellcolor{\ptwocolor}Player 2} \\
            \hline
            \cellcolor{\ponecolor}& Action & \multicolumn{2}{c|}{\cellcolor{\ptwocolor}c} & \multicolumn{2}{c|}{\cellcolor{\ptwocolor}d} \\
            \cline{2-6}
            \cellcolor{\ponecolor}&\cellcolor{\ponecolor}a & \cellcolor{\ponecolor}-4 & \cellcolor{\ptwocolor}4 & \cellcolor{\ponecolor}-7 & \cellcolor{\ptwocolor}7 \\
            \cline{2-6}
            \multirow{-3}{*}{\cellcolor{\ponecolor}\rotatebox[origin=c]{90}{Player 1}}&\cellcolor{\ponecolor}b & \cellcolor{\ponecolor}-6 & \cellcolor{\ptwocolor}6 & \cellcolor{\ponecolor}2 & \cellcolor{\ptwocolor}-2 \\
            \cline{2-5}
            \hline
        \end{tabular}
    }
} \\
\vskip -0.05in
\subfigure{
    \includegraphics[width=0.7\columnwidth]{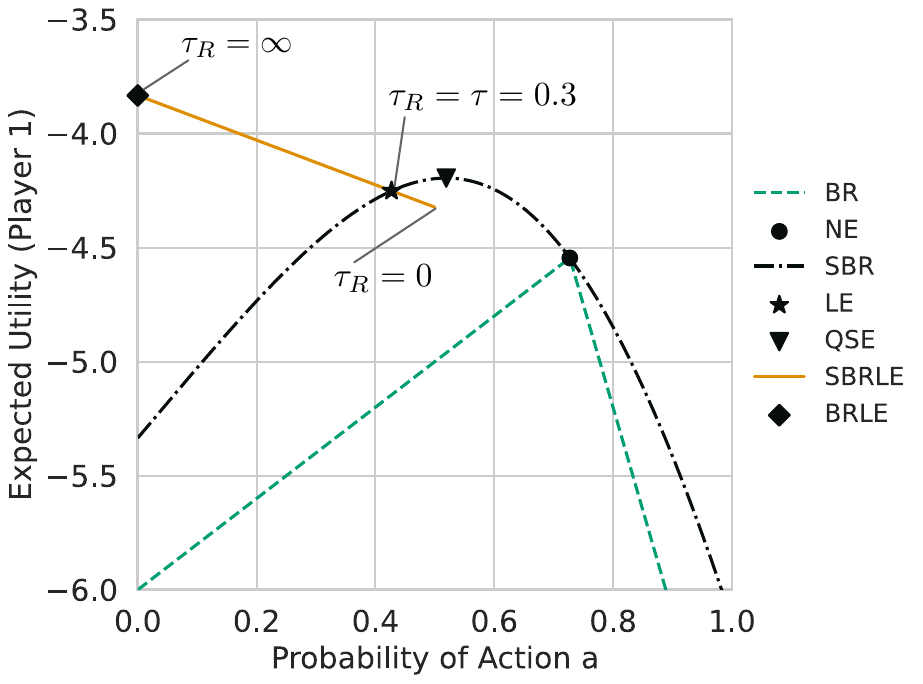}
}
\caption{
Visualization of equilibria in a zero-sum NFG.
Assuming that player 2 plays a best response (BR) to the policy of player 1, the expected utility is lower than under the assumption of a subrational smooth best response $\mathit{SBR}(\cdot, \tau = 0.3)$.
The dotted gray lines denote the expected utility of playing actions a or b against an SBR respectively.
The NE maximizes the expected utility assuming player 2 plays a BR, while the QSE maximizes under the assumption of an SBR.
The SBRLE starts with response temperature $\tau_R$ at a uniform distribution over actions a and b, and ends with $\tau_R \rightarrow \infty$ at the BRLE.
The SBRLE is equal to the Logit equilibrium (LE) if the response temperature $\tau_R$ is equal to the temperature $\tau$ of the LE.
}
\label{fig:equilibria}
\end{center}
\vskip -0.2in
\end{figure}

For the game theoretic background, we adapt the notation of Leyton-Brown and Shoham \yrcite{leyton_brown_essentials_2008} as well as Milec et al. \yrcite{milec_complexity_2021}.
A Normal-form game (NFG) is a tuple $(n, A, u)$, where $n \in \mathbb{N}$ denotes the number of agents, $A = A_1 \times \cdots \times A_n$ the joint action set and $u = (u_1, \ldots, u_n)$ the utility functions.
We call $a = (a_1, \ldots, a_n) \in A$ a joint action and $a_i$ is the action of agent $i$.
Agents are indexed by $i \in \{1, \ldots, n\}$ and $-i$ denotes the set of all agents except $i$.
We abuse notation and use $-i$ as notation for the other agent in games of two players and the set of all agents except $i$ otherwise.
A game of two agents is called zero-sum, if the utility function has the property $u_i(a) = -u_{-i}(a) \, \forall a \in A$.
Similarly, a game is fully cooperative if all agents have the same utility function.
The set of policies $\Delta_i$ (also called mixed strategies) is the set of probability distributions over the action space $A_i$.
A joint policy (also called strategy profile) is a tuple of policies $\pi = (\pi_1, \ldots, \pi_N), \pi_i \in \Delta_i$.
Its utility for agent $i$ is defined as the expected outcome $u_i(\pi) = \sum_{a \in A} u_i(a) \prod_{j=1}^{n} \pi_j(a_j)$.
The best response (BR) of agent $i$ to the policies of other agents $\pi_{-i}$ is a policy $\pi_i \in \mathit{BR}(\pi_{-i})$, where the best response function is defined as the set of policies achieving maximum utility: $\mathit{BR}(\pi_{-i}) = \{\pi^{*}_i \, | \, u_i(\pi^{*}_{i}, \pi_{-i}) \geq u_i(\pi_i, \pi_{-i})\, \forall \pi_i \in \Delta_i \}.$
If all agents play a BR, then their joint policy is called a Nash equilibrium \cite{nash_thesis}.

Nash equilibria inhibit the assumption that all players act rational, which is not realistic in most real-world scenarios.
Following Hofbauer and Sandholm \yrcite{global_convergence}, one can incorporate an error probability by transforming the utility functions $\tilde{u}_i(\pi) = u_i(\pi) + \frac{1}{\tau} \psi(\pi_i)$ using Shannon Entropy as a concave smoothing function $\psi(\pi_i) = \sum_{a_i \in A_i} \pi_i(a_i) \log(\pi_i(a_i))$.
The temperature parameter $\tau$ controls the inverse strength of regularization, i.e. the bounded rationality.
Agents can maximize the transformed utilities by playing a smooth best response (SBR), which is a softmax over the original utilities $\mathit{SBR}(\pi_{-i}, \tau) \propto \exp (\tau \, u_i(\,\cdot\,, \pi_{-i}))$.
To exemplify the effect of the temperature, a temperature of $\tau=0$ corresponds to uniform random play and the SBR approaches a BR with $\tau \rightarrow \infty$.
Akin to BR and Nash equilibria, if all agents play an SBR, then their joint policy is called a Logit equilibrium (LE), which is a subset of the more general Quantal Response equilibrium \cite{qre}.
In some literature, the Logit equilibrium is also called Stochastic equilibrium \cite{MAHER1998539, akamatsu96}.
We compute the LE by Stochastic Fictitious Play \cite{global_convergence}.
In detail, each agent starts from a random policy and iteratively computes the SBR to the other policies.
This process is repeated with a step size annealing according to Nagurney and Zhang \yrcite{nagurney} until a fixed point is reached (see \cref{subsec:le_details} for details).

In the LE, both players play with the same rationality.
But, it is also useful to model asymmetric rationality, i.e. one rational agent playing with one or multiple other imperfect agents.
An equilibrium that models such asymmetry is the Quantal Stackelberg Equilibrium (QSE) \cite{milec_complexity_2021}, which is restricted to games of two agents.
One rational agent maximizes their reward given that the other weak agent plays an SBR: $\pi_i = \argmax u_i(\pi_i, \mathit{SBR}(\pi_i, \tau))$.
However, the weak agent can only play an SBR if they know the optimal policy of the rational agent.
This assumption is true in repeated interactions as one can observe the policy of the other agent, but violated in zero-shot interactions.







\begin{figure*}[!ht]
\vskip 0.2in
\begin{center}
    \subfigure[
    The \textbf{proxy} model evaluates the leaf nodes of its fixed-depth search tree using a learned value function $v_{\theta_{P}}$.
    It approximates a Logit equilibrium (LE) given the temperature $\tau$ with its policy $\pi_{\theta_{P}}$.
    The backup operator is a solver for the LE of a Normal-form game constructed from sibling nodes.
    The temperature $\tau$ is sampled uniformly transformed by a cosine (\cref{subsec:temp_dist}). 
    ]{%
        \includegraphics[width=0.45\textwidth]{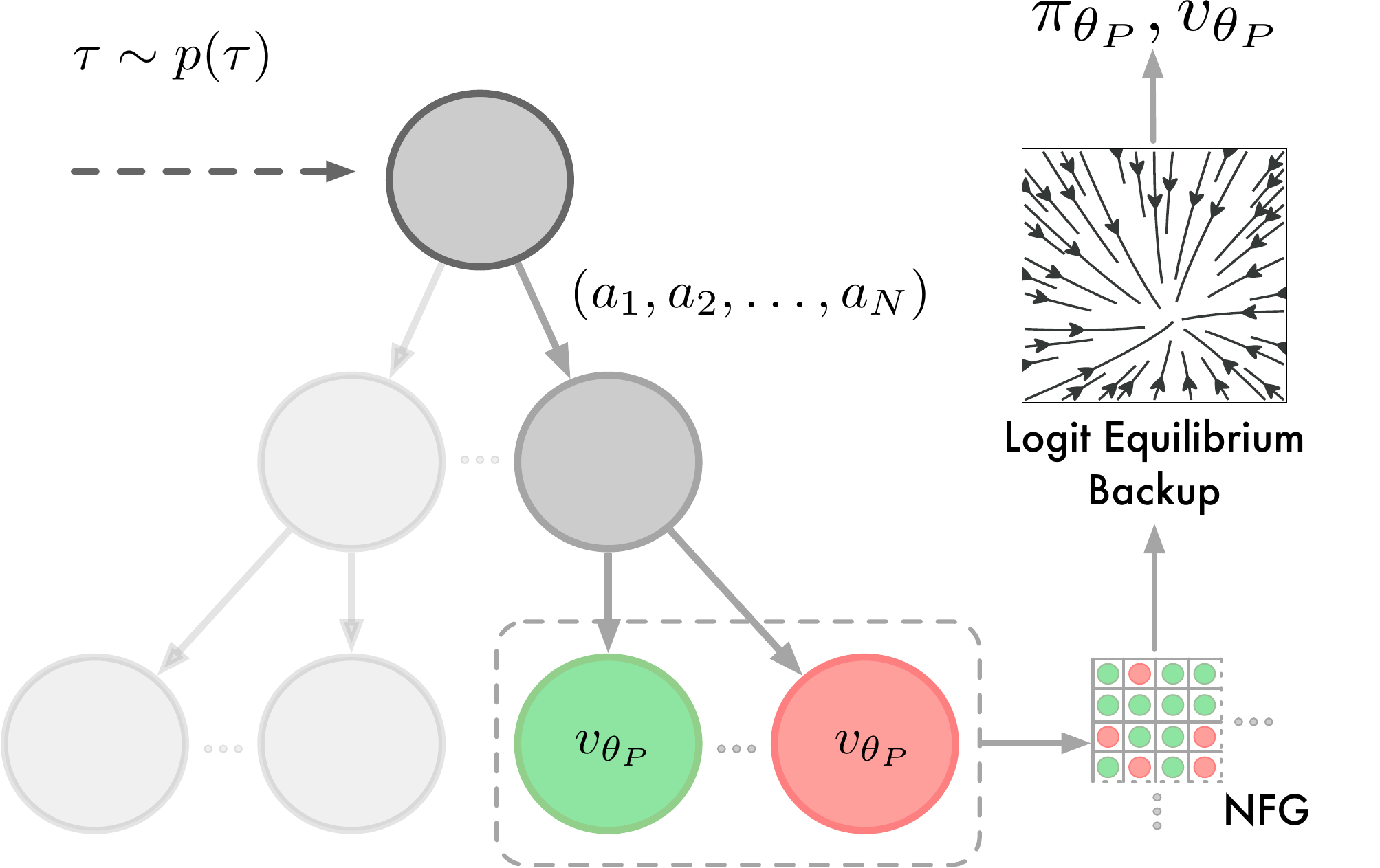}
    }
    \hspace{10mm}
    \subfigure[
    The \textbf{response} model evaluates the leaf nodes of its fixed-depth search tree using a learned value function $v_{\theta_{R}}$.
    It computes the action utilities $u_i(\cdot, \pi_{-i})$ induced by the trained proxy policy $\pi_{\theta_{P}}$ given temperatures $\tau_{1},\tau_{2},\dots,\tau_{N}$.
    The policy $\pi_{\theta_{R}}$ approximates a Smooth Best Response (SBR) with fixed response temperature $\tau_R$ to the action utilities.
    ]{%
        \includegraphics[width=0.45\textwidth]{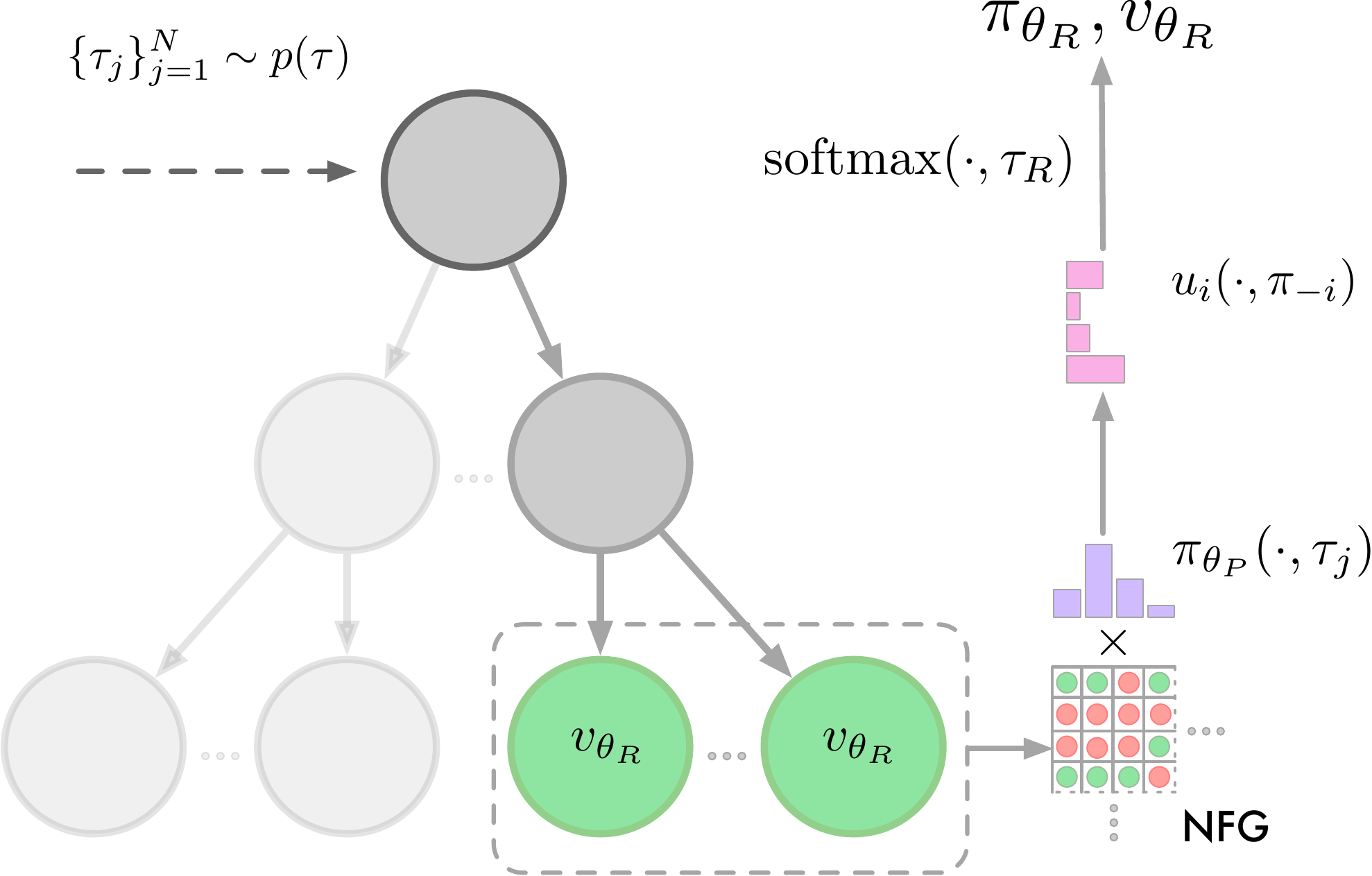}
    }
    \caption{{\label{fig:architecture}Training architecture of the proxy and response models of Albatross. Both models are trained via planning-augmented self-play using fixed-depth search and are conditioned on one (proxy model) or multiple (response model) temperatures $\tau$ that are drawn from a distribution $p(\tau)$. The response model uses the trained proxy model to compute the Smooth Best Response Logit Equilibrium (SBRLE).}}
\end{center}
\vskip -0.2in
\end{figure*}

\section{Method}
In contrast to existing methods, we do not use an ensemble to train with diverse agents, but rather focus on training against agents of different playing strengths.
This strength is parameterized by a scalar temperature $\tau$.
We consider the following setup: a perfectly rational agent $i$ plays a game with one or multiple weak agents, who only inherit a bounded rationality. 
Therefore, each of the weak agents $j \in -i$ is modeled by a scalar $\tau_j$.

\subsection{Smooth Best Response Logit Equilibrium (SBRLE)}
Existing concepts of asymmetric equilibria, i.e. QNE and QSE, are a model for the interaction of a single weak agent and a perfectly rational agent.
In detail, the weak agent plays an SBR to the respective optimal policy of the rational agent.
However, this implies that the optimal policy of the rational agent is known to the weak agent.
In other words, their bounded rationality only prevents them from computing their own optimal strategy, but not of the rational player.
This assumption is valid in repeated interactions where the other agents policy can simply be observed, but violated in our setup of zero-shot interactions.

To model the symmetric bounded rationality in a setup with multiple weak agents, we utilize Logit equilibria (LE).
The LE does not need to be the same for all bounded rational agents, i.e. they could play according to different LE with different temperatures.
Given the weak agents play according LEs, we can compute a (smooth) best response to their policies.
We call the joint policy of LE and the rational agents response a \emph{Smooth Best Response Logit Equilibrium (SBRLE)}:
$(SBR(\pi_{-i}, \tau_R), \pi_{-i})$ is a SBLRE, iff $\forall\, j \in -i\,$ $\pi_{j}$ is a LE policy with temperature $\tau_{j}$, where $\tau_R$ is the response temperature of the rational player.
If $\tau_R = \infty$, then the rational player plays a best response to the weak agents and we call the corresponding joint policy a \emph{Best Response Logit equilibrium (BRLE)}.

To highlight the difference between QSE and SBRLE: In the QSE, the weak agent plays an SBR to the optimal policy of the rational agent, but in the SBRLE to the LE policy, if the rational agent were to play according to LE.
The different equilibria are visualized in \cref{fig:equilibria} for a zero-sum NFG.
Even though the NE achieves maximal utility against perfectly rational opponents, it does not gain any utility from playing a weak opponent (in the example, NE lies on the intersection of BR and SBR).
Depending on the NFG and rationality of agents, the quantal equilibria yield higher expected utility.
In the SBRLE, the rational agent can freely choose the response temperature $\tau_R$.
A high response temperature directly corresponds to higher utility, but a response temperature of $\tau_R = \infty$ may impair the training process, since a BR is not necessarily unique.
Consequently, the SBRLE is preferable to BRLE as it yields a unique learning target.
For a detailed discussion on the effect of a unique learning target, see Appendix \ref{subsec:nash_instability}.

\subsection{Albatross}


The computation of an SBRLE requires a full traversal of the game tree for every agent, initially to compute all Logit equilibria and afterwards the BR of the rational player.
Additionally, a complete re-computation is required for every new temperature estimate of the weak agents.
Since this is infeasible for most games, we present Albatross, which approximates an SBRLE using neural networks.
The training of Albatross consists of two stages.
Firstly, a proxy model approximates Logit equilibria at different temperatures.
Then, a response model is trained to exploit the proxy model.
Both models are trained using an adaptation of AlphaZero.
We briefly outline the original AlphaZero algorithm, but refer to \cref{sec:az_alb_training} and the original paper \cite{silver_alphazero, silver_mastering_2017} for a detailed explanation.
Using AlphaZero, an actor-critic network predicting policy $\pi_{\theta}(o_i)$ and value $v_{\theta}(o_i)$ of agent $i$ is trained, where $o_i$ is an observation of the current game state from the perspective of agent $i$.
During training, Monte-Carlo tree search (MCTS) \cite{metropolis_monte_1949} is used as policy improvement operator, i.e. the policy of the root node is used as target for gradient updates.
Targets for the gradient updates of the value function are the cumulative rewards of an episode, i.e. the result of a Monte-Carlo policy evaluation.
During MCTS, the value function is used as an evaluation heuristic in the leaf nodes and the policy as guidance for exploration.

Similar to the original AlphaZero algorithm, the proxy model predicts policy $\pi_{\theta_{P}}(o_i, \tau)$ and value $v_{\theta_{P}}(o_i, \tau)$ of agent $i$ based on observation $o_i$, but is also conditioned on the temperature $\tau$.
This conditioning allows the prediction of policy and value for LE of any temperature in the training distribution.
During training, the temperature $\tau$ is sampled at the beginning of an episode from a distribution $p(\tau)$ in the interval $[\tau_{\mathit{min}}, \tau_{\mathit{max}}]$ (see \cref{subsec:temp_dist} for details regarding the training distribution).
For the policy- and value improvement operator, we utilize fixed depth search instead of MCTS.
In detail, we start at the leaf nodes and construct an NFG from sibling nodes.
Then, we use a solver to compute the Logit equilibrium of the NFG and propagate the expected utility of the equilibrium to the parent node.
This process is repeated until the LE at the root node is known.
Then, the policy and value at the root node are used as targets for the gradient updates.



After the proxy model is trained, we start training of the response model.
In contrast to the single scalar temperature $\tau$ of the proxy model, the response model is conditioned on the temperature of every agent except itself, which we define as $\tau_{-i} =  (\tau_1, \ldots, \tau_{i-1}, \tau_{i+1}, \ldots, \tau_N)$.
Therefore, the response model predicts policy $\pi_{\theta_{R}}(o_i, \tau_{-i})$ and value $v_{\theta_{R}}(o_i, \tau_{-i})$.
Again, we utilize fixed depth search and an adapted backup function for training.
During backup, we approximate the policies $\pi_{-i}$ of other agents with the policy $\pi_{\theta_P}$ of the proxy model.
Given the proxy policies, we compute the smooth best response $SBR(\pi_{-i, \theta_{P}}, \tau_R)$, where the response temperature $\tau_R$ is a fixed hyperparameter.
The policy at the root node as well as its expected utility are used as targets for gradient updates.
The policy $\pi_{\theta_{R}}$ of the response model is used for evaluation.
We visualize the training scheme of proxy and response model in \cref{fig:architecture}.
Detailed pseudocode for training AlphaZero and Albatross can be found in \cref{sec:az_alb_training}.


\subsection{Online Temperature Estimation}
The previous two sections presented the methodology for training a policy conditioned on the temperature of other agents.
At test-time, the rational agent has to estimate the temperature of the weak agents to input these temperatures into the response model.
As a first option, the temperature can originate from insights about the agent.
For example, a uniformly random agent always plays with temperature of $\tau = 0$.
In zero-shot interactions, no previous knowledge about the other agent exists, but it is possible to estimate their temperature online using \emph{Maximum Likelihood Estimation (MLE)}.
We consider the following setup: a rational agent $i$ intents to estimate the temperatures $\tau_j$ of weak agents $j \in -i$.
Given $K$ observations of actions $(a_j^1, \ldots, a_j^K)$ and corresponding optimal policies $(\pi_{-j}^1, \ldots, \pi_{-j}^K)$ of the other agents, one can estimate the temperature $\tau_j$ of weak agent $j$.
The log-likelihood $l(\tau_j)$ of player $j$ exhibiting temperature $\tau_j$ can be computed \cite{Reverdy2015ParameterEI} as
\begin{align*}
    l(\tau_j) &= \sum_{k=1}^K \Big[ \tau_j\, u_j^k(a_j^k, \pi_{-j}^k) \!-\! \ln \!\!\!{\sum_{a_j \in A_j} \!\!\exp\big(\tau_j\, u^k_j(a_j, \pi_{-j}^k})\big) \Big], \\
    \frac{\partial l}{\partial \tau_j} &= \sum_{k=1}^K \Bigg[  u_j^k(a_j^k, \pi_{-j}^k) \\
    &\qquad- \frac{\sum_{a_j \in A_j}  u^k_j(a_j, \pi_{-j}^k) \exp\big(\tau_j\, u^k_j(a_j, \pi_{-j}^k)\big)}{\sum_{a_j \in A_j}  \exp\big(\tau_j\, u^k_j(a_j, \pi_{-j}^k)\big)} \Bigg].
\end{align*}
In order to find the maximum likelihood, one can utilize the gradient of the likelihood function $\frac{\partial l}{\partial \tau_j}$.
The global optimum can be computed using simple line search over the temperature $\tau_j$, because the likelihood function is concave, which we prove in \cref{subsec:concavity_proof}.
Note that globally optimal policies are not well defined if multiple Nash equilibria exist due to the equilibrium selection problem \cite{nash_selection}.
Therefore, we define optimal play in relation to the learned equilibrium of the proxy model.
Specifically, we use the policy $\pi_{\theta_{P}}(\cdot, \tau_{\mathit{max}})$ of the proxy model with the highest temperature $\tau_{\mathit{max}}$.

\begin{figure}[!t]
\vskip 0.2in
\begin{center}
\centerline{\includegraphics[width=0.9\columnwidth]{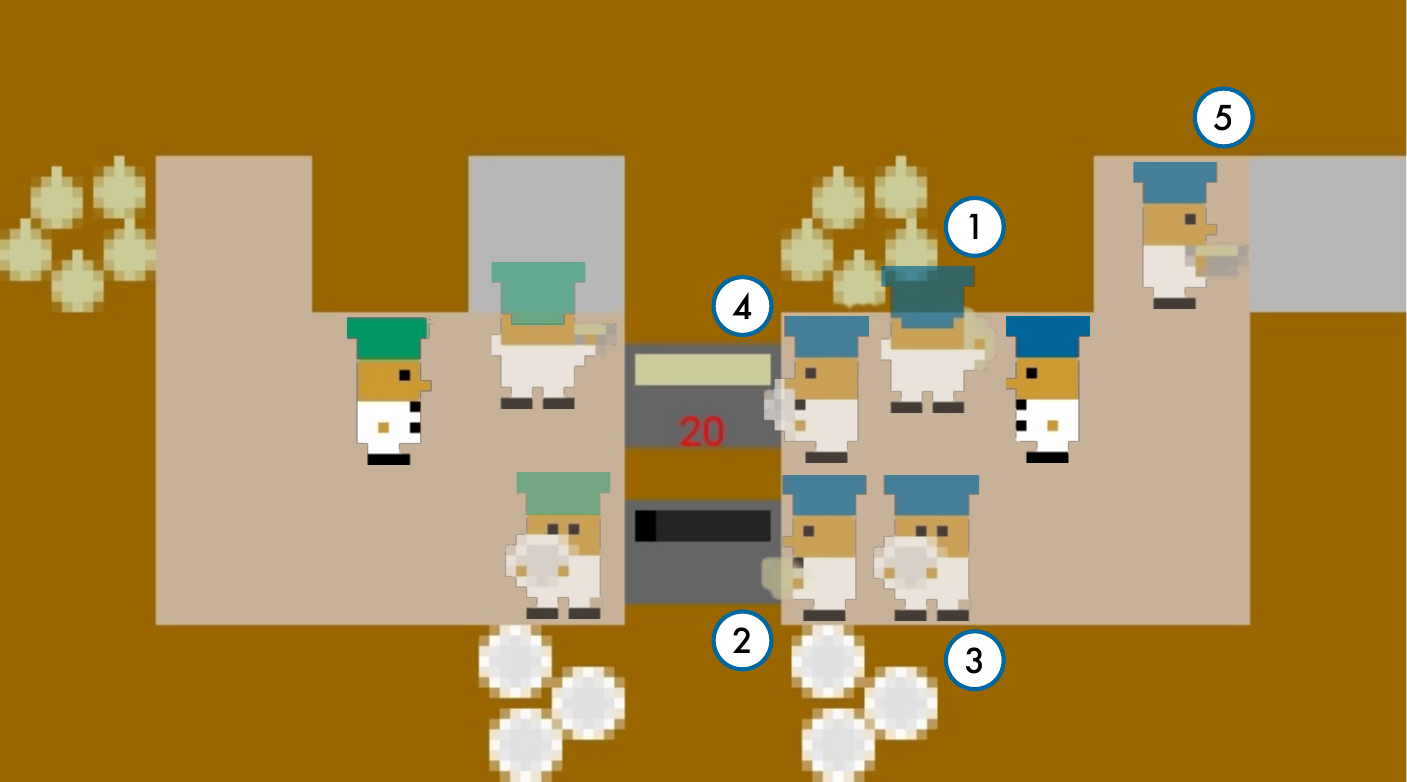}}
\caption{
Albatross agent (\textcolor{blue}{right}) and a possibly weak agent (\textcolor{green}{left}) in the Asymmetric Advantage layout of Overcooked.
If the left agent plays rationally, they should realize that they have a shorter path to their serving location (gray tile).
They would move down, retrieve a dish and deliver the soup.
Having a strong estimation of rationality (e.g. $\tau = 10$), Albatross trusts them to deliver the soup and moves up to pick up an onion \includegraphics[width=8pt]{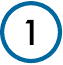} and prepare the next soup in the other pot \includegraphics[width=8pt]{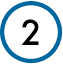}.
If Albatross has an estimation of weak rationality for the left agent (e.g. $\tau = 0$), then Albatross moves down to retrieve a dish \includegraphics[width=8pt]{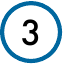}, collects the soup \includegraphics[width=8pt]{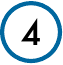} and serves it themselves \includegraphics[width=8pt]{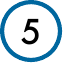}.
}
\label{fig:oc_situation}
\end{center}
\vskip -0.2in
\end{figure}

\section{Empirical Evaluation}
\label{sec:eval}
In our experiments, we want to research the following questions:
(\textbf{Q1}) How does Albatross cooperate with unknown agents and adapt its behavior?
(\textbf{Q2}) What effect has the temperature on its behavior? 
(\textbf{Q3}) Is Albatross able to estimate rationality within a single episode?
(\textbf{Q4}) Can Albatross exploit weak enemies in the competitive domain?
We answer these questions in the cooperative game of Overcooked \cite{overcooked} and the competitive game of Battlesnake \cite{Chung2020BattlesnakeCA}.
In all experiments, we evaluate on five different seeds.

\subsection{Cooperative Overcooked}

For cooperative tasks, we evaluate Albatross in the Overcooked benchmark.
In this game, two agents are placed in a kitchen and tasked with cooking as many soups as possible in a given time frame.
The agents may perform six actions: move up, down, left, right, stay in place or interact with the environment.
To cook a soup, an agent firstly needs to fetch and place an onion in a pot three times.
Then, they have to start the cooking process, wait for 20 steps and retrieve a dish from a dish dispenser.
Lastly, they have to put the soup on the dish and serve the soup at a counter.
There exist five different kitchen layouts (see \cref{sec:overcooked}).

To answer (\textbf{Q1}) qualitatively, we show an example of the adaptive behavior of Albatross in \cref{fig:oc_situation}.
Albatross performs different action sequences depending on the rationality estimation of the other agent.
For a quantitative answer, we simulate the cooperation with humans by evaluating with a behavior cloning agent trained on human data.
We compare Albatross to Proximal Policy-Optimization \cite{Schulman2017ProximalPO}, Population-based Training \cite{Jaderberg2017PopulationBT}, Fictitious Co-Play \cite{NEURIPS2021_797134c3}, Trajectory Diversity \cite{lupu21a}, Maximum Entropy Population-Based training \cite{mep} and PECAN \cite{pecan}.
In \cref{fig:oc_main}, the reward in cooperation with the behavior cloning agent is displayed.
In all layouts, except Forced Coordination, Albatross yields higher cooperation rewards than all baseline methods.
On average, Albatross outperforms PECAN by 37.6\%.
In the Forced Coordination layout, good behavior is difficult to learn as rewards highly depend on the actions of the other agents.

\begin{figure}[!t]
\vskip 0.2in
\begin{center}
    \centerline{\includegraphics[width=\columnwidth]{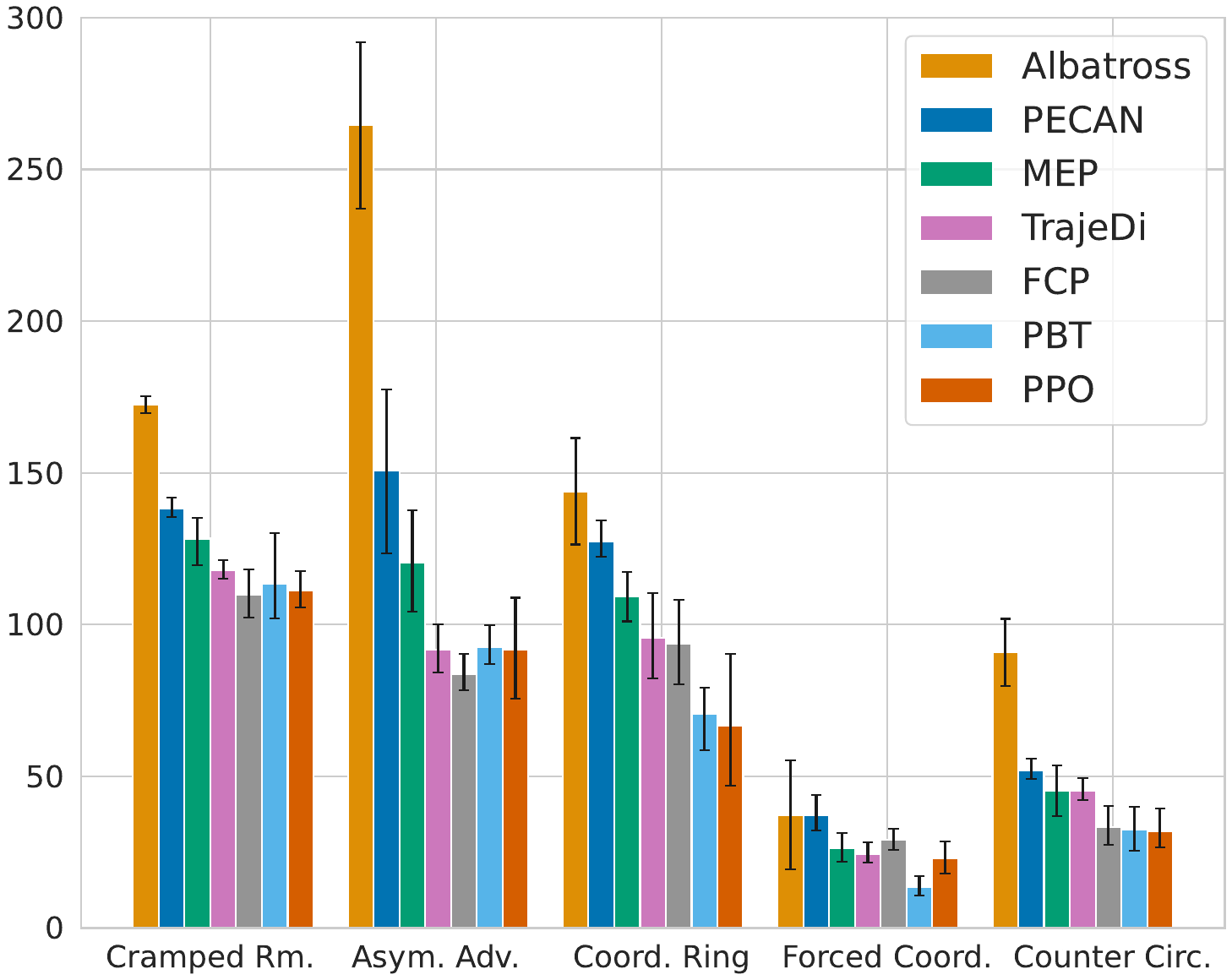}}
    \caption{
    Cooperation performance with a behavior cloning agent trained on a dataset of human play \cite{overcooked} in all five layouts of Overcooked.
    Episodes last 400 time steps and agents receive a common reward of 20 for delivering a soup.
    }
    \label{fig:oc_main}
\end{center}
\vskip -0.2in
\end{figure}

\begin{table*}[t]
\caption{Cooperation performance with scripted agents exhibiting specific behaviors, e.g. always placing onions in the cooking pot. None of the behavior patterns were used during training, such that the learned policy has to adapt in zero-shot coordination.}
\label{tbl:eval_scripted}
\vskip 0.15in
\begin{center}
\begin{small}
\begin{sc}
\begin{tabular}{llrrrrr}
\toprule
Environment & Script                & FCP              & MEP            & TrajDiv       & HSP         & Albatross \\
\midrule
Asym. Adv.  & Onion Placement       & 334.8$\pm$13.0   & 330.5$\pm$14.2& 323.6$\pm$17.0& \textbf{376.8$\pm$9.9}  & 342.5$\pm$11.3 \\
Asym. Adv.   & Onion Plac.+Delivery & 297.7$\pm$3.4    & 298.5$\pm$3.4 & 290.0$\pm$4.7 & 300.1$\pm$4.1  & \textbf{309.2$\pm$11.1} \\
Coord. Ring   & Onion Everywhere    & 109.1$\pm$7.9    & 124.0$\pm$3.4 & 116.9$\pm$8.9 & 121.2$\pm$12.6 & \textbf{143.9$\pm$2.4}  \\
Coord. Ring & Dish Everywhere       & 94.4$\pm$3.8     & 100.2$\pm$5.3 & 107.3$\pm$5.3 & 115.4$\pm$7.4  & \textbf{117.4$\pm$0.7} \\
Counter Circ. & Onion Everywhere    & 63.7$\pm$9.2     & 88.9$\pm$5.1  & 82.0$\pm$12.8 & 107.5$\pm$3.5  & \textbf{119.6$\pm$1.2} \\
Counter Circ. & Dish Everywhere     & 57.0$\pm$5.3     & 53.0$\pm$1.8  & 57.2$\pm$2.2  & \textbf{78.5$\pm$4.1}   & \textbf{78.9$\pm$4.3} \\
\bottomrule
\end{tabular}
\end{sc}
\end{small}
\end{center}
\vskip -0.1in
\end{table*}

Additionally, we evaluate the zero-shot cooperation performance of Albatross with different scripted agents, that exhibit specific behavioral pattern \cite{hsp}.
For example, we pair the learned policy with agents that only place onions in a pot, or place dishes everywhere in the kitchen.
Results of the evaluation are displayed in \cref{tbl:eval_scripted}.
The scripted policies are out-of-distribution regarding the training of Albatross, since they exhibit specific irrational behavior regardless of the reward.
In contrast, Albatross only trained with Logit equilibrium policies, which exhibit an error probability proportional to the expected reward per action.
Nevertheless, in most settings Albatross achieves cooperation performance greater or equal to Hidden-Utility Self-Play (HSP) \cite{hsp}, whose event-based training models such biased policies.
This indicates that Albatross learns best responses which are robust against violations of the modelling assumptions.

\begin{figure}[!b]
\begin{center}
    \subfigure[Proxy Policy Entropy]{%
        \includegraphics[width=0.43\columnwidth]{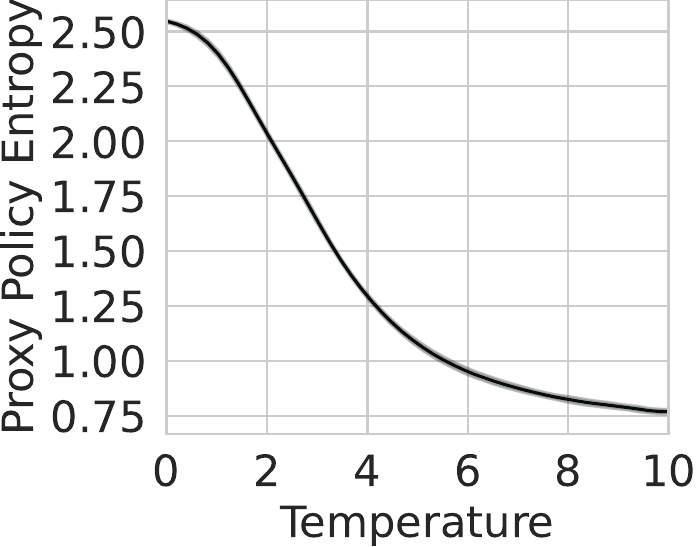}
    }
    \subfigure[MI Proxy \& Response]{%
        \includegraphics[width=0.43\columnwidth]{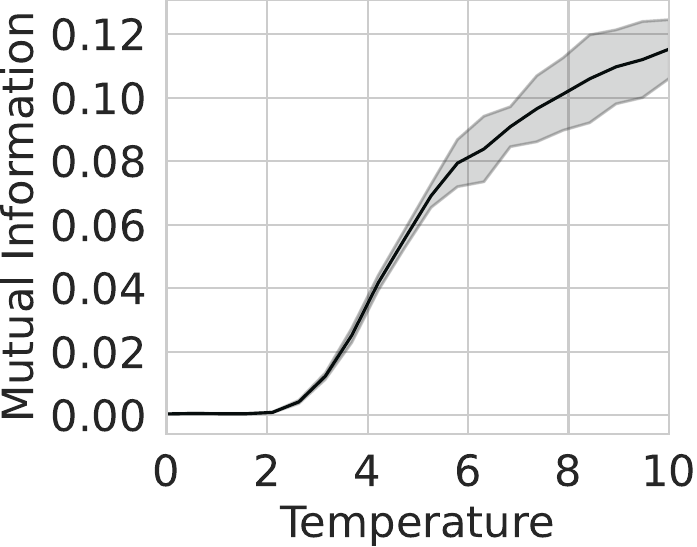}
    }
    \caption{
    Effects of temperature on the behavior of Albatross in the Counter Circuit layout.
    (a) The entropy of the proxy model decreases with rising temperature and (b) the mutual information between response and proxy policy increases.
    }
    \label{fig:entropy_mi}
\end{center}
\end{figure}

\begin{figure*}[t]
\vskip 0.2in
\begin{center}
    \subfigure[Cramped Room]{%
        \includegraphics[height=0.19\textwidth]{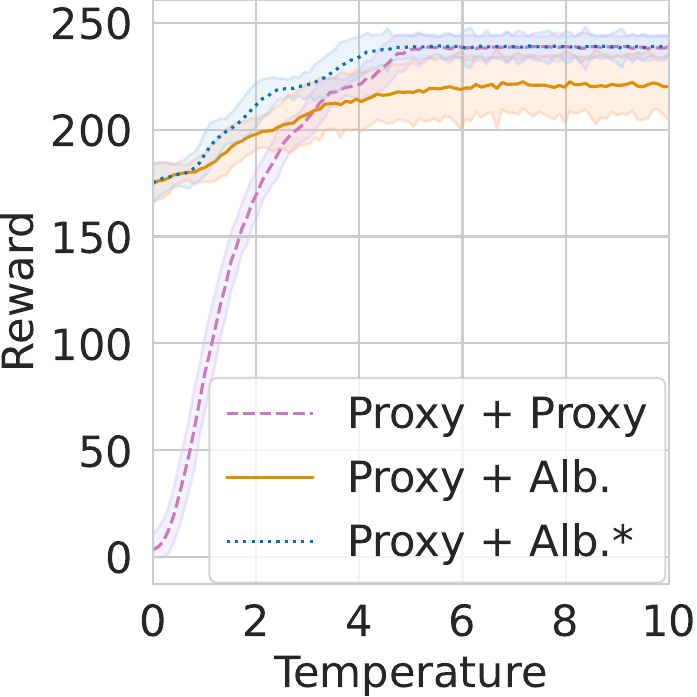}
    }
    \subfigure[Asym. Advantage]{%
        \includegraphics[height=0.19\textwidth]{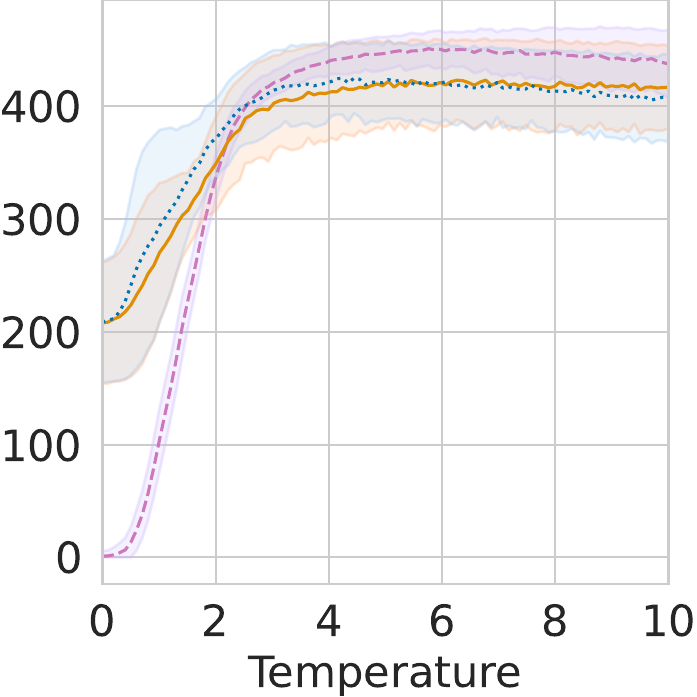}   
    }
    \subfigure[Coord. Ring]{%
        \includegraphics[height=0.19\textwidth]{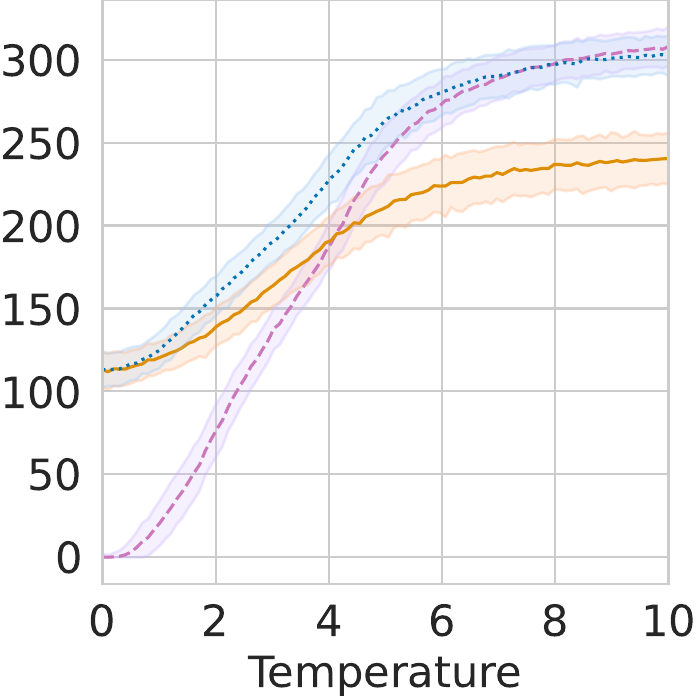}
    }
    \subfigure[Forced Coord.]{%
        \includegraphics[height=0.19\textwidth]{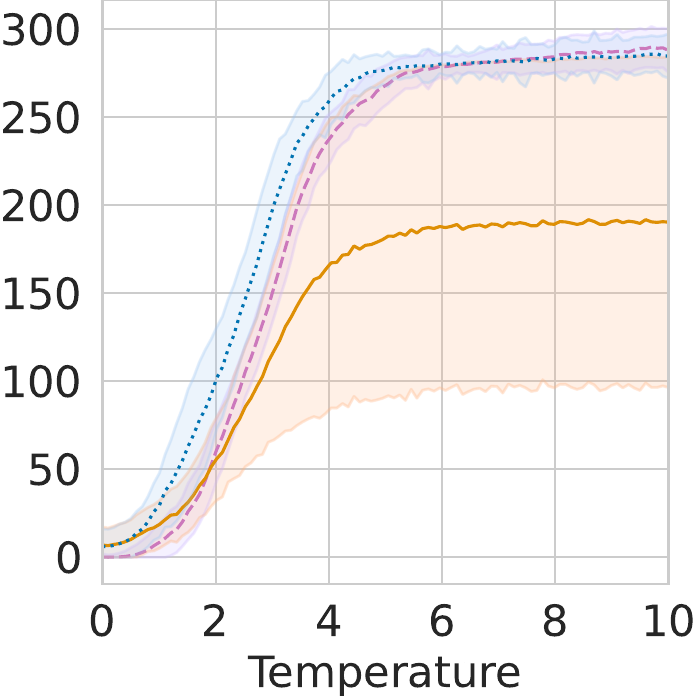}
    }
    \subfigure[Counter Circuit]{%
        \includegraphics[height=0.19\textwidth]{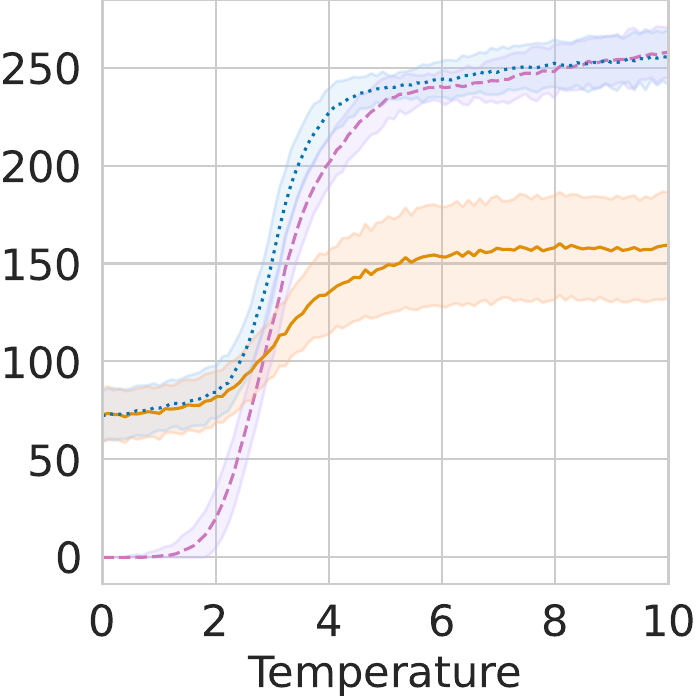}
    }
    \caption{
    Self-play performance of the proxy model as well as cooperation performance between Albatross and the proxy model at different temperatures.
    Albatross* denotes the cooperation capability of Albatross without Maximum Likelihood Estimation, i.e. if the the true fixed temperature of the proxy model is given.
    }
    \label{fig:proxy_temps}
\end{center}
\vskip -0.2in
\end{figure*}

To analyze the effect of temperature on the behavior of Albatross (\textbf{Q2}), we firstly analyze proxy and response model directly.
In \cref{fig:entropy_mi}, we show that the entropy of the proxy policy decreases with rising temperature.
This corresponds to the lower error probability of agents with higher rationality.
To further highlight the adaptive behavior of Albatross, we measure the mutual information \cite{Li2022PMICIM,raceMARL2023} between its response- and proxy policy:
\begin{equation*}
    I(\pi_{\theta_R};\pi_{\theta_P}) = H(\pi_{\theta_P}) - H(\pi_{\theta_P} \mid \pi_{\theta_R}).
    \label{eq:mutual_information}
\end{equation*}
With rising temperatures, we observe a drop in the entropy of the proxy policy with an increase in mutual information, estimated via the conditional action frequencies.
This captures the level of cooperation between both policies as it implies a decrease in the conditional entropy $H(\pi_{\theta_P} \mid \pi_{\theta_R})$.
Consequently, Albatross cooperates with rational agents and acts self-reliant if the other agent does not cooperate.

\begin{figure}[!b]
\begin{center}
    \subfigure[MLE during Episode]{%
        \includegraphics[width=0.43\columnwidth]{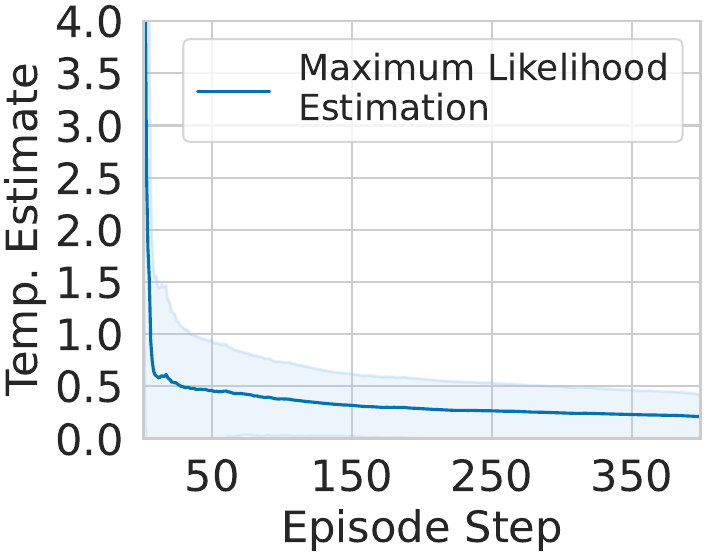}
    }
    \subfigure[Temp. Misspecification]{%
        \includegraphics[width=0.43\columnwidth]{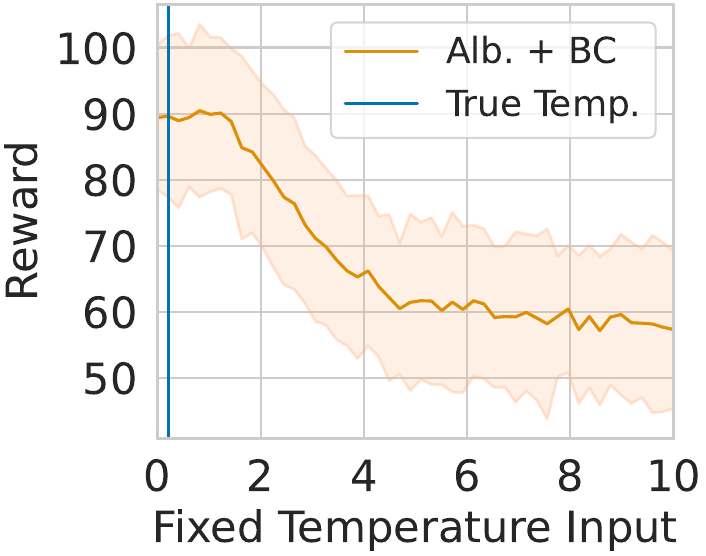}
    }
    \caption{
    Analysis of temperature estimation of Albatross in the Counter Circuit layout.
    (a) During an episode, temperature estimation of MLE quickly converges to the true value. 
    (b) We analyze the the expected reward using a fixed temperature input.
    }
    \label{fig:alb_fix}
\end{center}
\end{figure}

\begin{figure*}[!t]
\vskip 0.2in
\begin{center}
    \subfigure[Tron]{%
        \includegraphics[width=0.65\columnwidth]{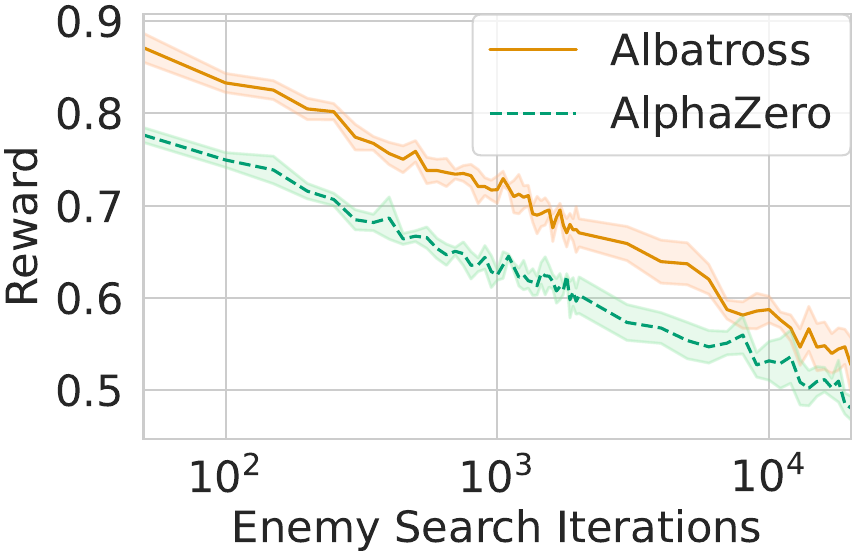}
    }
    \subfigure[Stochastic 2 Player]{%
        \includegraphics[width=0.65\columnwidth]{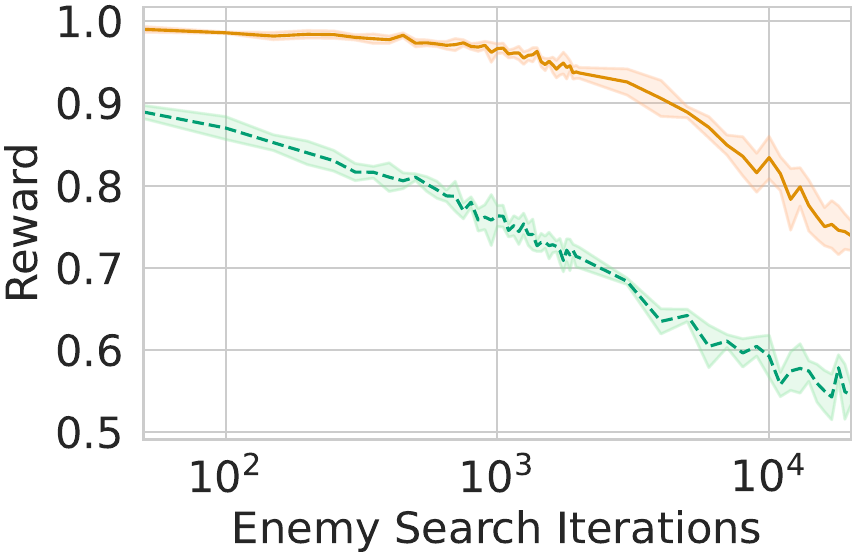}
    }
    \subfigure[Stochastic 4 Player]{%
        \includegraphics[width=0.65\columnwidth]{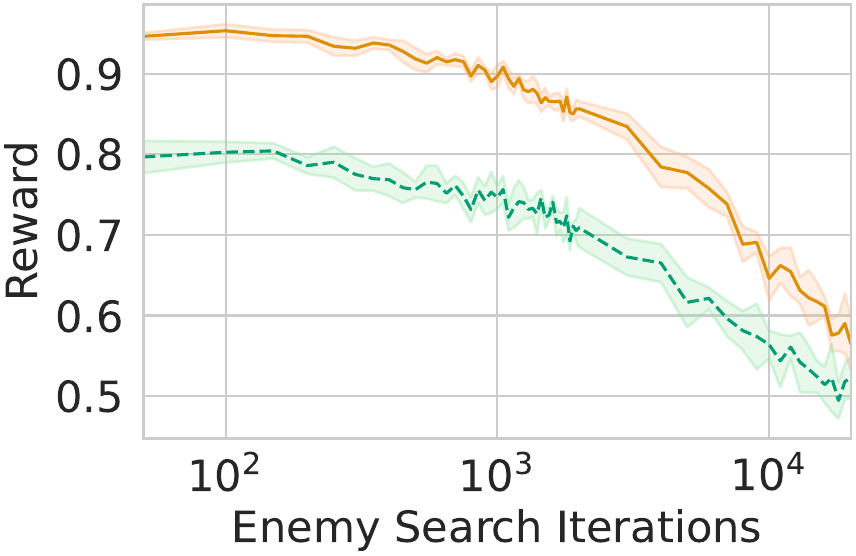}
    }
    \caption{
    Evaluation of AlphaZero and Albatross against baseline MCTS (+ value function heuristic) agents with different budgets of search iterations in the game modes of Battlesnake.
    Albatross is able to exploit weak agents better than AlphaZero.
    }
    \label{fig:az_depth_eval}
\end{center}
\vskip -0.2in
\end{figure*}

Next, we analyze the effect of temperature on the expected reward in \cref{fig:proxy_temps}.
For temperature $\tau = 0$, the proxy model learned a uniformly random policy and does not achieve any reward in self-play.
At higher temperatures, the achieved reward directly corresponds to the temperature.
Analyzing Albatross with the proxy model shows the cooperation with agents of different playing strength.
For example, the reward attainable when cooperating with a uniformly random agent is visible at $\tau=0$.
Given an exact temperature estimation (denoted as Albatross*), we expect the reward of Albatross* with the proxy model to converge to the proxy self-play performance at high temperatures as both play optimally.
We can observe this effect in all layouts except Asymmetric Advantage, where the training of the response model did not perfectly converge.

To show that Albatross is able to estimate rationality within a single episode (\textbf{Q3}), we can observe the difference in reward between Albatross and Albatross* in \cref{fig:proxy_temps}.
At high temperatures, the reward obtained by Albatross is lower than the self-play performance of the proxy model due to the aleatoric uncertainty of the MLE.
The extend of this uncertainty depends on the layout.
In \cref{fig:alb_fix}, we show the result of MLE at different time steps during an episode in cooperation with the human behavior cloning agent.
After only few time steps, MLE converges towards the true temperature estimate.
An evaluation of Albatross with a fixed temperature input reveals that minor estimation errors have little effect on the achieved reward.
However, major overestimation of the other agents rationality leads to a significant drop in performance.


In \cref{fig:alb_fix}, after only few episode steps in an episode, the MLE already estimates a temperature close to the true value.
An evaluation of Albatross with a fixed temperature input reveals that minor estimation errors have little effect on the achieved reward.
However, major overestimation of the other agents rationality leads to a significant drop in performance.

\subsection{Competitive Battlesnake}

To show that Albatross is able to exploit weak agents in the competitive domain (\textbf{Q4}), we evaluate in the game of Battlesnake.
The game takes place on a grid, where agents have to survive as long as possible.
They die, if they collide either with a wall or the body of a snake.
If two agents collide head-to-head, the longer snake survives.
In the stochastic extensions, food spawns randomly on the map which agents have to eat to prevent starving and grow their body.
In contrast, there exist no food in the game of Tron and the body of each snake is elongated in every turn.
All three game modes are visualized in \cref{fig:game_modes}.
Agents receive a reward of $+1$ for winning and $-1$ for dying in the game modes with two agents.
In the mode with four agents, a reward of $+1$ is distributed among the living agents if another agent dies.

We compare Albatross against AlphaZero \cite{silver_alphazero}.
Since AlphaZero was developed for sequential games, we perform an extensive analysis on the adaptation of AlphaZero to simultaneous games (see Appendix \ref{sec:az_sim}).
For a fair comparison, Albatross and AlphaZero are trained on the same time budget and hardware.
Since training Albatross is a two-step process, proxy and response model are trained for half as long as AlphaZero.




\begin{figure}[!b]
\begin{center}
    \subfigure[Tron]{%
        \includegraphics[width=0.31\columnwidth]{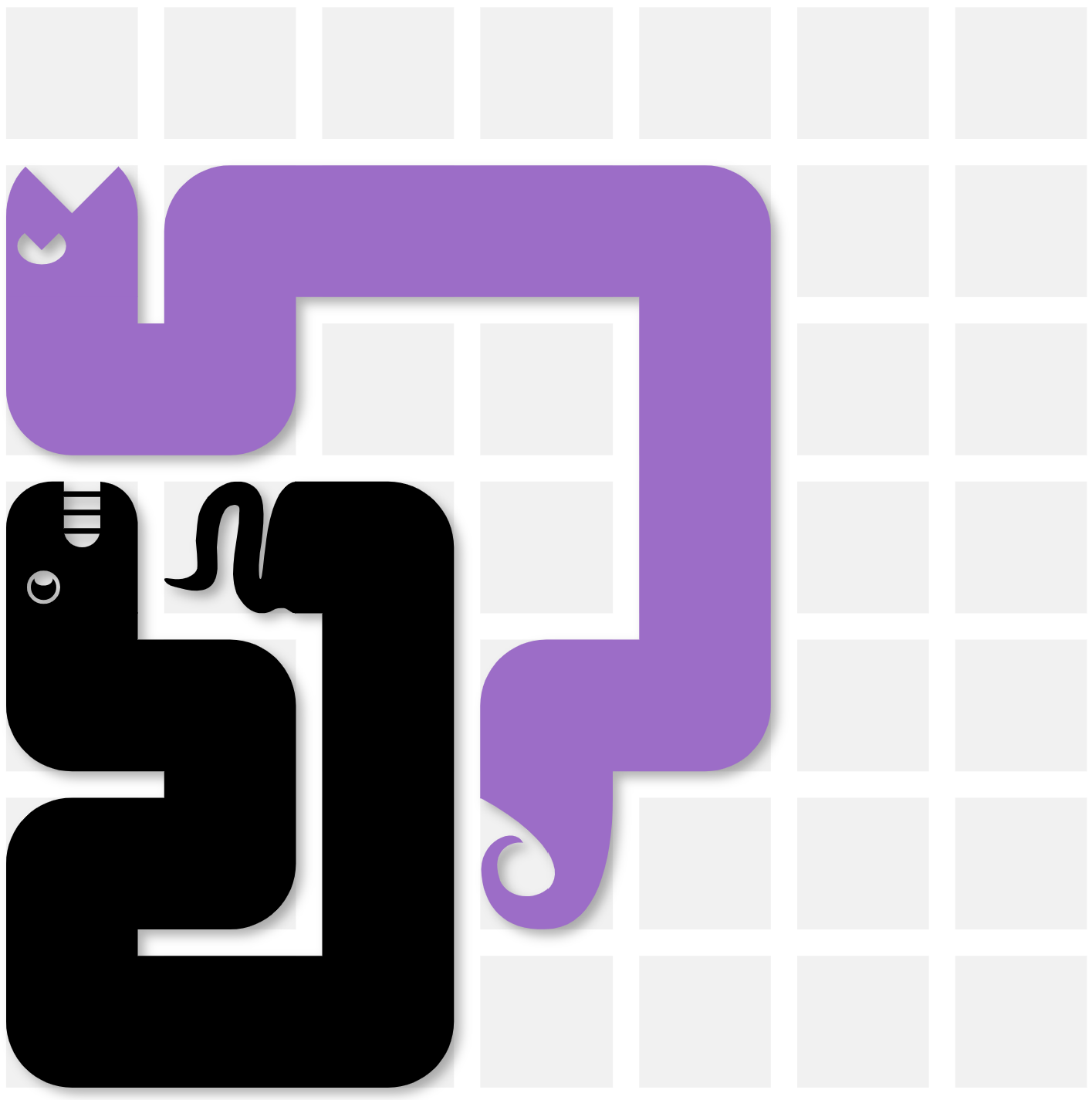}
    }
    \subfigure[Stoch. 2 Player]{%
        \includegraphics[width=0.31\columnwidth]{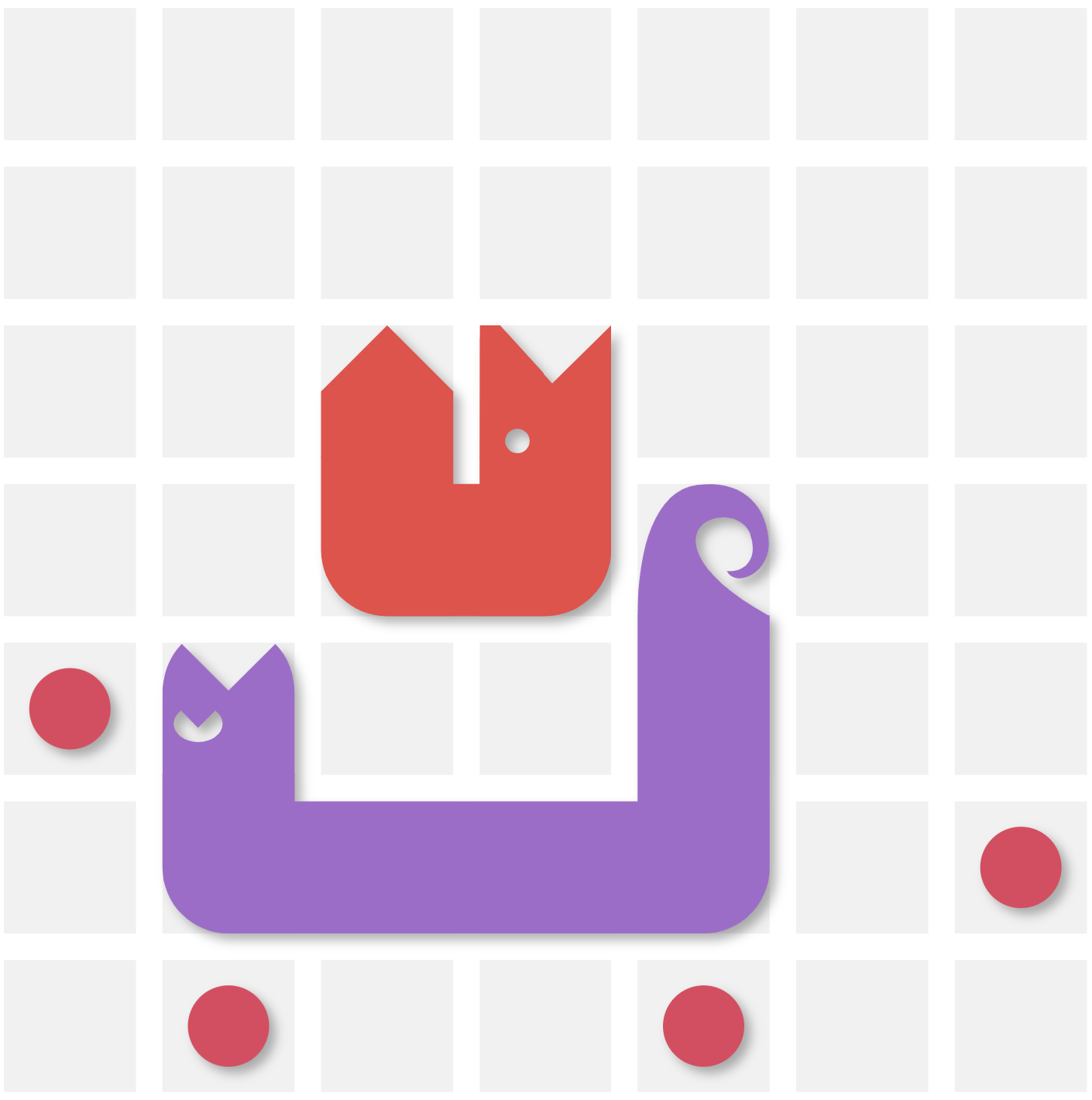}
    }
    \subfigure[Stoch. 4 Player]{%
        \includegraphics[width=0.31\columnwidth]{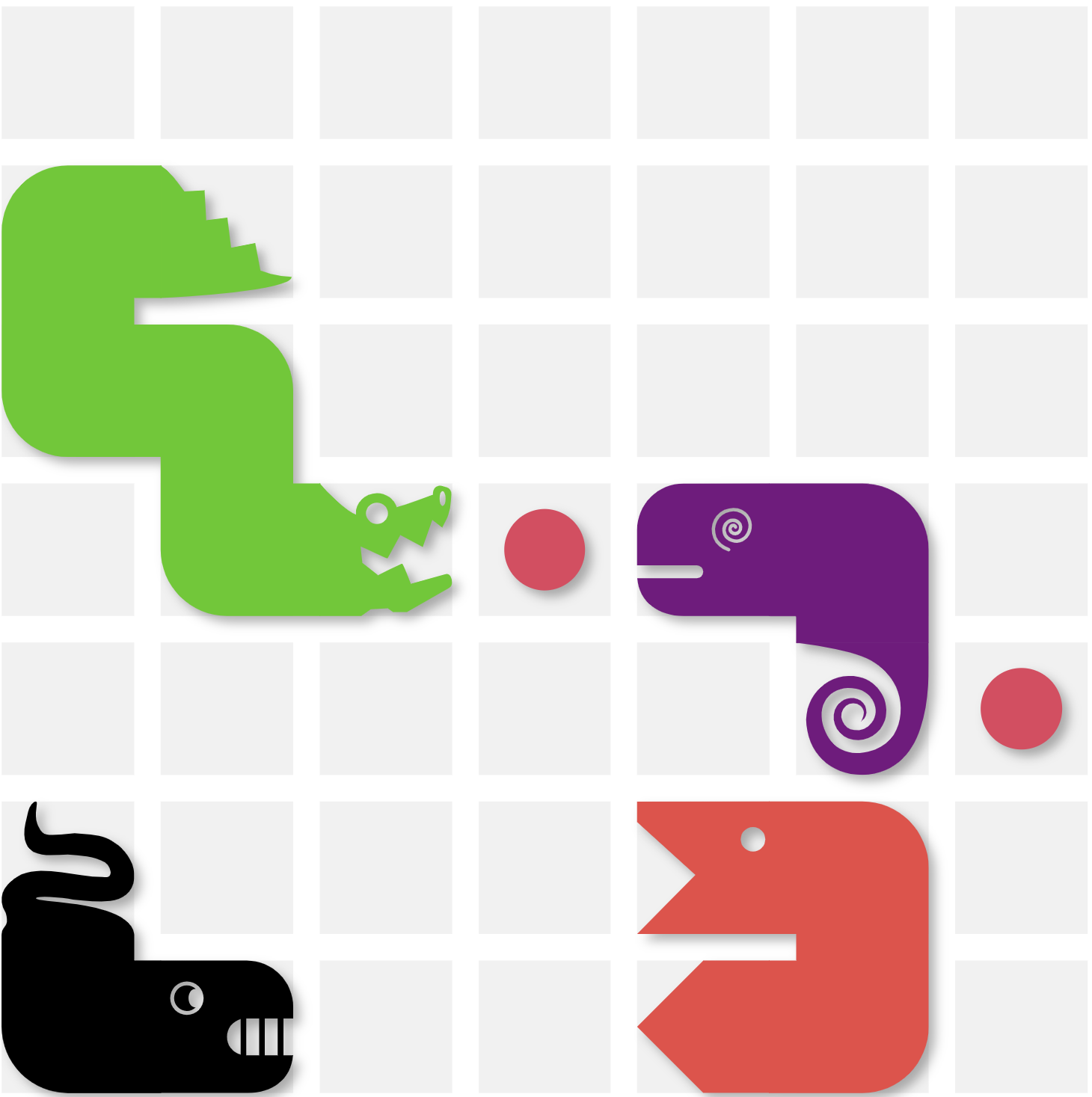}
    } 
    \caption{
    Game modes of Battlesnake: the standard game of Tron as well as the stochastic extensions of Battlesnake for either two or four agents. Food is visualized by red circles.
    }
    \label{fig:game_modes}
\end{center}
\end{figure}

In \cref{fig:az_depth_eval}, both play against baseline agents, which utilize simultaneous-move MCTS \cite{sm_search} with a handcrafted value heuristic adapted from Schier and Wüstenbecker \yrcite{niedersnake}.
Rationality of the baseline agents is modulated by the compute budget, i.e. the number of tree search iterations.
Albatross consistently outperforms AlphaZero and the reward difference is highest against weaker enemies.
That is, because they violate the perfect rationality assumption of AlphaZero.
In contrast, Albatross is able to identify and exploit their weak rationality.
Strong enemies are not exploitable, which leads to a convergence of the reward achieved by Albatross and AlphaZero.
Interestingly, in the stochastic 2-player mode, the reward difference between Albatross and AlphaZero is highest for medium rational agents with about $10^3$ iterations.
Those agents are able to surprise AlphaZero by playing suboptimal, but still good enough to win some games.
This effect does not occur in the game of Tron as games are shorter and mistakes result in a quick death.

Additionally, we play a tournament between Albatross, AlphaZero and baseline agents of different strengths in the stochastic mode with four agents.
In \cref{fig:4nd7_tnmt}, the results of this tournament are displayed.
We plot the estimated temperature of the agents and the TrueSkill scores achieved in the tournament.
For the rationality of Albatross we use the response temperature $\tau_R = 10$.
AlphaZero achieves nearly maximum temperature of $\tau = 9.99$, which corresponds to optimal play assuming all other agents play optimal as well.
However, the baseline agents do not play optimally, which Albatross is able to detect and exploit.
Consequently, it achieves a higher TrueSkill score than AlphaZero.

\subsection{Cooperative Battlesnake}
Lastly, we evaluate the capabilities of Albatross to cooperate with more than two agents.
To this end, we devise a game variant of Tron with cooperative rewards, i.e. the objective of all agents is to stay alive as a group for as long as possible.
Since the previously used board size of 7$\times$7 is too small for four players in the deterministic game mode of Tron, we increase the board size to 11$\times$11.
The results are shown in \cref{fig:coop_tron}.
Due to the deterministic game dynamics, the board fills up quickly and games are short.
Therefore, the possible variation in discounted reward is also small.
However, Albatross still outperforms AlphaZero by a small margin when playing with weak partners, i.e. few search iterations.
Again, this difference diminishes with partners of higher rationality since AlphaZero assumes optimal play.
These results verify that Albatross is able to cooperate well with more than two agents of different rationality.

\section{Limitations and Future Work}
\label{sec:limits}
To accurately estimate the rationality of an agent, Albatross requires observations of their behavior.
In \cref{fig:alb_fix}, we demonstrate that this estimation quickly converges within 20 to 30 time steps.
This is exemplified in the game of Tron, which has a maximum game length of 24.
However, Albatross may not be applicable to games with even shorter episodes.
In future work, Albatross could incorporate a prior temperature likelihood and perform maximum a posteriori estimation to be applicable to very short interactions.
For example, a prior likelihood could be obtained from an online leaderboard or other sparse knowledge about the other agent.
Additionally, Unsupervised Reinforcement Learning could be used to obtain a prior policy \cite{polter}.
Another current limitation of Albatross is the dependency on planning in the joint action space of all agents.
The size of the joint action space grows exponentially with the number of agents and number of actions per agent.
Therefore, the tree search becomes a weak improvement operator if only a fraction of nodes can be evaluated.
This prevents the application in domains with large joint action spaces or domains where the environment dynamics are unknown.
For example, Albatross is not applicable to the MeltingPot environment \cite{Agapiou2022MeltingP2} as it does not allow for planning.
In future work, Albatross can be enhanced with a learned environment model akin to MuZero \cite{schrittwieser_mastering_2020} to address these limitations.

\begin{figure}[tb]
\vskip 0.2in
\begin{center}
    \centerline{\includegraphics[width=0.8\columnwidth]{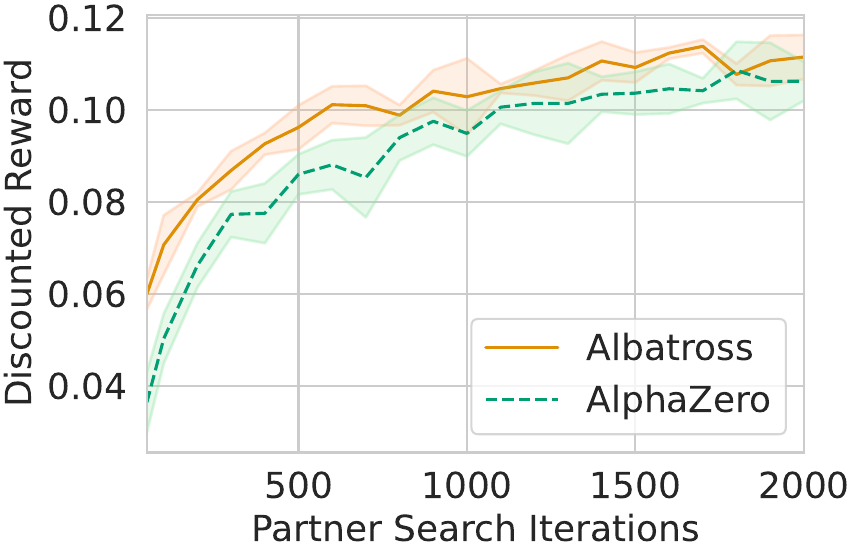}}
    \caption{
    Evaluation of AlphaZero and Albatross n a four player cooperative game variant of Tron with baseline MCTS (+ value function heuristic) agents that have different budgets of search iterations i.
    }
    \label{fig:coop_tron}
\end{center}
\vskip -0.2in
\end{figure}

\section{Conclusion}
We developed the novel equilibrium concept of an SBRLE for modeling the interaction between a single rational and multiple weak agents in zero-shot interactions.
Since the SBRLE is infeasible in most games, we proposed Albatross, which is capable of learning the SBRLE through a combination of self-play and planning.
Using Albatross, we are able to reach state of the art in the Overcooked benchmark for cooperative tasks.
We showed that Albatross is able to estimate rationality of unknown agents within a single episode and cooperate with them by adapting its behavior.
Analyzing the effect of temperature estimation, we find Albatross cooperates with rational partners and behaves self-reliant with weak partners.
Moreover, we showed that Albatross is able to exploit weak enemies in the competitive domain.




\section*{Impact Statement}
This paper represents work contributing to the advancement of Human-AI collaboration.
A potential societal consequence of our work is the adoption of adaptive AI-Assistants, increasing quality of life on the human side.
However, we are aware that rationality is an intricate concept, which cannot completely be modeled using a scalar value.
For real-world applications, aspects of fairness, privacy and safety must be examined.
Moreover, for a given application, ethical aspects must be considered to determine if estimating the rationality of a human is appropriate.


\section*{Acknowledgements}
This work was supported by the Federal Ministry of Education and Research (BMBF), Germany under the AI service center KISSKI (grant no. 01IS22093C), the Lower Saxony Ministry of Science and Culture (MWK) through the zukunft.niedersachsen program of the Volkswagen Foundation and the Deutsche Forschungsgemeinschaft (DFG) under Germany's Excellence Strategy within the Cluster of Excellence PhoenixD (EXC 2122) and  (RO 2497/17-1). 

\bibliographystyle{icml2024}
\bibliography{references}

\newpage
\appendix
\onecolumn

\appendix

\section{Training Procedures of AlphaZero and Albatross}
\label{sec:az_alb_training}
In this section, we outline the training procedure of Albatross and highlight the differences to the original AlphaZero algorithm.
As an intuition, the proxy model learns to play like agents of different rationality, but only through self-play without external supervision (e.g. imitation learning).
Afterwards, the response model learns to play optimally against the proxy model at all rationalities.
In contrast, AlphaZero only performs self-play, only learning to play optimally against the own highly rational policy.

Therefore, the training of AlphaZero is very similar to the training of the proxy model, but the neural networks are not conditioned on the temperature. 
The main differences in the algorithms are highlighted in bold.

\subsection{AlphaZero}
In the original AlphaZero algorithm \cite{silver_alphazero}, Monte-Carlo tree search (MCTS) is used as a policy improvement operator.
For sequential games this is a good choice as it allows for deep searches by only focusing on promising regions in the search space.
However, in simultaneous games, the best action is highly dependent on the actions of the other players, which nullifies this advantage.
This is reflected in our hyperparameter search (see \cref{sec:az_sim}), where we found that replacing the standard MCTS with fixed depth search and Logit equilibrium backup yields the best performance.

\begin{algorithm}[ht]
   \caption{Training of AlphaZero for simultaneous games}
   \label{alg:training_alphazero}
\begin{algorithmic}[1]
\STATE \textbf{Input:} Observation space $o \in \mathcal{O}$, maximum rationality $\tau_{max}$
\STATE \textbf{Output:} Trained models $\pi_{\theta}(o_i)$ and $v_{\theta}(o_i)$
\STATE Initialize models $\pi_{\theta}$ and $v_{\theta}$ with random weights
\STATE Initialize replay buffer $B$
\FOR{each training episode}
    \FOR{each step of the episode}
        \STATE Perform fixed depth search to construct NFGs and compute LEs:
        \STATE \quad 1. At each leaf node, evaluate states via $v_{\theta}(o_{\text{leaf},i})$ and construct NFG from sibling nodes.
        \STATE \quad 2. Use a solver to compute the Logit equilibrium with temperature $\tau_{max}$ for the NFG
        \STATE \quad 3. Propagate the expected utility of the LE to the parent node
        \STATE \quad 4. Repeat until the LE at the root node is computed
        \STATE Let $o_i$ be the state observation and $\pi_{\text{LE}}(o_i)$, $v_{\text{LE}}(o_i)$ be policy and value of the LE at the root node for each player $i$
        \STATE Collect experiences $(o_i, \pi_{\text{LE}}(o_i), v_{\text{LE}}(o_i))$ at this step and add to $B$
        \STATE Perform environment step by sampling actions $a_i$ from $\pi_{\text{LE}}(o_i)$ for each player $i$
    \ENDFOR
    \FOR{each minibatch sampled from $B$}
        \STATE Update $\pi_{\theta}$ and $v_{\theta}$ via CrossEntropy and MSE respectively
    \ENDFOR
\ENDFOR
\end{algorithmic}
\end{algorithm}

The detailed training procedure is given in \cref{alg:training_alphazero}.
During each training step, a complete search tree is constructed up to a fixed depth.
Then, the leaf nodes are evaluated by the critic network $v_{\theta}$, which represent the utilities of Normal-form games (NFG) constructed from the direct sibling nodes.
The Logit equilibrium of these NFGs is computed using an equilibrium solver (for details see \cref{sec:game_theory_alg}).
The equilibria yield expected utilities for each player in the parent nodes, which again are used to create NFGs from sibling nodes.
This process is repeated until the root node is reached.
The Logit equilibrium at the root node yields an improved policy and value estimate for each player $i$ at the current game state compared to the original policy and value prediction at this game state.
These target values are added to the replay buffer and used later for gradient updates.

\subsection{Albatross}

The training of Albatross is a two-stage process.
Firstly, a proxy model learns to imitate the behavior of agents at different rationality.
Afterwards, a response model learns to play a smooth best response to the proxy model.
The training process of the proxy model is very similar to the training of AlphaZero outlined above.
This procedure is visualized in \cref{alg:training_proxy}.
The major distinction is that policy and value networks are conditioned on a scalar temperature parameter, which controls the rationality of the proxy model.
To train the proxy model on a range of different temperatures, a temperature is sampled at the beginning of each training episode. 
During each step in the episode, again Logit equilibria are computed during fixed depth search for policy and value improvement.
In contrast to the training of AlphaZero, the temperature sampled at the beginning of the episode is used for computing the Logit equilibria.
Additionally, the temperature is also added to the replay buffer as they are necessary for the gradient updates.

\begin{algorithm}[ht]
   \caption{Training of Proxy Model in Albatross (Differences to AlphaZero are highlighted in \textcolor{blue}{blue})}
   \label{alg:training_proxy}
\begin{algorithmic}[1]
\STATE \textbf{Input:} Observation space $o \in \mathcal{O}$, \textcolor{blue}{temperature range $[\tau_{min}, \tau_{max}$, temperature distribution $p(\tau)$}
\STATE \textbf{Output:} \textcolor{blue}{Trained proxy models $\pi_{\theta_P}(o_i, \tau)$ and $v_{\theta_P}(o_i, \tau)$} 
\STATE Initialize proxy models $\pi_{\theta_P}$ and $v_{\theta_P}$ with random weights
\STATE Initialize replay buffer $B$
\FOR{each training episode}
    \STATE \textcolor{blue}{Sample temperature $\tau$ from $p(\tau)$ within $[\tau_{min}, \tau_{max}]$}
    \FOR{each step of the episode}
        \STATE Perform fixed depth search to construct NFGs and compute LEs:
        \STATE \quad 1. At each leaf node, evaluate states via $v_{\theta_P}(o_{\text{leaf},i}, \tau)$ and construct NFG from sibling nodes.
        \STATE \quad 2. Use a solver to compute the Logit equilibrium with \textcolor{blue}{temperature $\tau$} for the NFG
        \STATE \quad 3. Propagate the expected utility of the LE to the parent node
        \STATE \quad 4. Repeat until the LE at the root node is computed
        \STATE Let $o_i$ be the state observation and $\pi_{\text{LE}}(o_i, \tau)$, $v_{\text{LE}}(o_i, \tau)$ be policy and value of the LE at the root node for each player $i$
        \STATE Collect experiences $(o_i, \pi_{\text{LE}}(o_i, \tau), v_{\text{LE}}(o_i, \tau), \text{\textcolor{blue}{temperature $\tau$}})$ at this step and add to $B$
        \STATE Perform environment step by sampling actions $a_i$ from $\pi_{\text{LE}}(o_i, \tau)$ for each player $i$
    \ENDFOR
    \FOR{each minibatch sampled from $B$}
        \STATE Update $\pi_{\theta_P}$ and $v_{\theta_P}$ via CrossEntropy and MSE respectively
    \ENDFOR
\ENDFOR
\end{algorithmic}
\end{algorithm}

\begin{algorithm}[ht]
   \caption{Training of Response Model in Albatross (Differences to Proxy are highlighted in \textcolor{blue}{blue})}
   \label{alg:training_response}
\begin{algorithmic}[1]
\STATE \textbf{Input:} Observation space $o \in \mathcal{O}$, temperature range $[\tau_{min}, \tau_{max}]$, temperature distribution $p(\tau)$, \textcolor{blue}{fixed response temperature $\tau_R$, proxy policy $\pi_{\theta_P}(o_i, \tau)$}
\STATE \textbf{Output:} \textcolor{blue}{Trained response models $\pi_{\theta_R}(o_i, \tau_{-i})$ and $v_{\theta_R}(o_i, \tau_{-i})$} 
\STATE Initialize response models $\pi_{\theta_R}$ and $v_{\theta_R}$ with random weights
\STATE Initialize replay buffer $B$
\FOR{each training episode}
    \STATE \textcolor{blue}{Sample temperatures $\tau_{-i}$ from $p(\tau)$ within $[\tau_{min}, \tau_{max}]$ for each player except $i$}
    \FOR{each step of the episode}
        \STATE Perform fixed depth search to approximate an \textcolor{blue}{SBRLE}:
        \STATE \quad 1. At each leaf node, evaluate states via $v_{\theta_R}(o_{\text{leaf},i}, \tau_{-i})$ and construct NFG from sibling nodes.
        \STATE \quad 2. \textcolor{blue}{At the parent node evaluate the proxy policy for every other agent j as $\pi_{\theta_P}(o_{\text{parent}, j}, \tau_j)$}
        \STATE \quad 3. \textcolor{blue}{Compute the Smooth Best Response (SBR) to LE policies with response temperature $\tau_R$}
        \STATE \quad 4. Repeat until the \textcolor{blue}{SBR to the LE of the other player} at the root node is computed
        \STATE Let $o_i$ be the state observation and $\pi_{\text{SBRLE}}(o_i, \tau_{-i})$, $v_{\text{SBRLE}}(o_i, \tau_{-i})$ be policy and value of the SBRLE at the root node for player $i$
        \STATE Collect experiences $(o_i, \pi_{\text{SBRLE}}(o_i, \tau_{-i}), v_{\text{SBRLE}}(o_i, \tau_{-i}), \text{\textcolor{blue}{temperatures $\tau_{-i}$}})$ at this step and add to $B$
        \STATE Perform environment step by sampling actions $a_i$ from $\pi_{\text{SBRLE}}(o_i, \tau)$ for player $i$ \textcolor{blue}{and from the proxy policy $\pi_{\theta_P}(o_j, \tau_j)$ for every other agent $j$}
    \ENDFOR
    \FOR{each minibatch sampled from $B$}
        \STATE Update $\pi_{\theta_R}$ and $v_{\theta_R}$ via CrossEntropy and MSE respectively
    \ENDFOR
\ENDFOR
\end{algorithmic}
\end{algorithm}

After training the proxy model, the response model is trained using the policy of the proxy model.
The policy and value network of the response model are not only conditioned on a single temperature, but a temperature for every player except itself.
Therefore, in a game with $n$ players, the response model is conditioned on $n - 1$ scalar temperatures.
The response model approximates the Smooth Best Response (SBR) to the policy of the proxy model.
Since the response model should be trained on all combination of rationalities for the other agent, $n-1$ temperatures are sampled at the beginning of an episode.
At each training step, a policy and value improvement is achieved by computing the SBRLE with tree search up to a fixed depth.
At first, the leaf nodes of the search tree are evaluated using the value network $v_{\theta_R}$, which form NFGs.
The policy of the weak players in the NFG can be computed by using the proxy policy $\pi_{\theta_P}$ in the parent node with the respective temperature sampled at the beginning of the episode.
Then, the SBR is computed by a softmax transformation with fixed response temperature $\tau_R$ on the expected utilities for player $i$.
Similar to the training of AlphaZero and the Proxy model, the SBRLE is propagated to the root node and used for gradient updates.
In contrast to the training of the proxy model, all $n-1$ temperatures are added to the replay buffer.
Another difference to AlphaZero and the proxy model is the policy used for advancing the environment state between training steps.
Only the action for the agent controlled by the response model is sampled from the SBRLE computed at the root node.
All other actions are sampled from the proxy policy to accurately represent the state distribution when playing weak agents.

\section{Algorithms for equilibrium computation}
\label{sec:game_theory_alg}
There exist a number of algorithms for computing the equilibria presented.
We give a brief overview of the algorithms used in this work.
All algorithms are implemented in C++ and available open source along with our code.
To the best of our knowledge, this is the first open-source implementation of solvers for Quantal Stackelberg equilibria.

\subsection{Nash Equilibrium}
For the computation of Nash equilibria, we use the algorithm of Porter et al. \yrcite{porter_simple_2008}, which is based on support enumeration.
A support is the set of actions receiving a non-zero probability in the Nash equilibrium.
Until an equilibrium is found, the supports are iterated and a linear program is solved to determine if a Nash equilibrium exists for the current support.
Supports are ordered based on a heuristic prioritizing small and balanced supports.
For games of more than two players, a non-linear program is solved.
For details regarding the formulation of the (non-) linear program, we refer to the original work.

\subsection{Logit Equilibrium}
\label{subsec:le_details}
A Logit equilibrium can be computed by using the smooth best response dynamics.
In detail, starting from a uniform policy, all players compute the smooth best response given the other players policy.
These new policies are the basis for the computation of smooth best responses in the next iteration.
This process is called \emph{Stochastic Fictitious Play (SFP)} \cite{global_convergence} (sometimes also titled Smooth Fictitious Play \cite{fudenberg1998theory}).
Simply following the dynamics yields a fixed point in two-player zero-sum games, but can form a cycle in other games \cite{shapley_game}.
Therefore, one has to anneal the step size for updating the policy of each player.
Formally, in iteration $t$ the policies are updated as $\pi_i^{t+1} = \pi_i^t + \alpha_t(\mathit{SBR}(\pi_{-i}^t, \tau) - \pi_i^t)$.
Robbins and Monro \yrcite{sfp_convergence} proved that SFP converges almost surely to the equilibrium point for the step sizes $(\alpha_1, \alpha_2, \ldots)$ if the conditions $\lim_{t\to\infty} \alpha_t = 0$ and $\sum_{t=1}^{\infty} \alpha_t = \infty$ hold.
We compare multiple step size schedules, which all fulfill the two mentioned conditions.
\begin{itemize}
    \item The method of successive averages (MSA) \cite{sfp_convergence} updates the policy as an average of all previous policies, which is equivalent to using a step size of $\alpha_t = 1 / t$.
    \item Polyak \yrcite{polyak} proposed to use a step size of $\alpha_t = t^{-2/3}$.
    \item Nagurney and Zhang \yrcite{nagurney} proposed a schedule of learning rates which converges to zero at a slow rate: $(1, \frac{1}{2}, \frac{1}{2}, \frac{1}{3}, \frac{1}{3}, \frac{1}{3}, \ldots, t \text{ times } \frac{1}{t})$. 
    \item Self-regulating average (SRA) \cite{mswa} is a case-based schedule accelerating the decay if learning diverges and decelerating the decay when converging. The learning rate factors are $\alpha_t = \frac{1}{\beta_t}$ with
    \begin{align*}
        \beta_t = \begin{cases}
            \beta_{t-1} + \gamma & \norm{\pi_i^{t} - \mathit{SBR}(\pi_{-i}^t, \tau)} \geq \norm{\pi_i^{t-1} - \mathit{SBR}(\pi_{-i}^{t-1}, \tau)}\\
            \beta_{t-1} + \Gamma & \textit{ else }
        \end{cases},
    \end{align*}
    where $\gamma < 1$ and $\Gamma > 1$. For our experiments, we adopted the hyperparameters $\gamma = 0.3$ and $\Gamma = 1.8$ from Liu et al. \yrcite{mswa}.
\end{itemize}

\begin{figure}[ht]
\begin{center}
    \subfigure[General Sum]{%
        \includegraphics[width=0.24\textwidth]{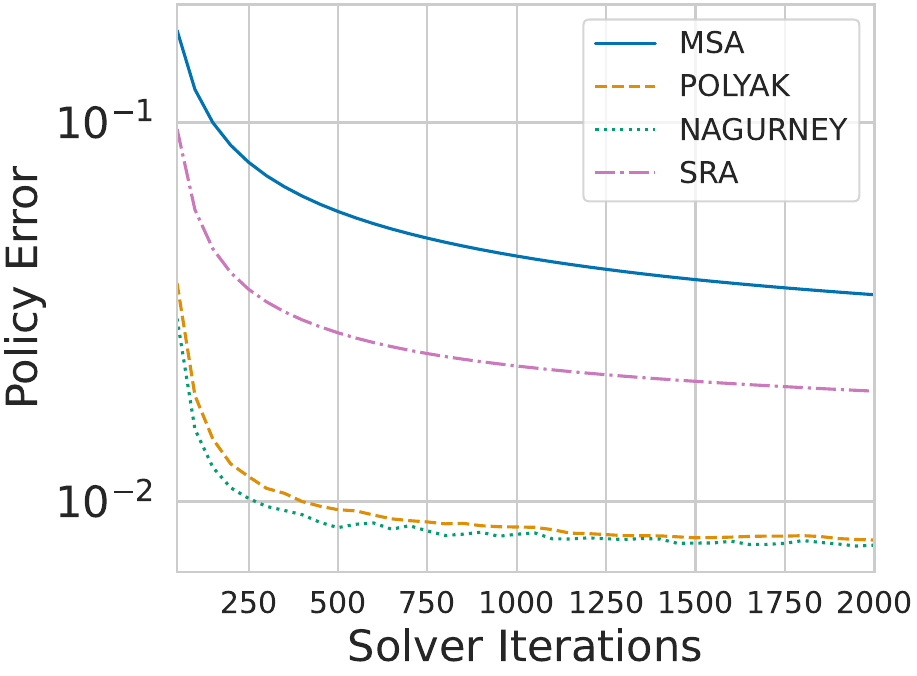}
    }
    \subfigure[Fully Cooperative]{%
        \includegraphics[width=0.24\textwidth]{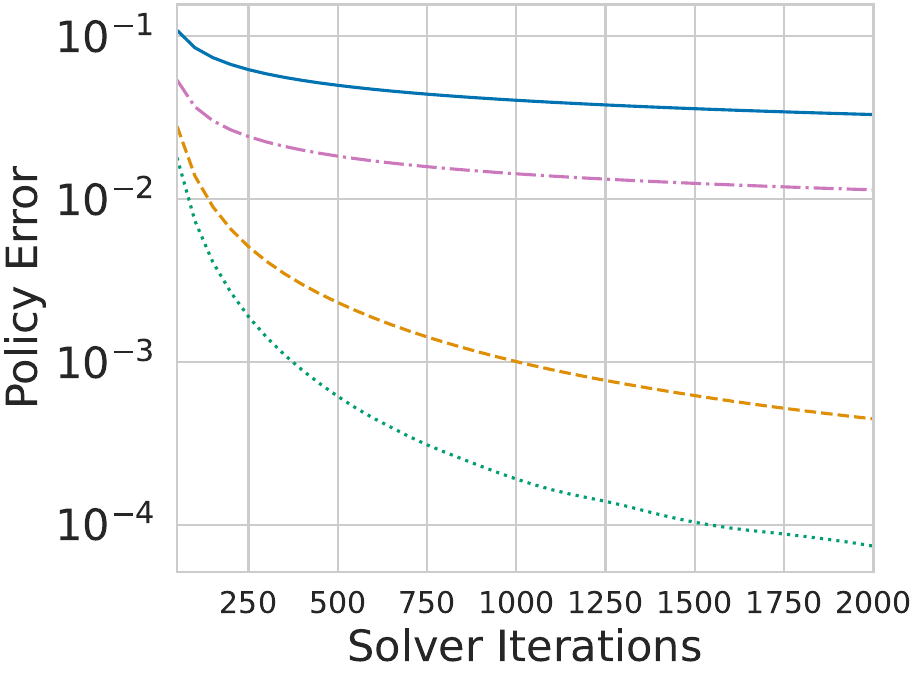}
    }
    \subfigure[Zero-Sum Policy Error]{%
        \includegraphics[width=0.24\textwidth]{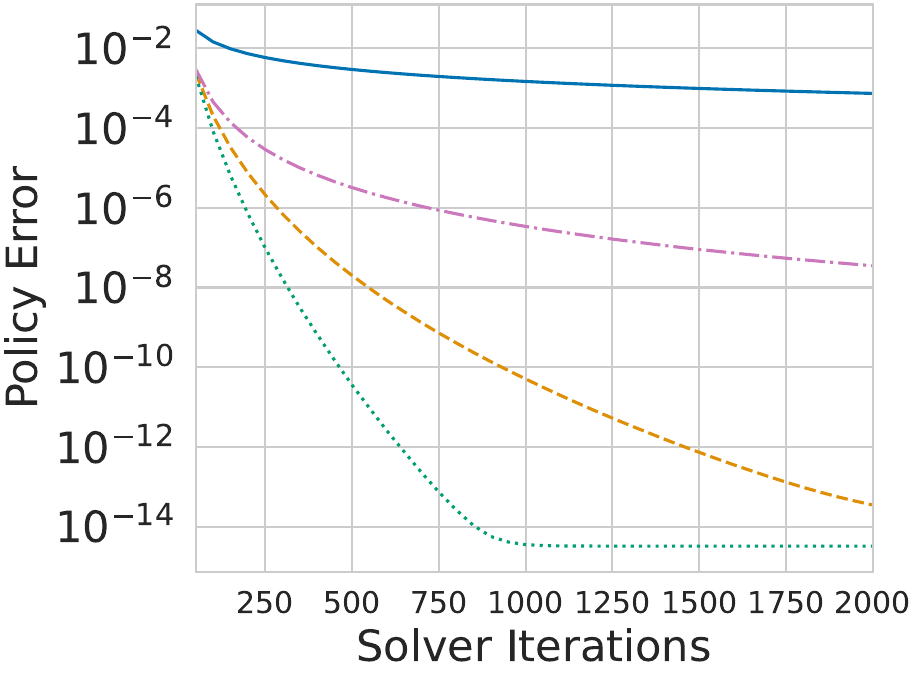}
    }
    \subfigure[Zero-Sum Value Error]{%
        \includegraphics[width=0.24\textwidth]{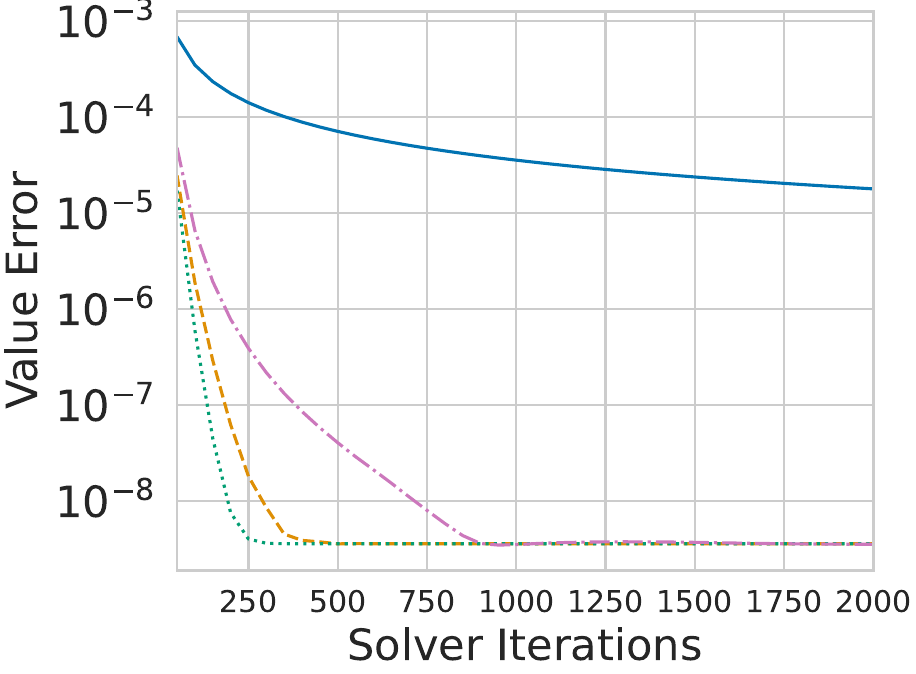}
    }
    \caption{
    Mean policy and value error of stochastic fictitious play using different learning rate schedules in $10^5$ randomly generated NFGs with different game theoretic properties.
    For the value error, ground truth values are computed using MSA with $10^7$ iterations.
    Temperatures of the Logit equilibria are randomly sampled in the interval $[0, 10]$.
    }
    \label{fig:logit_solver}
\end{center}
\end{figure}

In \cref{fig:logit_solver}, we test the different learning rate schedules in randomly generated NFGs.
For the random generation, we uniformly sample utilities of a 2-player NFG with 6 actions per agent.
To test different game theoretic properties, we additionally perform experiments with normalized utilities according to the fully cooperative or zero-sum property.
Then, we approximate the Logit equilibrium using a different budget of solver iterations.
The policy error is calculated as the absolute difference in policy between two steps of SFP, i.e $|\mathit{SBR}(\pi_{-i}^t, \tau) - \pi_i^t|$.
Additionally, we compute the value error in the zero-sum games, as they always have a unique Logit equilibrium \cite{global_convergence}.
In all experiments, the learning rate schedule of Nagurney and Zhang \yrcite{nagurney} performed best.

\begin{figure}[ht]
\begin{center}
    \subfigure[$\tau = 0$]{%
        \includegraphics[width=0.3\textwidth]{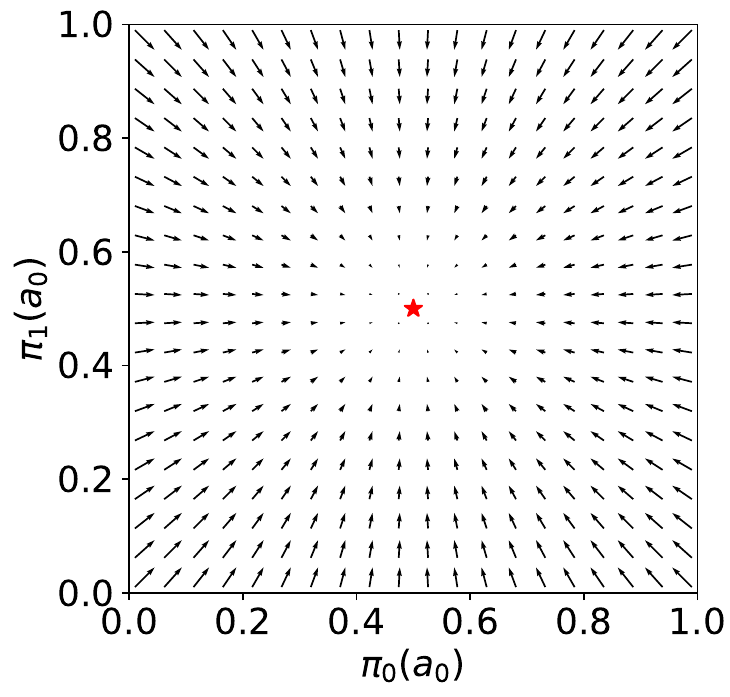}
    }
    \subfigure[$\tau = 1$]{%
        \includegraphics[width=0.3\textwidth]{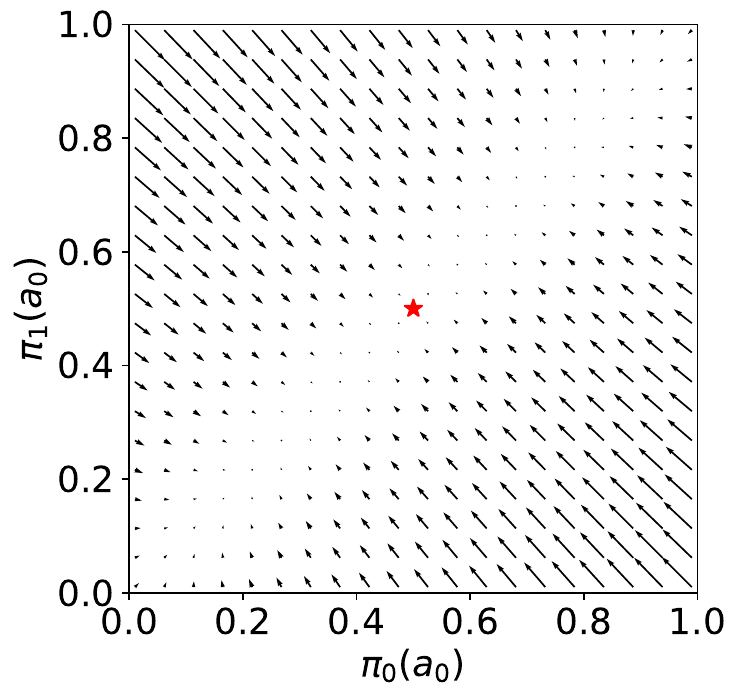}
    }
    \subfigure[$\tau = 10$]{%
        \includegraphics[width=0.3\textwidth]{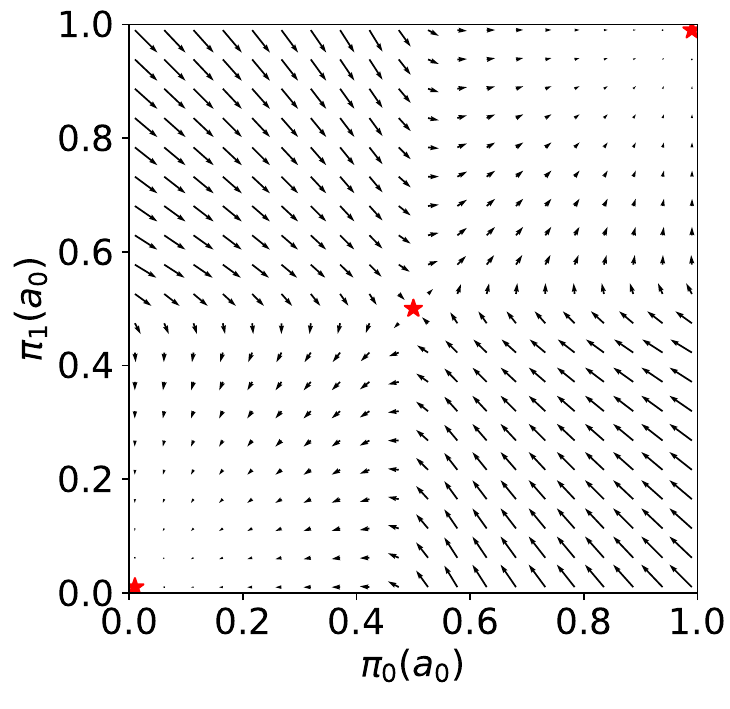}
    }
    \caption{
    Examples of the smooth best response dynamics at different temperatures for the matching pennies game. Both player have two actions and get a reward of +1 if they choose the same action and -1 otherwise. The Logit equilibria are marked in red.
    }
    \label{fig:sbr_dynamics_example}
\end{center}
\end{figure}

In \cref{fig:sbr_dynamics_example}, an example of the smooth best response dynamics is visualized at different temperatures for the matching pennies game.
In the matching pennies game, both player have two actions and get a reward of +1 if they choose the same action and -1 otherwise.
The Logit equilibrium is always unique for sufficiently small temperatures, but does not have to be for larger temperatures if multiple Nash equilibria exist \cite{qre}.
In this example, at $\tau=0$ and $\tau=1$, the LE is unique, but at $\tau=10$ multiple LE exist, which approximate the three Nash equilibria of the game.
Note that depending on the initialization, SFP will converge to different LE.

\subsection{Quantal Stackelberg equilibrium}
The computation requires finding a global optimum.
We utilize a Dinkelbach-Type algorithm\footnote{There are two errors in the original paper. In Algorithm 1, line 4 the subtraction needs to be an addition, and in line 7, the if and else cases need to be swapped.} \cite{dinkelbach}, which relies on fractional programming \cite{fractional_prog}.
The definition of a QSE can be reformulated as:
\begin{align*}
    \pi_i = \argmax_{\tilde{\pi}_i \in \Delta_i} \frac
    {
    \sum_{a \in A_{-i}} u_i(\tilde{\pi}_i, a) \exp(\tau\, u_i(\tilde{\pi}_i, a))
    }{
    \sum_{a \in A_{-i}} \exp(\tau \, u_i(\tilde{\pi}_i, a))
    }.
\end{align*}
The primary notion of fractional programming is the transformation of the problem $\max_x f(x) / g(x)$ into a different problem $F(p) = \max_x f(x) - p g(x)$, maximizing the original problem at the root $F(p^*) = 0$.
Since $p$ is a scalar and $F$ convex, one can find the global optimum using simple binary search.
In each iteration, the binary search solves the Dinkelbach subproblem
\begin{align*}
    \max_{\tilde{\pi}_i \in \Delta_i} \sum_{a \in A_{-i}} \big[ u_i(\tilde{\pi}_i, a) - p \big] \exp(\tau \, u_i(\tilde{\pi}_i, a)).
\end{align*}
Solving the subproblem requires finding a global optimum of a simpler problem than the original formulation, but the solution is still difficult to compute.
In our games with small action spaces, it was sufficient to approximate the global optimum using grid search.
Notably, there exist other methods using piece-wise linear approximation or gradient descent \cite{dinkelbach}.

\section{AlphaZero in Competitive Simultaneous Games}
\label{sec:az_sim}
The only component requiring change when adapting AlphaZero to simultaneous games is the tree search algorithm.
We evaluate three different tree search variants, namely Monte-Carlo Tree Search (MCTS) \cite{metropolis_monte_1949}, Counterfactual Regret Minimization (CFR) and fixed depth search.
MCTS is intended for sequential games, but there also exists an adaptation for simultaneous games, namely Simultaneous-Move Monte-Carlo Tree Search (SM-MCTS) \cite{sm_search}.
We refer to it simply as MCTS since only considering simultaneous games renders the distinction redundant.
For MCTS, we evaluate three different selection functions.

The \emph{Decoupled Upper Confidence bound for Trees (DUCT)} uses the standard Upper Confidence bound for trees like AlphaZero \cite{silver_alphazero}, but every player keeps an independent statistic of action-values and action-visits.
Specifically, each agent selects a move according to
\begin{align*}
    a^* = \argmax_{a_i \in A_i} \frac{w_{a_i}}{n_{a_i}} + c\, \pi_{\theta}(o_i, a_i) \frac{\sqrt{N}}{n_{a_i}},
\end{align*}
where $w_a$ is the sum of values propagated through the current node in the backup phase and $n_{a_i}$ the number of times agent $i$ selected action $a_i$.
$N$ denotes the total number of visits in the current node and c is a parameter balancing exploration and exploitation.
The policy $\pi_{\theta}(o_i, a_i)$ is used to guide the exploration of the tree search.
This adaptation is simple to implement and has been shown to work well in a variety of games \cite{lanctot13tron, sm_mcts_variants}.
However, it has also been shown that DUCT does not always converge to a Nash equilibrium \cite{Shafiei2009ComparingUV}.
This is, because the action selection is deterministic and independently of the other agents.
As a result, it is possible that all agents select the same actions over and over, leading to a cycle around the true Nash equilibrium.
This problem can be alleviated by using a random tie break between actions with the same upper confidence bound.
However, even with this extension, convergence cannot be guaranteed.

In contrast to DUCT, the \emph{Exponential Weight Algorithm for Exploration and Exploitation (EXP3)} \cite{Auer2002TheNM} is a stochastic selection algorithm.
Again, all players select their action independently from each other, but they sample their action from the following distribution
\begin{align*}
    \sigma(a_i) &= \frac{(1 - \gamma) \exp(\eta w_{a_i})}{\sum_{\tilde{a_i} \in A_i} \exp(\eta w_{\tilde{a_i}})} + \frac{\gamma}{|A_i|} \\
    &= \frac{(1 - \gamma)}{\sum_{\tilde{a_i} \in A_i} \exp(\eta (w_{a_i} - w_{\tilde{a_i}}))} + \frac{\gamma}{|A_i|},
\end{align*}
where $\eta = \gamma / |A|$ and $\gamma$ is an exploration parameter.
The second formula is numerically more stable, because it avoids the computation of large exponential terms.
In addition to the selection function, EXP3 also slightly alters the computation of the backup function:
\begin{align*}
   &w_{a_i} \leftarrow w_{a_i} + \frac{\Tilde{w}}{\sigma(a_i)}.
\end{align*}
The outcome of the current evaluation is scaled by the probability that an action is taken to account for very unlikely events.
The final resulting policy of EXP3 is the average of the sample probabilities over all iterations.
In contrast to DUCT, EXP3 has been proven to converge to a Nash equilibrium in Normal-form two-player zero-sum games \cite{Auer2002TheNM}.

In contrast to DUCT and EXP3, using \emph{Regret Matching} an action is not selected according to the expected outcome of that action, but proportional to the expected regret of not choosing that action.
Similar to EXP3, Regret Matching is a stochastic algorithm which samples an action from the distribution
\begin{align*}
    &\sigma(a) = \begin{cases}
    r^{+}_{a} / R^{+} &R^{+} > 0 \\
    1 / |A| &\text{else}
    \end{cases},
    &R^{+} = \sum_{\tilde{a} \in A} r^{+}_{\tilde{a}},
\end{align*}
where $(\cdot)^+ = \max(0, \cdot)$.
The regret values $r_a$ are computed during the backward pass for all actions that are not selected.
To compute the regret, the algorithm needs to keep track of the average outcome of all joint actions $a$, not just the individual actions $a_i$ like DUCT and EXP3.
Like Exp3, Regret Matching always converges to a Nash equilibrium in Normal-form two-player zero-sum games \cite{lisy_convergence_2013}.

Counterfactual Regret Minimization (CFR) \cite{deep_cfr} is a tree search for games with imperfect-information.
Since games with imperfect information are a superset of simultaneous perfect-information games, CFR can be applied.
However, CFR is unnecessarily complicated and inefficient, because the only imperfect information arises from the simultaneous move selection.
Specifically, a simultaneous perfect-information game can also be modeled as a sequential imperfect-information game, where agents do not know the selected move of the other agents.
This leads to the property, that the value of a node in the search tree only depends on the subtree below, but not on the previous game dynamics.
An algorithm exploiting this property is \emph{Simultaneous Move Online Outcome Sampling (SM-OOS)} \cite{sm_search}.
Because CFR relies on Regret Matching, SM-OOS is very similar to MCTS with Regret Matching as a selection function.
However, there are a few important differences.
In SM-OOS, only one player updates their regret values in each iteration, while in MCTS with Regret Matching all players update their regrets.
The updating agent explores the state space by choosing a random action with probability $\epsilon$ or playing on policy with probability $1 - \epsilon$.
The non-updating agents play on policy to ensure that the regret calculation of the updating player is correct.
Additionally, the updating agent keeps track of its tail and sampling probability.
The tail probability is the product of the policy action probabilities of all nodes from the current node to the leaf node on the path taken during action selection.
The sampling probability is very similar, but uses the action probabilities including exploration instead of the plain policy.
By weighting the regret updates with the ratio of tail and sampling probability, one can ensure that the computed regret accurately reflects the regret that would occur when the agent plays on policy.
Lastly, only the non-updating player adds their current policy estimates to the cumulative policy sum at each node to prevent a mixture with the exploration probability.
Even though the calculation with a single updating agent is more accurate than the computation in MCTS, it is also less efficient as less updates happen in the same computation time.

Fixed depth search, also called Backward induction \cite{Rust2018}, is an algorithm originally intended for solving a complete game tree, but it is also possible to use the algorithm on a truncated game tree with a heuristic evaluation in the leaf nodes \cite{sm_search}.
Firstly, the game tree is built up to a specific depth. 
Then, all leaf nodes are evaluated.
Lastly, the values of the leaf nodes are propagated upward the game tree to the root node using a backup function.
We test two backup functions, which are based on the idea of solving for an equilibrium.
Using the game theoretic algorithms presented in \cref{sec:game_theory_alg}, we test a backup function based on the Nash equilibrium as a Logit equilibrium with fixed temperature ($\tau = 10$).

\subsection{Baseline Agent}
\label{sec:baseline_agent}
In Battlesnake, area control is a standard heuristic for evaluating a game state \cite{niedersnake}.
In the stochastic game modes, area control implicitly incentivizes a snake to eat more food than the enemy, because it can reach more squares if it is able to win a head-to-head collision.
We compute area control using a flood-fill algorithm \cite{pavlidis}, which fills the board starting from the heads of all living snakes.
If two snakes are able to reach a grid square at the same time, we use the length as a tie break according to the head-to-head collision rule.
In the stochastic game modes, our variant of flood fill also dynamically deletes the current tail of all snakes in every iteration.
This simulates the game dynamics as every snake would move forward and leave the square occupied by its tail.
The only exception is a situation, where a snake has just eaten a food in the last turn.
Then, the tail of this snake stays on its square in the first iteration of flood fill and is only deleted in the second iteration.

As an additional improvement, we combine the area control of each snake with their relative health score to prevent them from starving.
Let $b^2$ be the total number of grid squares, $\tilde{N}$ the current agents alive, $h_{\mathit{max}}$ the maximum health, $h_i$ the health of agent $i$ and $\alpha_i$ their area control computed as described above.
Then, we evaluate a board position for player $i$ as
\begin{align*}
    \tilde{w}_i = \frac{1}{2}(\tilde{\alpha}_i + \tilde{h}_i - \frac{1}{|\tilde{N}|} \sum_{j \in \tilde{N}} \tilde{h}_j),
\end{align*}
where $\tilde{\alpha}_i$ is the area control advantage of agent $i$ relative to the board size and $\tilde{h}_i$ their relative health.
Specifically, these values are computed as
\begin{align*}
    \tilde{\alpha}_i &= \frac{1}{b^2}\big(\alpha_i - \frac{1}{|\tilde{N}|} \sum_{j \in \tilde{N}} \alpha_j\big),
    &\tilde{h}_i = \frac{h_i}{h_{\mathit{max}}}.
\end{align*}

For Tron, we omit the terms using a health score and simply evaluate a game state using $\tilde{\alpha}_i$.
To compute a policy from the heuristic value function, we utilize MCTS with DUCT as a selection function. 
For DUCT, we use the standard exploration bonus of $c = \sqrt{2}$.
Since the baseline agent does not have a trained policy model to guide the search, we omit the policy guidance term in DUCT.

\subsection{Evaluation}
We evaluate the different tree search variants in the three game modes of Battlesnake against the baseline agent using $2e3$ search iterations.
For all game modes, we train AlphaZero with the adapted tree search on five seeds.
The game of Tron has short episodes and a smaller state space, such that we limit training time to a single day.
In contrast, for the stochastic game modes, we train AlphaZero for two days.

\newcommand{\curfigwidth}{0.31\textwidth}
\begin{figure*}[ht]
    \begin{center}
        \subfigure[Tron]{%
            \includegraphics[width=\curfigwidth]{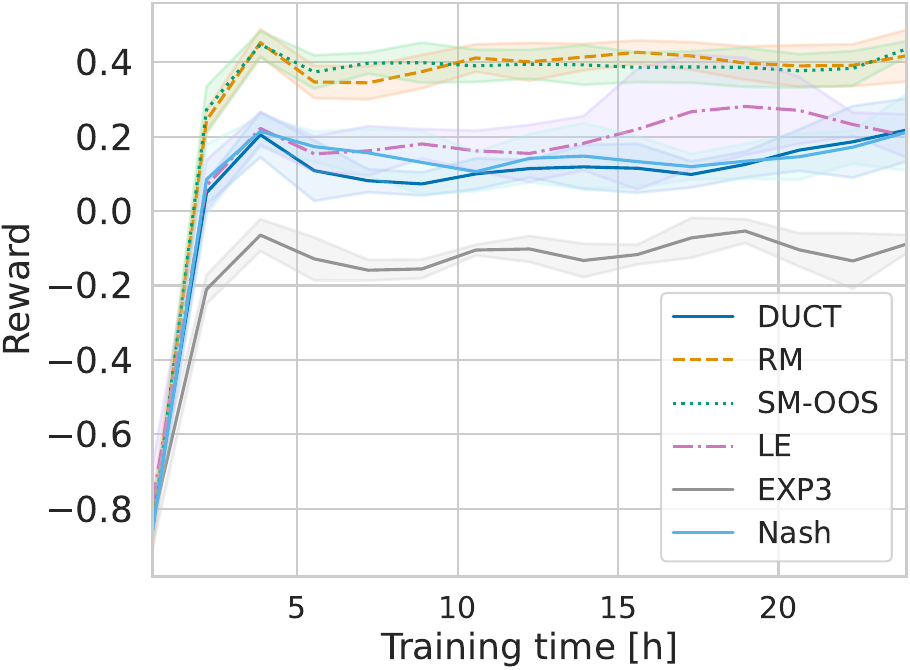}
        }
        \subfigure[Stochastic 2 Player]{%
            \includegraphics[width=\curfigwidth]{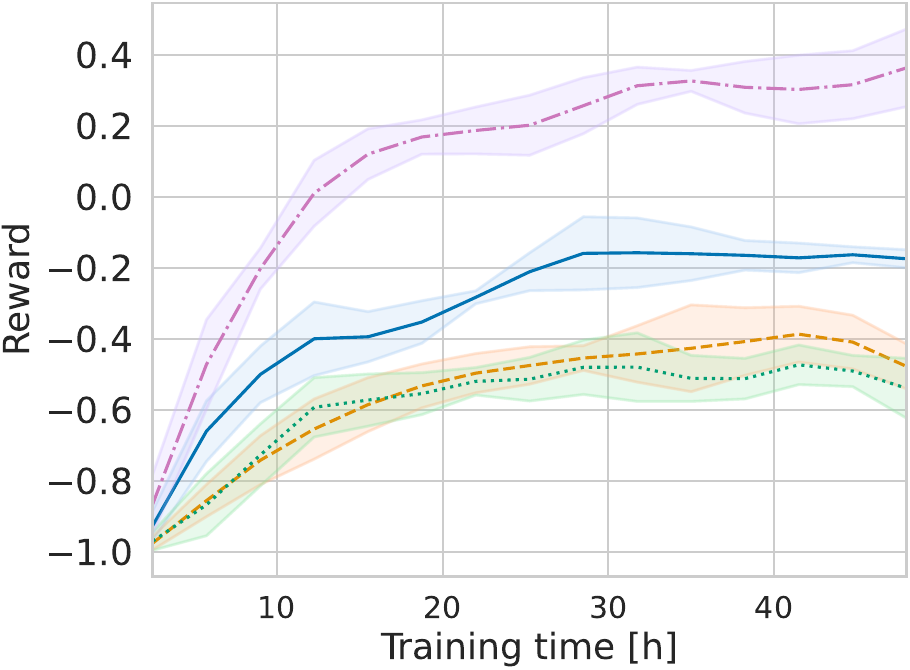}
        }
        \subfigure[Stochastic 4 Player]{%
            \includegraphics[width=\curfigwidth]{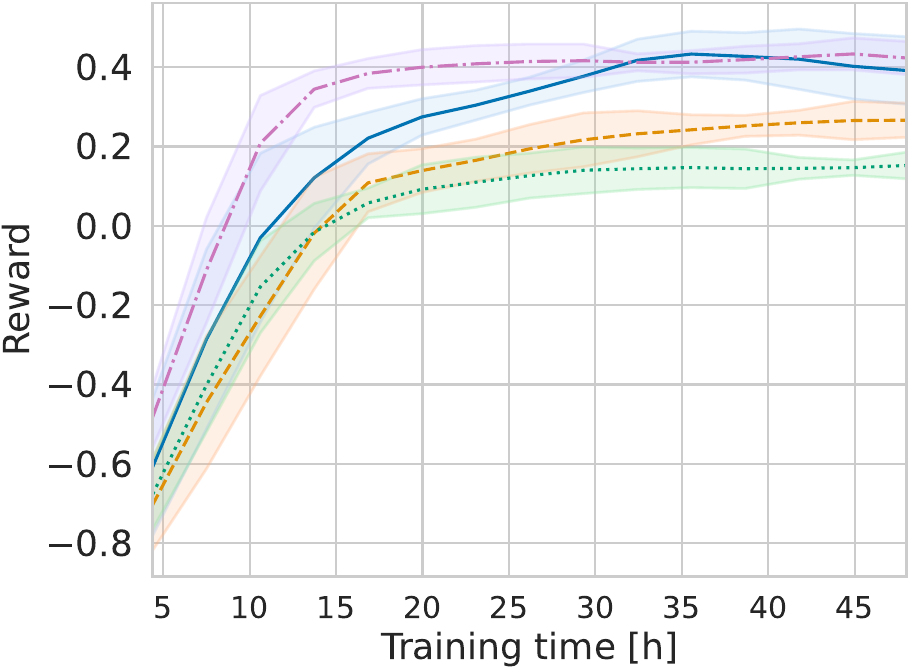}
        }
        \caption{Evaluation of AlphaZero with different Search variants in sim. games. Mean and standard deviation are computed over five seeds.}
        \label{fig:az_vs_baseline}
    \end{center}
\end{figure*}

In \cref{fig:az_vs_baseline}, the evaluation results are displayed.
Note that we excluded the worst performing variants, namely MCTS with Exp3 selection function and fixed depth search with Nash equilibrium backup, from the experiments of the stochastic modes to save resources.
In Tron, RM and SM-OOS perform best.
However, in the stochastic mode with two players, LE outperforms all other methods by a large margin.
In the mode with four players, LE achieves the best results as well, but closely followed by DUCT.
Overall, we conclude that fixed depth search with Logit equilibrium backup is the best adaptation of AlphaZero to simultaneous perfect-information games.

\section{Stability of a Unique Learning Target}
\label{subsec:nash_instability}
Even though the Logit equilibrium inherently includes an error probability, it may be preferable to a Nash equilibrium as it is more stable.
As an example, consider a situation where an agent has two possible actions.
If both actions have the same expected utility, selecting either action or any probability distribution over both actions is a Nash equilibrium.
However, if one of those actions has a marginally higher expected utility, this action is assigned a probability of one.
As a result, a Nash equilibrium may be sensitive to small approximation errors in the utility function.
Those errors, however, are always present when using neural networks as value function approximations.

\begin{figure}[ht]
\begin{center}
    \subfigure[Game Situation]{
        \includegraphics[height=0.24\columnwidth]{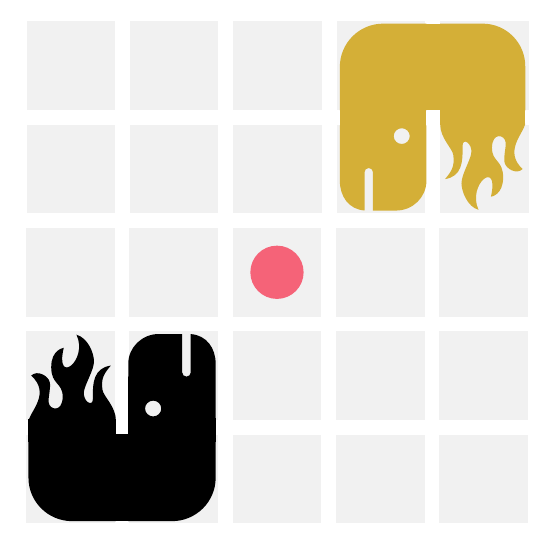}
    }
    \subfigure[Nash Equilibrium]{
        \includegraphics[height=0.24\columnwidth]{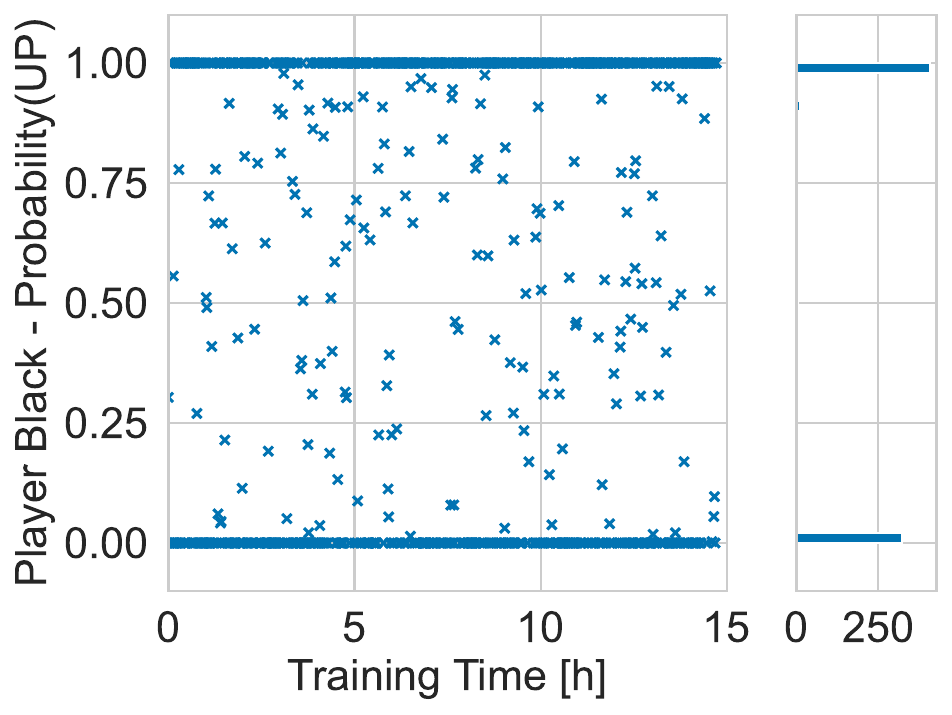}
    }
    \subfigure[Logit Equilibrium]{%
        \includegraphics[height=0.24\columnwidth]{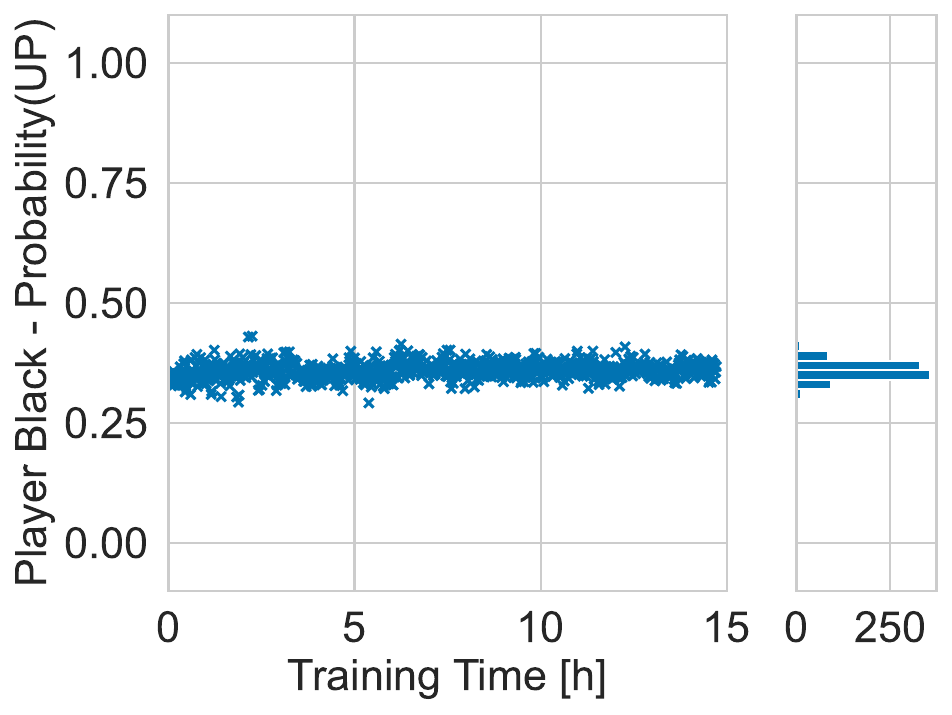}
    }
\caption{
Comparison of equilibria in a single game situation using multiple training checkpoints. The policy of the Nash- and Logit equilibrium for an NFG generated by the value function of a neural network at different training time is compared.
}
\label{fig:instability}
\end{center}
\end{figure}

We empirically investigate this property in the non-deterministic mode of Battlesnake with a board of size $5\times5$.
Using AlphaZero with DUCT, we train a neural network for 15 hours on a single RTX3090 GPU and 15 Intel Xeon Gold 6258R CPU cores.
During training, we save checkpoints of the model weights in regular time intervals.
Afterwards, we compute the Nash equilibrium and Logit equilibrium for a single fixed board position, which is displayed in Figure \ref{fig:instability}.
This board position has the property of inhibiting multiple actions with equal expected utility, i.e. the actions up and right for the black agent.
For the Logit equilibrium, we use a temperature of $\tau = 5$.
We plot the probability of the black agent playing action up in the Nash- or Logit equilibrium.
Additionally, we compute the histogram of probabilities for this action over the training time.
Evidently, the value function of the neural network changes during training, such that the probability oscillates between zero and one for the Nash equilibrium.
However, the action probabilities of a Logit equilibrium for the same neural network are much more stable.
This is due to the entropy based smoothing, which prevents a strong oscillation.
We believe this to be an indication Logit equilibria are a better target for updating a neural network than Nash equilibria.

\begin{figure}[H]
\vskip 0.2in
\begin{center}
\centerline{\includegraphics[width=0.24\columnwidth]{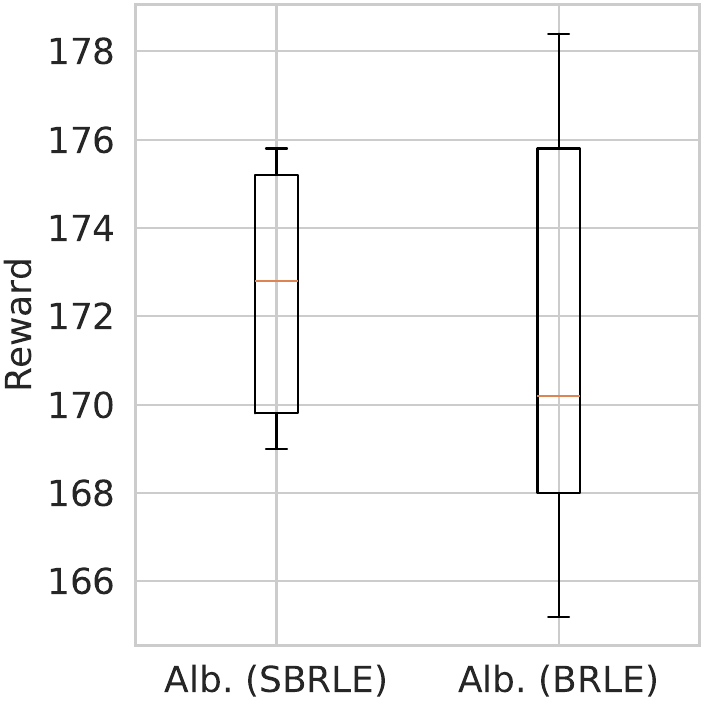}}
\caption{
Comparison of Albatross trained with SBRLE and BRLE with the behavior cloning agent in the Cramped Room layout.
Starting from the same proxy model, we trained response models using BRLE and SBRLE on five different seeds.
}
\label{fig:brle_oc_cr}
\end{center}
\end{figure}

This stability of Logit equilibria also translates to a greater stability of SBRLE than BRLE.
As an ablation study, we train the response model of Albatross using BRLE in the Cramped Room layout of Overcooked.
Event though there is only a small difference, Albatross with a SBRLE performs better than with BRLE and has less variation in the results.

\section{Maximization of Transformed Utilities with Shannon Entropy}
\label{sec:softmax_proof}
We present a mathematical proof that the softmax function maximizes smooth best responses with entropy regularization.
This is a well known fact in literature and easy to verify, but to the best of our knowledge the proof has never been published.

\begin{theorem}
\label{theorem:softmax_sbr}
The transformed utilities $\tilde{u}_i(\pi)$ defined as $\tilde{u}_i(\pi) = u_i(\pi) + \frac{1}{\tau} \psi(\pi_i)$ using Shannon entropy $\psi(\pi_i) = \sum_{a_i \in A_i} \pi_i(a_i) \log(\pi_i(a_i))$ as a smoothing function are maximized by the softmax function $\pi_i = \mathit{SBR}(\pi_{-i}, \tau) \propto \exp (\tau \, u_i(\,\cdot\,, \pi_{-i}))$.
\end{theorem}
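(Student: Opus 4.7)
The plan is to recast the claim as a constrained optimization over the probability simplex $\Delta_i$ and solve it via Lagrange multipliers. First I would observe that, for fixed $\pi_{-i}$, the expected utility $u_i(\pi_i, \pi_{-i}) = \sum_{a_i \in A_i} \pi_i(a_i)\, u_i(a_i, \pi_{-i})$ is affine in $\pi_i$, while the entropic regularizer $\tfrac{1}{\tau}\psi(\pi_i)$ is strictly concave in $\pi_i$ (under the sign convention making $\tilde u_i$ concave in $\pi_i$). Strict concavity together with compactness of $\Delta_i$ then yields the existence of a unique maximizer, so it suffices to exhibit a single stationary point of the Lagrangian.

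Next I would form the Lagrangian for the normalization constraint $\sum_{a_i} \pi_i(a_i) = 1$,
\[
\mathcal{L}(\pi_i, \lambda) \;=\; u_i(\pi_i, \pi_{-i}) + \tfrac{1}{\tau}\psi(\pi_i) - \lambda\Bigl(\sum_{a_i \in A_i} \pi_i(a_i) - 1\Bigr),
\]
and argue that the nonnegativity constraints $\pi_i(a_i) \geq 0$ are never active, because the derivative of $\pi_i(a_i)\log\pi_i(a_i)$ diverges as $\pi_i(a_i)\to 0^+$. This rules out boundary optima and reduces the KKT system to one stationarity equation per action together with the single equality constraint.

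Then I would set $\partial \mathcal{L}/\partial \pi_i(a_i) = 0$ for every $a_i \in A_i$, yielding an equation of the form $u_i(a_i, \pi_{-i}) = \lambda \mp \tfrac{1}{\tau}\bigl(1 + \log \pi_i(a_i)\bigr)$. Solving for $\pi_i(a_i)$ gives $\pi_i(a_i) \propto \exp\bigl(\tau\, u_i(a_i, \pi_{-i})\bigr)$, and the normalization constraint pins down the proportionality constant as $\sum_{a_i' \in A_i} \exp\bigl(\tau\, u_i(a_i', \pi_{-i})\bigr)$. This is precisely the softmax $\mathit{SBR}(\pi_{-i}, \tau)$ in the statement.

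I do not anticipate a genuine obstacle here; the computation is a standard maximum-entropy exercise. The only technical care needed is to fix the sign convention on $\psi$ so that the regularized objective is concave (and hence attains its maximum on $\Delta_i$), and to invoke strict concavity once so that the interior stationary point is certified as the unique global maximizer rather than a saddle or minimizer.
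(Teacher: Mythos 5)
Your proposal is correct and takes essentially the same route as the paper's proof: a Lagrange-multiplier computation with the simplex normalization constraint, whose stationarity condition yields $\pi_i(a_i) \propto \exp\big(\tau\, u_i(a_i, \pi_{-i})\big)$. The only difference is that you additionally spell out the concavity, the interiority of the optimum (inactive nonnegativity constraints), and uniqueness, which the paper's proof leaves implicit.
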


\begin{proof}
We define the theorem as a maximization problem and solve it with the method of Lagrange multipliers.
The objective function $f$ is 
\begin{align*}
    f(\pi_i) &= u_i(\pi_i, \pi_{-i}) + \frac{1}{\tau} \psi(\pi_i) \\
           &= \Bigg[\sum_{a_i \in A_i} \pi_i(a_i)\, u_i(a_i, \pi_{-i}) \Bigg] - \frac{1}{\tau} \sum_{a_i \in A_i} \pi_i(a_i) \log(\pi_i(a_i)),
\end{align*}
where the policy $\pi_{-i}$ of the other agents is fixed.
We use $u_i(a_i, \pi_{-i})$ as a shorthand for $u_i(\hat{\pi}_{a_i}, \pi_{-i})$ with $\hat{\pi}_{a_i}$ being the policy that assigns probability one to action $a_i$ and zero otherwise.
The policy $\pi_i$ has to be a valid probability distribution, which yields the constraint
\begin{align*}
    g(\pi_i) = -1 + \sum_{a_i \in A_i} \pi_i(a_i) = 0.
\end{align*}
The resulting Lagrangian function is defined as $L(\pi_i, \lambda) = f(\pi_i) + \lambda g(\pi_i)$.
Setting the derivative of the Lagrangian function $L$ with regards to the policy of an action $\pi_i(a_i)$ to zero results in
\begin{align*}
     0 &= \frac{\partial L}{\partial\, \pi_i(a_i)} \\ 
     &= \frac{\partial f}{\partial\, \pi_i(a_i)} + \lambda \frac{\partial g}{\partial\, \pi_i(a_i)} \\
      &= u_i(a_i, \pi_{-i}) - \frac{1}{\tau} \Big(1 + \log\big(\pi_i(a_i)\big) \Big) - \lambda.
\end{align*}
We can directly solve this equation for the policy $\pi_i(a_i)$, which yields
\begin{align*}
\pi_i(a_i) &= \exp \big( \tau\, u_i(a_i) - \tau \lambda - 1\big)  \\
                       &= C \,\exp \big(\tau\, u_i(a_i, \pi_{-i}) \big).
\end{align*}
Therefore, it holds that $\pi_i(a_i) \propto \exp \big(\tau \, u_i(a_i, \pi_{-i}) \big)$, since $C = \exp( -\tau \lambda - 1)$ is a constant.
\end{proof}

\section{Concavity of Likelihood Function}
\label{subsec:concavity_proof}
We prove the concavity of the likelihood function by showing that the second derivative is always less or equal to zero.
For an alternative proof, we refer to the work of McFadden \yrcite{McFa73}.

\begin{theorem}
\label{theorem:mle_concave}
Given $K$ observations of actions $(a_i^1, \ldots, a_i^K)$ and corresponding optimal policies $(\pi_{-j}^1, \ldots, \pi_{-j}^K)$ of the other agents, the log-likelihood function $l$ for the temperature $\tau$ of agent $i$ is concave.
\end{theorem}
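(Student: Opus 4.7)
The plan is to reduce the proof to the standard convexity of the log-sum-exp function. Since $l(\tau_j)$ is written as a sum over $K$ observations, and since a sum of concave functions is concave, it suffices to show that each summand is concave in $\tau_j$. For a fixed observation index $k$, the first term $\tau_j u_j^k(a_j^k, \pi_{-j}^k)$ is linear in $\tau_j$ and therefore trivially concave. The remaining task is to show that $\tau_j \mapsto -\ln \sum_{a_j \in A_j} \exp\!\bigl(\tau_j\, u_j^k(a_j, \pi_{-j}^k)\bigr)$ is concave, equivalently that its negation is convex.

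For notational cleanliness I would fix $k$ and write $b_a := u_j^k(a_j, \pi_{-j}^k)$, turning the claim into the assertion that $\varphi(\tau) := \ln \sum_{a} \exp(\tau b_a)$ is convex in $\tau$. I would then simply compute the first two derivatives. Defining the tilted distribution $p_\tau(a) := \exp(\tau b_a) / \sum_{a'} \exp(\tau b_{a'})$, a direct calculation gives
\begin{align*}
\varphi'(\tau) &= \sum_{a} b_a\, p_\tau(a) \;=\; \mathbb{E}_{p_\tau}[B], \\
\varphi''(\tau) &= \sum_{a} b_a^2\, p_\tau(a) - \Bigl(\sum_{a} b_a\, p_\tau(a)\Bigr)^2 \;=\; \mathrm{Var}_{p_\tau}(B).
\end{align*}
Since a variance is always non-negative, $\varphi''(\tau) \ge 0$ for every $\tau$, so $\varphi$ is convex and $-\varphi$ is concave.

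Combining these observations, every summand in $l(\tau_j)$ is concave, hence $l(\tau_j)$ is concave as a finite sum of concave functions. The only step that requires any care is recognizing the second derivative of the log-partition function as a variance, which is the standard exponential-family identity; beyond that, the argument is a bookkeeping exercise. I would also remark that this immediately justifies the line search described in the main text: concavity of $l$ means $\partial l/\partial \tau_j$ is monotonically decreasing, so its unique root (if it exists in the interior of the feasible interval) is the global maximizer, and otherwise the maximizer lies at the boundary.
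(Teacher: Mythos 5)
Your proof is correct and rests on the same core fact as the paper's: the second derivative of each log-partition term is a (nonnegative) variance of the utilities under the tilted softmax distribution, so the log-likelihood is concave. The paper establishes this nonnegativity by explicitly expanding the double sum and symmetrizing it into a sum of squared utility differences, whereas you invoke the standard log-sum-exp/variance identity directly; this is only a cleaner packaging of the identical computation, not a different route.
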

\begin{proof}
The second derivative of the likelihood function is:

\begin{align*}
    \frac{\partial^2 l}{\partial^2 \tau} &= - \sum_{k=1}^K \Bigg[
    \frac{
    \Big(\sum_{a_i\in A_i}  \big(u^k_i(a_i, \pi_{-i}^k)\big)^2 \exp\big(\tau\, u^k_i(a_i, \pi_{-i}^k)\big)\Big)
    \Big(\sum_{a_i\in A_i} \exp\big(\tau\, u^k_i(a_i, \pi_{-i}^k)\big)\Big)
    }{
    \Big(\sum_{a_i\in A_i} \exp\big(\tau\, u^k_i(a_i, \pi_{-i}^k)\big)\Big)^2
    } \\ &\qquad - \frac{
    \Big(\sum_{a_i\in A_i} u^k_i(a_i, \pi_{-i}^k) \exp\big(\tau\, u^k_i(a_i, \pi_{-i}^k)\big)\Big)^2
    }{
    \Big(\sum_{a_i\in A_i} \exp\big(\tau\, u^k_i(a_i, \pi_{-i}^k)\big)\Big)^2
    }
    \Bigg].
\end{align*}

We continue to show that the term in square brackets is greater equal zero for all $k \in \{1, \ldots, K\}$.
The denominator of the term is always positive and can therefore be ignored in our analysis.

\begin{align*}
    0 &\leq \Big(\sum_{a_i\in A_i}  \big(u^k_i(a_i, \pi_{-i}^k)\big)^2 \exp\big(\tau\, u^k_i(a_i, \pi_{-i}^k)\big)\Big) \Big(\sum_{a_i\in A_i} \exp\big(\tau\, u^k_i(a_i, \pi_{-i}^k)\big)\Big) \\ 
    &\qquad - \Big(\sum_{a_i\in A_i} u^k_i(a_i, \pi_{-i}^k) \exp\big(\tau\, u^k_i(a_i, \pi_{-i}^k)\big)\Big)^2 \\
    \llap{$\Leftrightarrow$ \qquad} 0 &\leq \sum_{a_i\in A_i}\sum_{b \in A_i} \big(u^k_i(a_i, \pi_{-i}^k)\big)^2 \exp\big(\tau\, u^k_i(a_i, \pi_{-i}^k)\big) \exp\big(\tau\, u^k_i(b_i, \pi_{-i}^k)\big) \\
    &\qquad - \sum_{a_i\in A_i}\sum_{b_i\in A_i} u^k_i(a_i, \pi_{-i}^k) u^k_i(b_i, \pi_{-i}^k) \exp\big(\tau\, u^k_i(a_i, \pi_{-i}^k)\big) \exp\big(\tau\, u^k_i(b_i, \pi_{-i}^k)\big) \\
    \llap{$\Leftrightarrow$ \qquad}0 &\leq \sum_{a_i\in A_i}\sum_{b_i\in A_i} \big(u^k_i(a_i, \pi_{-i}^k)\big)^2 - u^k_i(a_i, \pi_{-i}^k)\, u^k_i(b_i, \pi_{-i}^k) \\
    \llap{$\Leftrightarrow$ \qquad}0 &\leq 2 \sum_{a_i\in A_i}\sum_{b_i\in A_i} \big(u^k_i(a_i, \pi_{-i}^k)\big)^2 - u^k_i(a_i, \pi_{-i}^k)\, u^k_i(b_i, \pi_{-i}^k) \\
    \llap{$\Leftrightarrow$ \qquad}0 &\leq \sum_{a_i\in A_i}\sum_{b_i\in A_i} \big(u^k_i(a_i, \pi_{-i}^k)\big)^2 + \big(u^k_i(b_i, \pi_{-i}^k)\big)^2 - 2\, u^k_i(a_i, \pi_{-i}^k)\, u^k_i(b_i, \pi_{-i}^k) \\
    \llap{$\Leftrightarrow$ \qquad}0 &\leq \sum_{a_i\in A_i}\sum_{b_i\in A_i} \big( u^k_i(a_i, \pi_{-i}^k) - u^k_i(b_i, \pi_{-i}^k) \big)^2.
\end{align*}
As all terms of the sum are greater equal zero, the sum itself is greater equal zero.
Therefore, the second derivative of the log-likelihood function is negative.
Consequently, the log-likelihood function is concave.
\end{proof}

The line search algorithm for computing the maximum likelihood for the temperature of weak agent $j$ is displayed in \cref{alg:mle}.

\begin{algorithm}[ht]
   \caption{Maximum Likelihood Estimation of Temperature $\tau_j$}
   \label{alg:mle}
\begin{algorithmic}
    \STATE {\bfseries Input:} Observations $(a_j^1, \ldots, a_j^K)$, ground truth policies $(\pi_{-j}^1, \ldots, \pi_{-j}^K)$, valid temperature interval $[\tau_{\mathit{max}}, \tau_{\mathit{min}}]$, iteration number $m$
    \FOR{$i=0$ {\bfseries to} $m-1$}
        \STATE $\tau \gets \frac{1}{2}(\tau_{\mathit{max}} + \tau_{\mathit{min}})$
        \IF{$\frac{\partial l}{\partial \tau} > 0$}
            \STATE $\tau_{\mathit{min}} \gets \tau$
        \ELSE
            \STATE $\tau_{\mathit{max}} \gets \tau$
        \ENDIF
    \ENDFOR
    \STATE {\bfseries Output:} $\tau_j = \frac{1}{2}(\tau_{\mathit{max}} + \tau_{\mathit{min}})$
\end{algorithmic}
\end{algorithm}

The algorithm iteratively reduces the interval of possible temperature estimations by computing the sign of the gradient at the midpoint of the interval.
If the gradient is positive, the maximum likelihood estimate has to be in the upper half of the interval, otherwise the lower half.

For \cref{alg:mle}, we assume that optimal ground truth policies $(\pi_{-j}^1, \ldots, \pi_{-j}^K)$ for agents $-j$ are given.
If there exists a unique Nash equilibrium, then the optimal policies are well defined as the equilibrium policies.
However, computing a Nash equilibrium in games with many agents is difficult and most games do not have a unique Nash equilibrium.
As a solution, one can substitute the ground truth policies with the policy of the Logit equilibrium of temperature $\tau_{\mathit{max}}$ learned by the proxy model, i.e $\pi_{\theta_{P}}(o_i, \tau_{\mathit{max}})$.
Consequently, the bounded rationality of the weak agents is no longer modeled according to globally optimal play, but rather relative to the rationality acquired during training.
Additionally, this implicitly solves the equilibrium selection process, because rationality is modeled relative to the learned equilibrium during training.

\section{The Game of Overcooked}
\label{sec:overcooked}

Cooperation performance in the game of Overcooked is evaluated in five different layouts, introduced by Carrol et al. \yrcite{overcooked}.
The layouts are displayed in \cref{fig:oc_layouts}.
In the cramped room layout, agents have little space to move around and need to focus on good movement coordination. 
In the asymmetric advantage layout, one agent has a much shorter path between onion dispenser and cooking pot, while the other agent has a shorter path between serving location and cooking pot. 
Therefore, agents should distribute tasks in a way that takes into account these advantages. 
The distribution of tasks can be seen as high-level coordination, while the coordination of movement in the cramped room layout is low-level coordination. 
In the coordination ring layout, agents have a single circle of free squares available. 
They have to coordinate the direction of movement, such that they do no block each others way. 
In the forced coordination layout, only one agent has access to onion and dish dispenser, while the other agent has access to cooking pots and serving locations. 
Consequently, the first agent has to pass onions and dishes over a counter to the second agent. 
This makes training much more difficult, because the first agent only receives a reward signal if the second agent is sufficiently trained to use the tools or ingredients received. 
But, to train the second agent, the first agent needs to pass them ingredients and tools, which rarely happens through random play. 
In the counter circuit layout, the agents have a single circuit available for movement, similar to the coordination ring layout. 
However, the circle is longer, which forces the agents to pass onions over the middle counter to achieve a perfect score. 
All layouts have a fixed starting position, which are displayed in \cref{fig:oc_layouts}.
For evaluation, we swap starting positions every other episode.

\begin{figure*}[!ht]
\begin{center}
    \subfigure[Cramped Room]{%
        \includegraphics[height=0.13\textwidth]{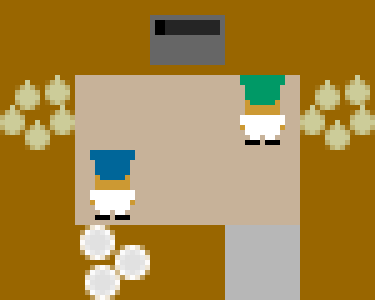}
    }
    \subfigure[Asymmetric Advantage]{%
        \includegraphics[height=0.13\textwidth]{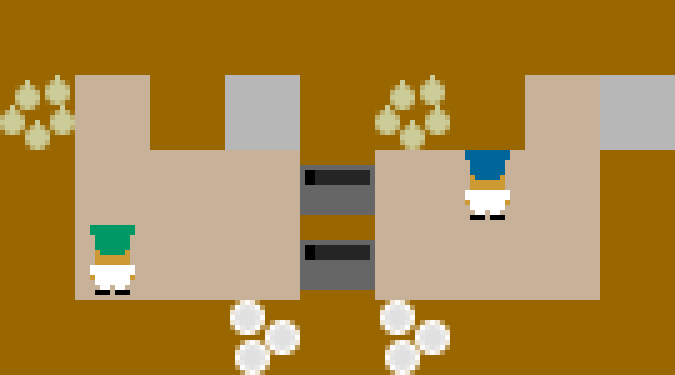}
    }
    \subfigure[Coord. Ring]{%
        \includegraphics[height=0.13\textwidth]{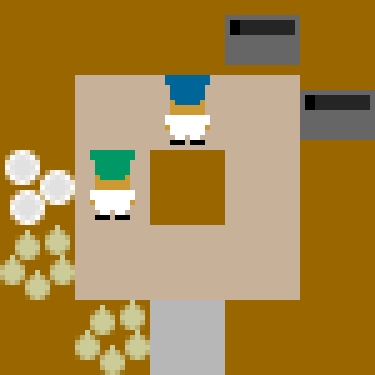}
    }
    \subfigure[Forced Coord.]{%
        \includegraphics[height=0.13\textwidth]{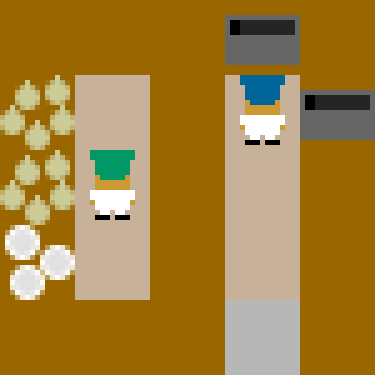}
    }
    \subfigure[Counter Circuit]{%
        \includegraphics[height=0.13\textwidth]{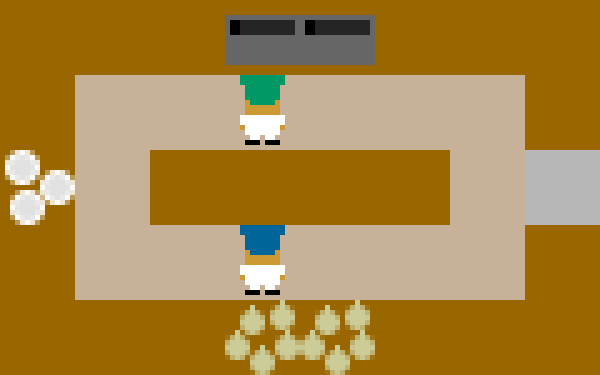}
    }
    \caption{
    Different kitchen layouts in the cooperative game of Overcooked as introduced by Carrol et al. \yrcite{overcooked}.
    The Cramped Room and Coordination Ring layout test low-level coordination since the agents should not block each other.
    In the Asymmetric Advantage and Counter Circuit layout high-level coordination, i.e. distribution of tasks, is tested.
    In the Forced Coordination layout, cooperation performance is limited by the weakest of the two agents since both need to perform distinct tasks.
    }
    \label{fig:oc_layouts}
\end{center}
\end{figure*}

In Overcooked, multiple implementations or versions with vastly different levels of difficulty exists.
For example, as of January 2024, the newest version of the implementation of Carrol et al. \yrcite{overcooked} allows agents to start cooking a soup even if not all ingredients are present in the soup.
This produces a soup, which does not yield any rewards when serving.
However, previous versions did not allow this behavior, which made solving the environment vastly easier.
Additionally, very early versions of Overcooked had an option to use a dense shaped reward based on the distance to pots or dispensers.
Newer versions still allow to specify these rewards, but this specification is internally ignored.
Since these versions developed over time, it is difficult to determine which version was used in the experiments of different researchers.
For our experiments, we opted to use the game dynamics used by Lou et al. \yrcite{pecan} as their training scheme PECAN was previously state of the art in Overcooked.
That is, we disallowed the premature cooking of a soup with missing ingredients and did not use any distance-based rewards.
To increase the training speed, we reimplemented the game of Overcooked in C++, which also provides an environment with fixed game dynamics for reproducibility in future work.

Another challenge when comparing results is the usage of a behavior cloning agent as a proxy for human behavior.
These agents are trained on a small dataset of human play.
However, we hypothesize that during recording the human actions, the time between environment steps was too short for the human contestants to react.
That is, because most actions (58\%) in the human dataset are the "do nothing" action.
Consequently, the trained behavior cloning agents exhibit a behavior of staying at a single point most of the time.
In literature, different workarounds for this issue were used.
Carrol et al. \yrcite{overcooked} used the agents as is, but implemented a dynamic into the game of Overcooked which forces agents to take a random action if they have been stuck at one position for more than three time steps.
This dynamic intends to free agents if they are stuck at a position and block each other.
In addition to this dynamic, Lou et al. \yrcite{pecan} filtered the probability distribution predicted by the trained agent, completely removing the stay action.
In our implementation, we again used the same setup as Lou et al. \yrcite{pecan}.
To facilitate reproduction of our results, we publish our trained human behavior cloning agents.

\section{Hyperparameters}
\label{sec:hyperparams}

To support reproducibility of our results, we report all hyperparameters used in our experiments.
In this section, we list common hyperparameters used across all experiments.
Hyperparameters, which differ between experiments are listed in \cref{tbl:hyperparams}.
For gradient updates, we use the AdamW optimizer \cite{adamw} with a weight decay factor of $1\mathrm{e}{-5}$.
We anneal the learning rate using a cosine decay from $1\mathrm{e}{-3}$ to $1\mathrm{e}{-6}$.
The loss function consists of a value and policy term, which are summed.
The value term is the mean squared error between prediction and target, while the policy term is the cross entropy between predicted and target policy.
For the neural network, we use the MobileNetV3 architecture \cite{Howard2019SearchingFM}.
During backup of tree search, a Logit equilibrium is solved at every non-leaf node.
To solve the Logit equilibrium, we perform 150 iterations of Stochastic Fictitious Play \cite{global_convergence} with the step size annealing of Nagurney and Zhang \yrcite{nagurney}, as discussed in \cref{subsec:le_details}.
During training, we balance exploration and exploitation using Boltzmann-Exploration \cite{CesaBianchi2017BoltzmannED} with a probability of $0.5$ or sample from the LE-policy at the root node with equal probability of $0.5$.
In all experiments, we trained our models on five seeds.
Unless specified otherwise, we used Nvidia RTX3090 GPU and 14 Intel Xeon Gold 6258R CPU for each GPU.
Those numbers were chosen to optimally saturate the compute cluster used.

\begin{table}[ht]
\caption{Hyperparameters of Albatross in Overcooked and the different game modes of Battlesnake. Only hyperparameters that differ between modes are listed here.}
\vskip 0.15in
\begin{center}
\begin{small}
\begin{sc}
\begin{tabular}{lccccc}
\toprule
Hyperparameter & Overcooked & Tron & stoch. 2p. BS & stoch. 4p. BS & 4p. Coop. Tron \\
\midrule
Buffer Size             & $5\mathrm{e}{5}$  & $1\mathrm{e}{5}$  &$2\mathrm{e}{6}$& $2\mathrm{e}{6}$ & $2\mathrm{e}{6}$\\
Number GPU (RTX 3090)   & 2  & 3 & 3 & 2 & 1 \\
Search Depth            & 1  & 3 & 3 & 1 & 1\\
Total Training Time     & 48h  & 24h & 96h & 48h & 24h \\
Discount Factor         & 0.9  & 0.99 & 0.99 & 0.99 & 0.97 \\
Batch Size              & 15000  & 2000 & 2000 & 2000 & 12000 \\
\bottomrule
\end{tabular}
\end{sc}
\end{small}
\end{center}
\vskip -0.1in
\label{tbl:hyperparams}
\end{table}

\section{Additional Analysis of Albatross}
We perform additional experiments to analyze the behavior and hyperparameters of Albatross.
Firstly, we test the effect of different temperature distributions during training.
Then, we perform additional analysis on the effect of the temperature when cooperating with different opponents.

\subsection{Temperature Training Distribution}
\label{subsec:temp_dist}
During training, we randomly sample a temperature from a fixed interval $[\tau_{\mathit{min}}, \tau_{\mathit{max}}]$.
This sampling has the effect that the entire range of temperatures are learned, as well as all combinations of different temperatures.
However, when using a neural network for function approximation, its predictions may degrade at the boundary of the input space, i.e. the interval of temperatures.
Therefore, we bias the random sampling toward the upper and lower boundary of the interval of valid temperatures.
Specifically, we use a cosine function to sample more often near the boundaries.

\begin{figure*}[!ht]
    \begin{center}
        \subfigure[Policy Function]{%
            \includegraphics[width=0.3\textwidth]{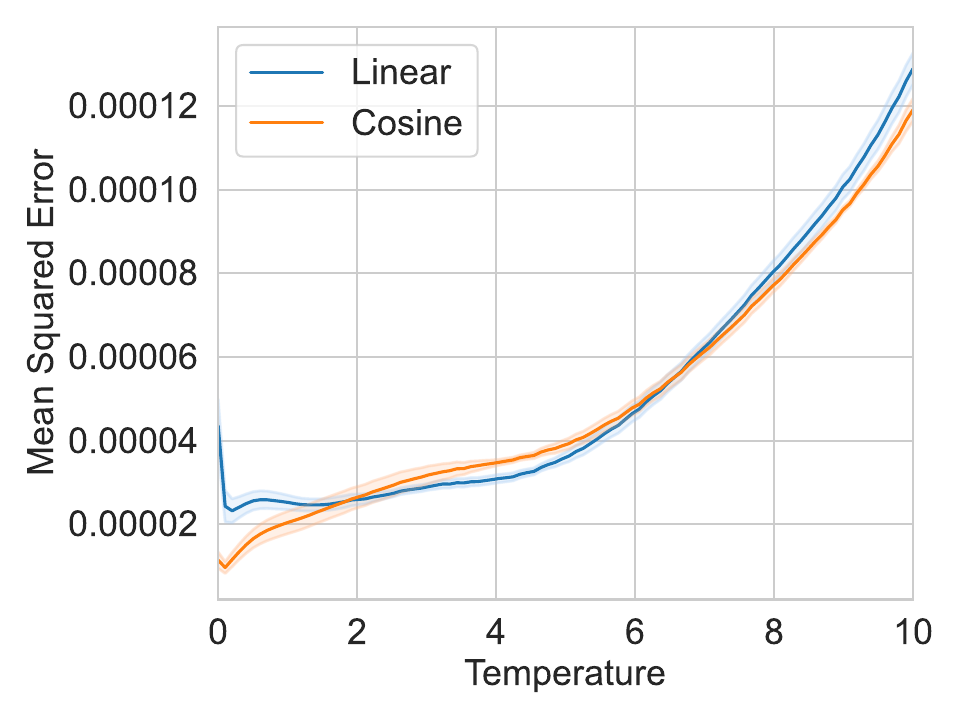}
        }
        \subfigure[Value Function]{%
            \includegraphics[width=0.3\textwidth]{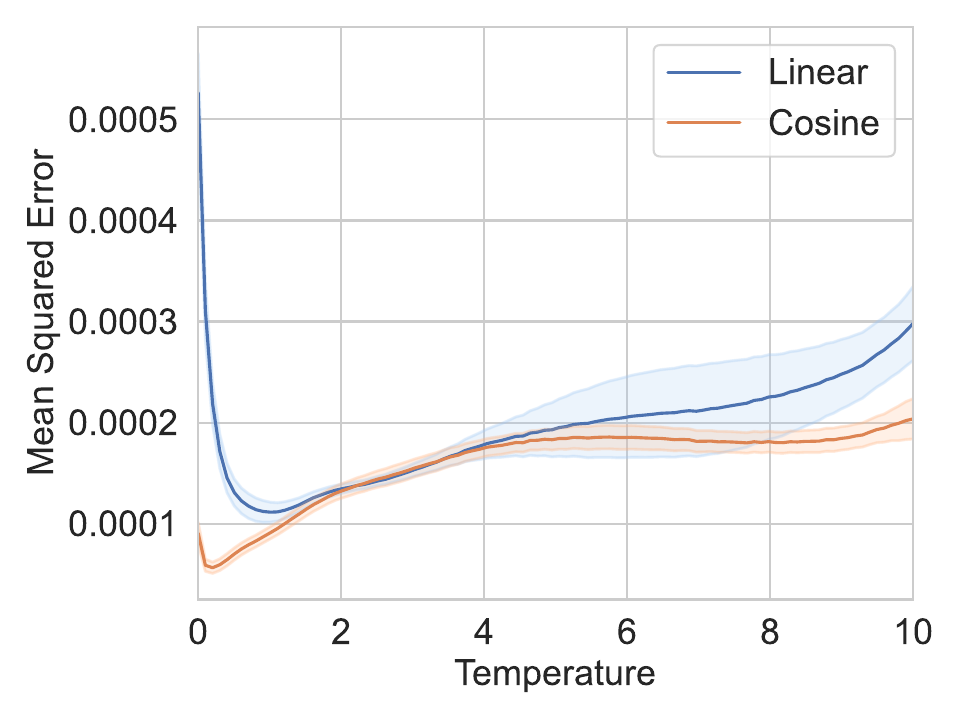}
        }
        \caption{
        Mean squared error of the policy and value function of the proxy model in deterministic 2-player Battlesnake with a board size of $5\times5$.
        Since the game is zero-sum, it has a unique Logit equilibrium for all temperatures, which can be computed exactly due to the small state space.
        }
        \label{fig:d5_proxy_error}
    \end{center}
\end{figure*}

We compare the cosine based sampling against the uniform (linear) sampling in the deterministic zero-sum Battlesnake game mode with a board of size $5\times5$.
Using this smaller board size, it is possible to compute the ground truth Logit equilibrium.
This ground truth Logit equilibrium is unique for all temperatures, since the game is a two-player zero-sum game.
In \cref{fig:d5_proxy_error}, the mean squared error of the policy and value function is displayed.
For most parts of the temperature input space, the cosine sampling produces a lower error than the linear sampling.
Especially for small temperatures near zero, the linear sampling produces larger error spikes than cosine sampling.
Consequently, we use the cosine function in all of our experiments.

\subsection{Temperature Estimation}

Following the experiments presented in \cref{sec:eval}, we present the full experimental data on all layouts of Overcooked and in all game modes of Battlesnake.

\begin{figure*}[!ht]
\begin{center}
    \subfigure[Cramped Room]{%
        \includegraphics[width=0.19\textwidth]{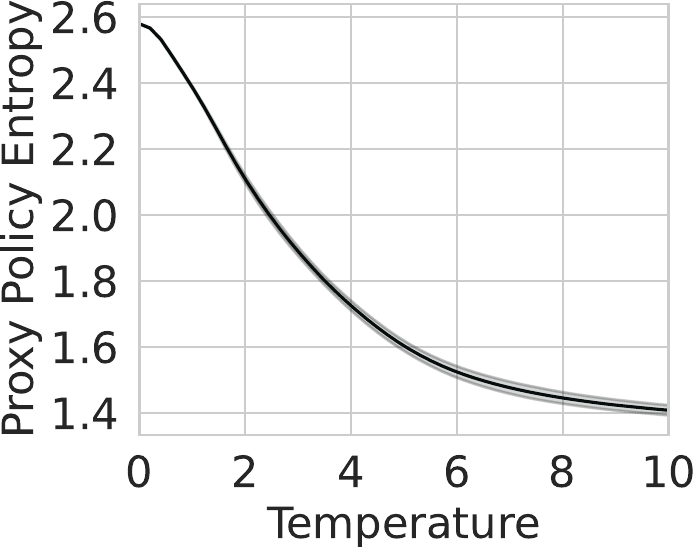}
    }
    \subfigure[Asym. Advantage]{%
        \includegraphics[width=0.19\textwidth]{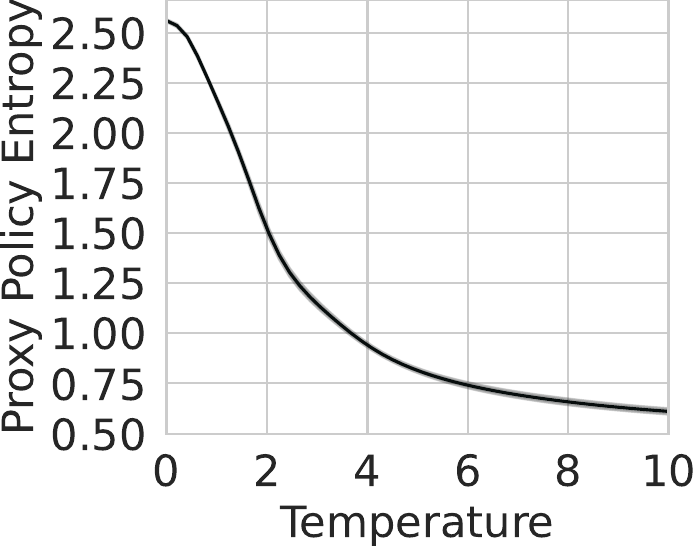}
    }
    \subfigure[Coord. Ring]{%
        \includegraphics[width=0.19\textwidth]{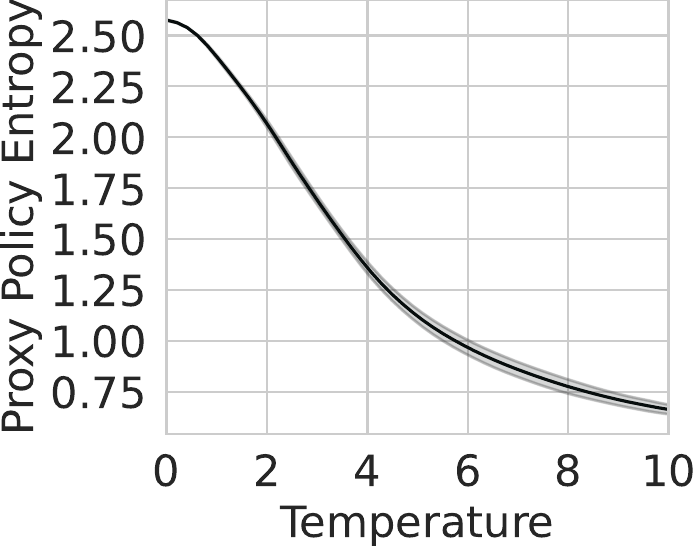}
    }
    \subfigure[Forced Coord.]{%
        \includegraphics[width=0.19\textwidth]{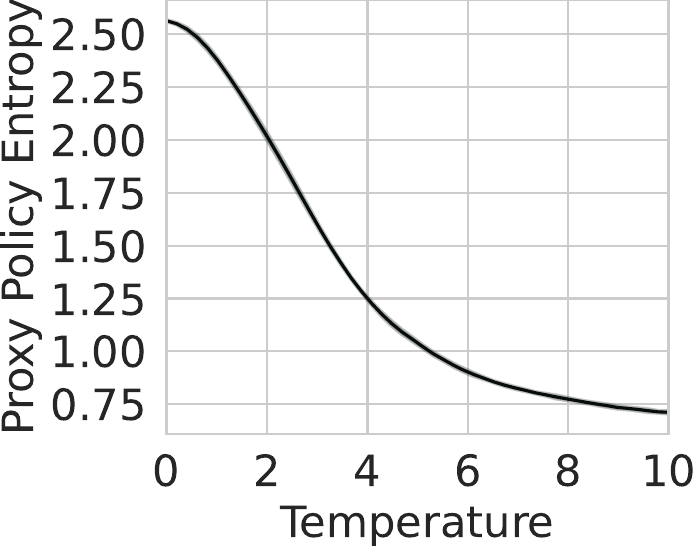}
    }
    \subfigure[Counter Circuit]{%
        \includegraphics[width=0.19\textwidth]{img/proxy_entropy_notitle_cc.pdf}
    }
    \caption{
    Entropy of the proxy model policy at different temperatures for all five layouts in the game of Overcooked.
    }
    \label{fig:all_proxy_entropy}
\end{center}
\end{figure*}

In \cref{fig:all_proxy_entropy}, the entropy of the proxy policy at different temperatures in displayed for the different layouts in Overcooked.
In all layouts, for temperature $\tau = 0$, the entropy starts at about $\log_2(6) = 2.58$, which is the entropy of a uniformly random policy with six discrete actions.
The entropy decreases at higher temperatures and converges to the entropy of the learned Logit equilibrium at highest temperature $\tau = 10$, which may be different for every layout.

\begin{figure*}[!ht]
\begin{center}
    \subfigure[Cramped Room]{%
        \includegraphics[width=0.19\textwidth]{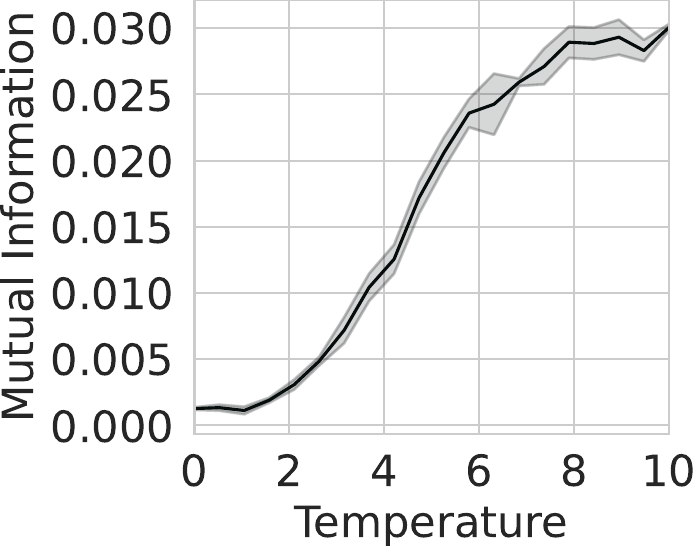}
    }
    \subfigure[Asym. Advantage]{%
        \includegraphics[width=0.19\textwidth]{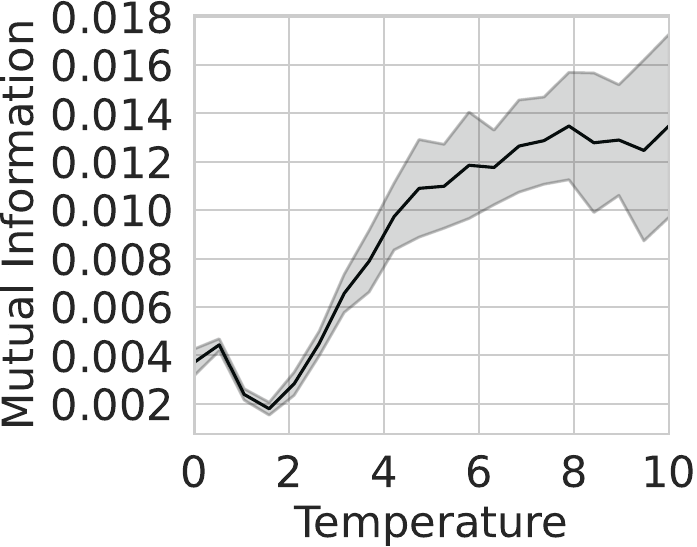}
    }
    \subfigure[Coord. Ring]{%
        \includegraphics[width=0.19\textwidth]{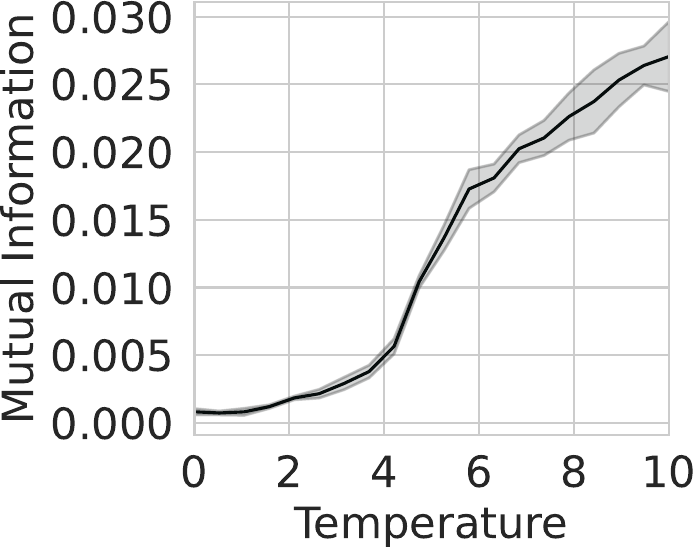}
    }
    \subfigure[Forced Coord.]{%
        \includegraphics[width=0.19\textwidth]{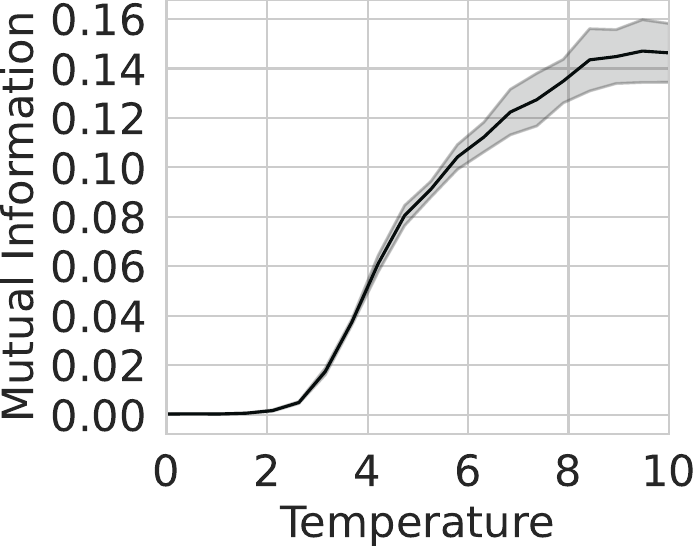}
    }
    \subfigure[Counter Circuit]{%
        \includegraphics[width=0.19\textwidth]{img/mi_resp_proxy_notitle_cc.pdf}
    }
    \caption{
    Mutual information between Albatross response and proxy model at different temperatures for all five layouts in the game of Overcooked.
    }
    \label{fig:mi_alb_proxy}
\end{center}
\end{figure*}

In \cref{fig:mi_alb_proxy}, the mutual information between proxy and response model at different temperatures are displayed.
For a temperature of $\tau = 0$, the mutual information in all game modes is close to zero, indicating that the agents do not cooperate.
This is expected, since the proxy model at this temperature only plays a uniformly random policy.
At high temperatures, the mutual information between the agents rises as they start to cooperate.
The mutual information can also be seen as a measure of trust from the response model, that the proxy model fulfills their expected task in cooperation.

\begin{figure*}[!ht]
\begin{center}
    \subfigure[Cramped Room]{%
        \includegraphics[width=0.19\textwidth]{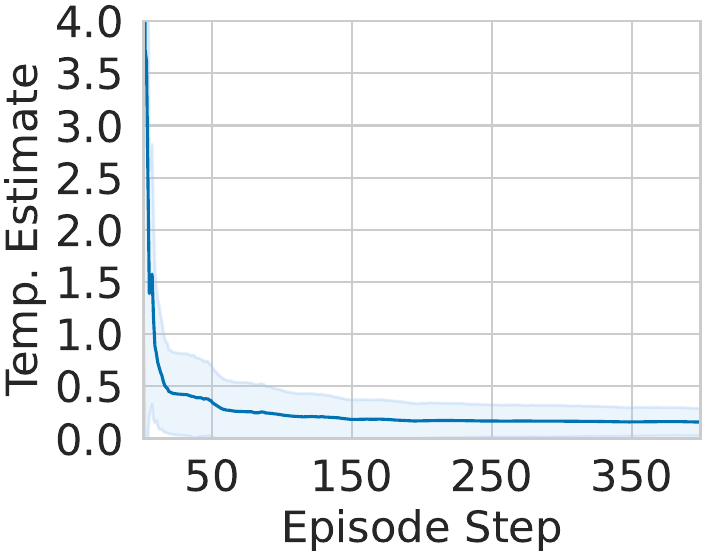}
    }
    \subfigure[Asym. Advantage]{%
        \includegraphics[width=0.19\textwidth]{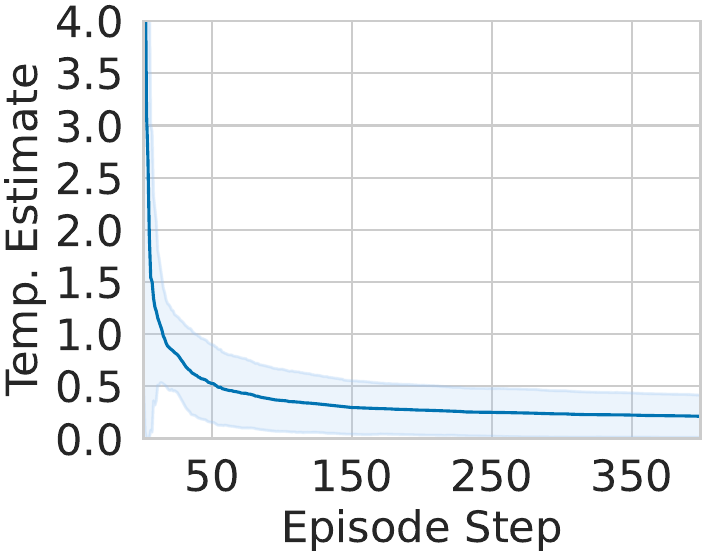}
    }
    \subfigure[Coord. Ring]{%
        \includegraphics[width=0.19\textwidth]{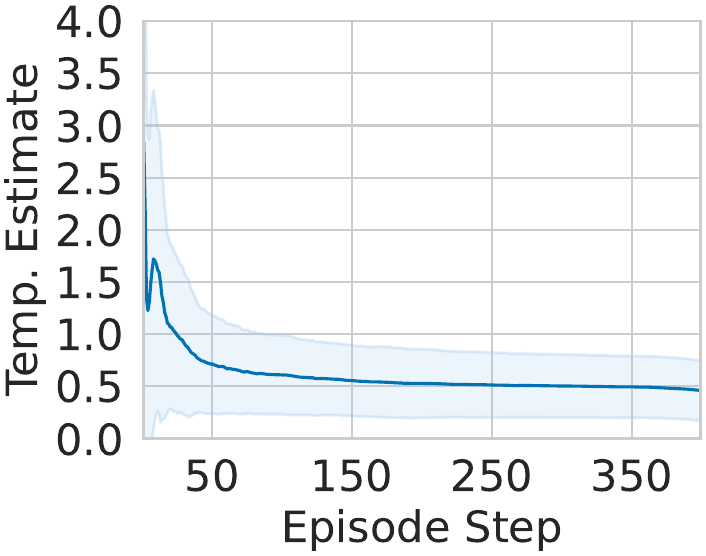}
    }
    \subfigure[Forced Coord.]{%
        \includegraphics[width=0.19\textwidth]{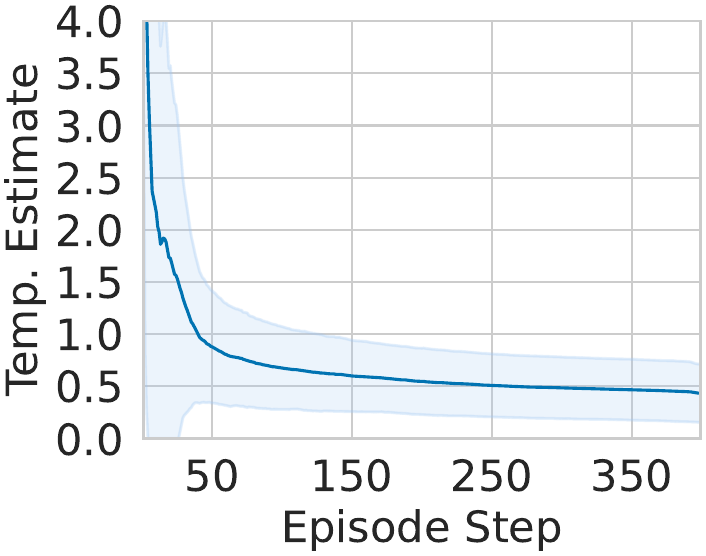}
    }
    \subfigure[Counter Circuit]{%
        \includegraphics[width=0.19\textwidth]{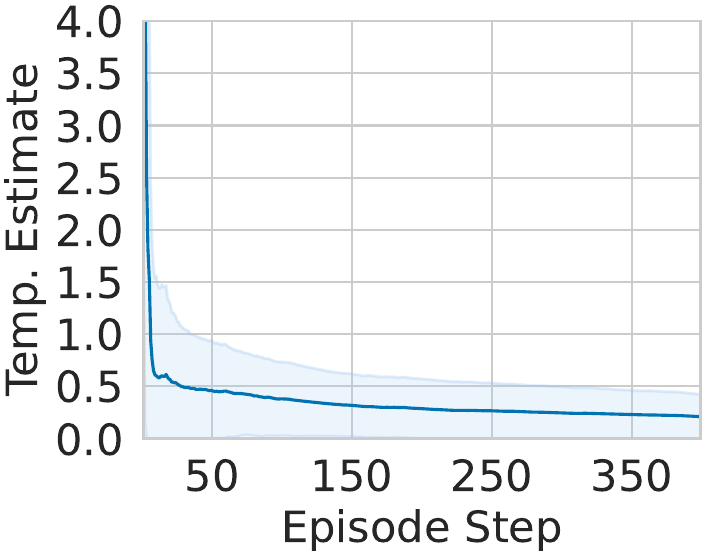}
    }
    \caption{
    Temperature estimation of Albatross for the human behavior cloning agent per time step in the different layouts of Overcooked.
    }
    \label{fig:bc_strength_all}
\end{center}
\end{figure*}

In \cref{fig:bc_strength_all}, we display the result of the Maximum Likelihood estimation of Albatross when playing the human behavior cloning agent.
For details regarding the MLE, we refer to \cref{alg:mle}.
In all layouts, after only a few episode steps, the temperature estimation converges toward the true temperature.
This indicates, that it is possible to estimate the rationality of an agent within a single episode of Overcooked.
However, as discussed in \cref{sec:limits}, we see that the MLE needs between 10 and 30 episode steps to reach a stable estimation, which may not be feasible in very short games.

\begin{figure*}[!ht]
\begin{center}
    \subfigure[Cramped Room]{%
        \includegraphics[width=0.19\textwidth]{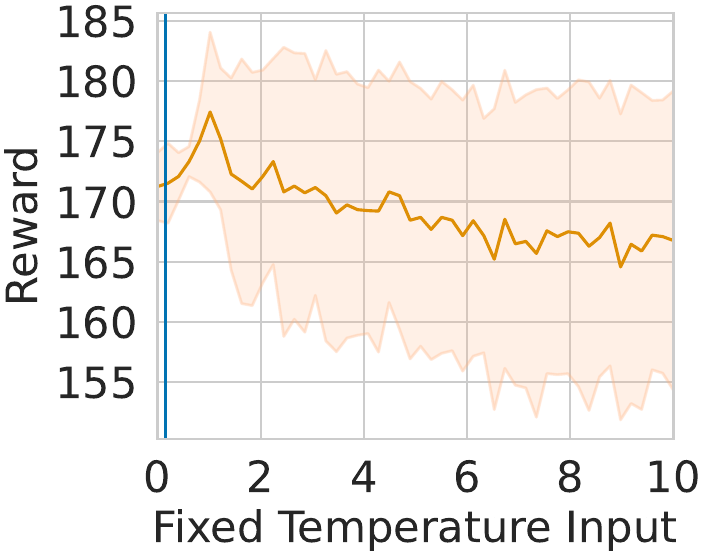}
    }
    \subfigure[Asym. Advantage]{%
        \includegraphics[width=0.19\textwidth]{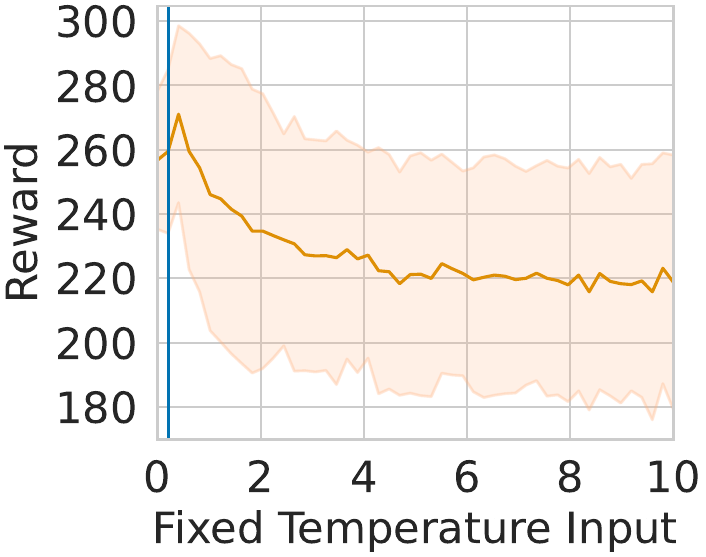}
    }
    \subfigure[Coord. Ring]{%
        \includegraphics[width=0.19\textwidth]{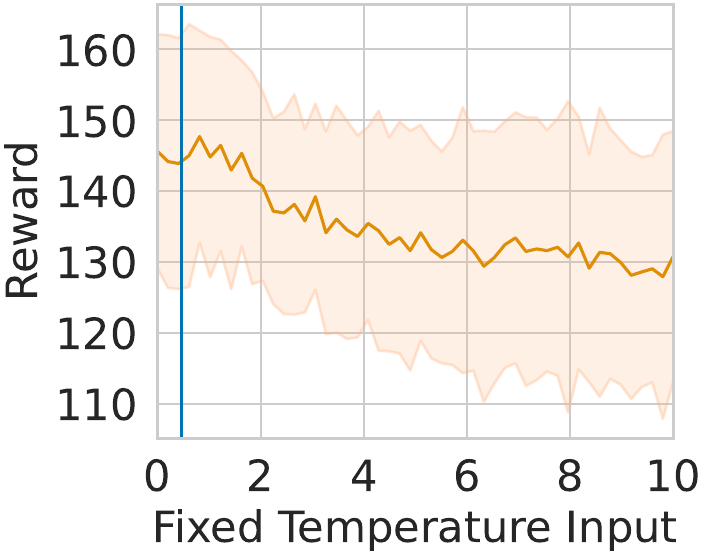}
    }
    \subfigure[Forced Coord.]{%
        \includegraphics[width=0.19\textwidth]{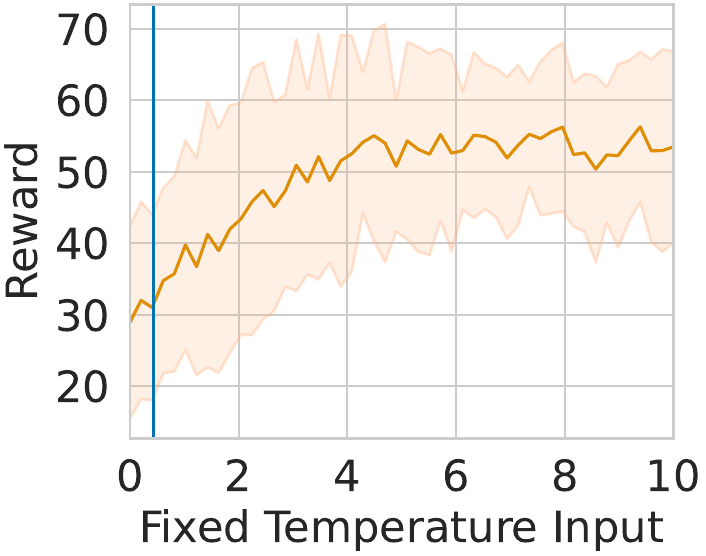}
    }
    \subfigure[Counter Circuit]{%
        \includegraphics[width=0.19\textwidth]{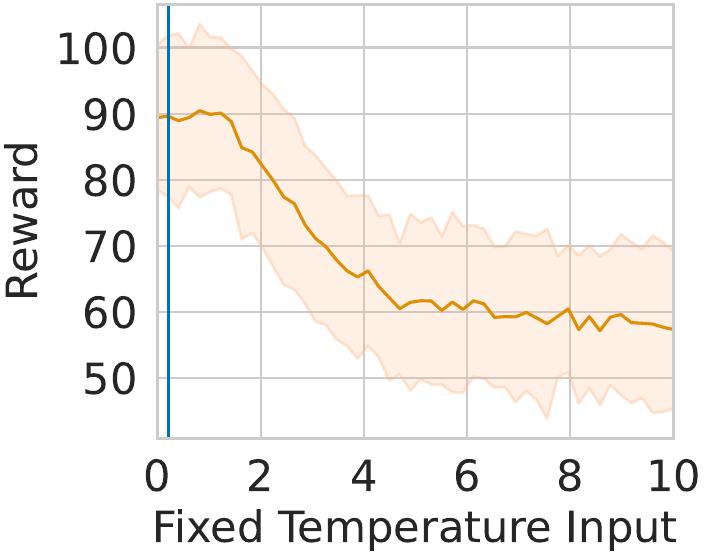}
    }
    \caption{
    Analysis of the robustness of Albatross to a wrong temperature estimation in all layouts of Overcooked.
    }
    \label{fig:albfix_all}
\end{center}
\end{figure*}

In \cref{fig:albfix_all}, we test the robustness of Albatross to wrong temperature estimations in the different layouts of Overcooked.
To this end, we evaluate Albatross with a fixed temperature input against the human behavior cloning agent.
For all layouts, except Forced Coordination, the highest reward is achieved when using a temperature close to the true temperature.
In those layouts, the achieved reward drops when using temperature estimations far from the ground truth temperature.
In the Forced Coordination layout, the training process at low temperatures is much more difficult, because only sparse rewards resulting from the nearly random proxy policy are observed.
Therefore, Albatross performs better at higher temperature estimations.
Note that the reported reward does not necessarily align perfectly with the reward achieved by MLE, because in some situations the adaptive estimation of MLE may be advantageous.
For example, the human behavior cloning agent performs better in some parts of the state space and as a consequence the best temperature estimation differs between episodes.

\begin{figure*}[!ht]
\begin{center}
    \subfigure[Tron]{%
        \includegraphics[width=0.25\textwidth]{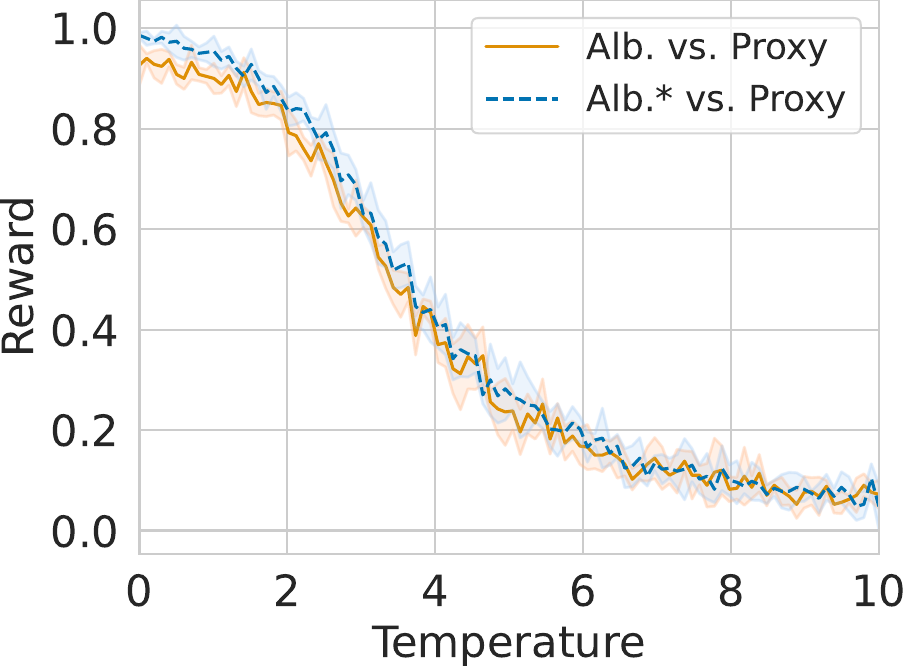}
    }
    \subfigure[Stochastic 2 Player]{%
        \includegraphics[width=0.25\textwidth]{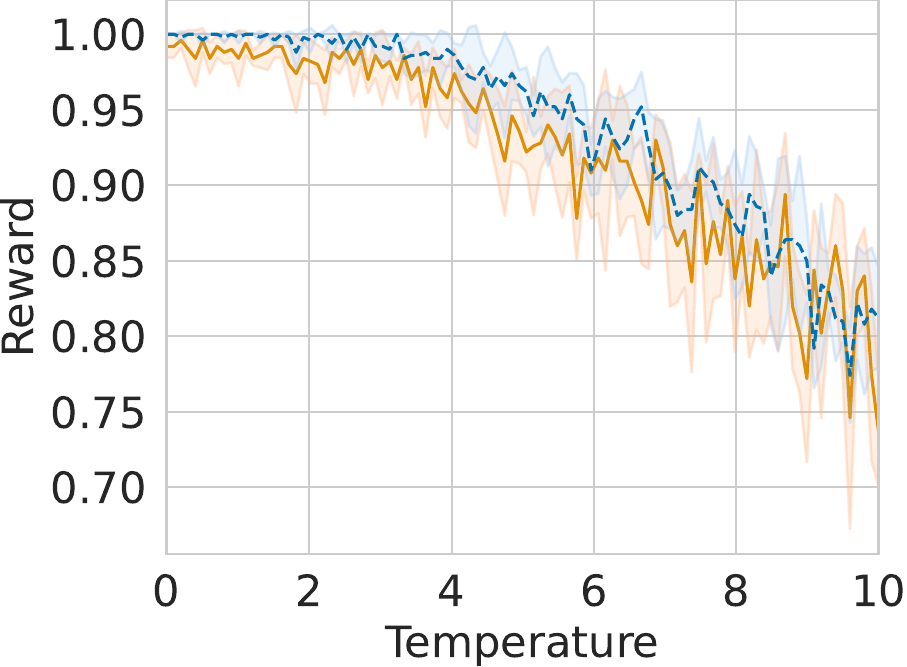}
    }
    \subfigure[Stochastic 4 Player]{%
        \includegraphics[width=0.25\textwidth]{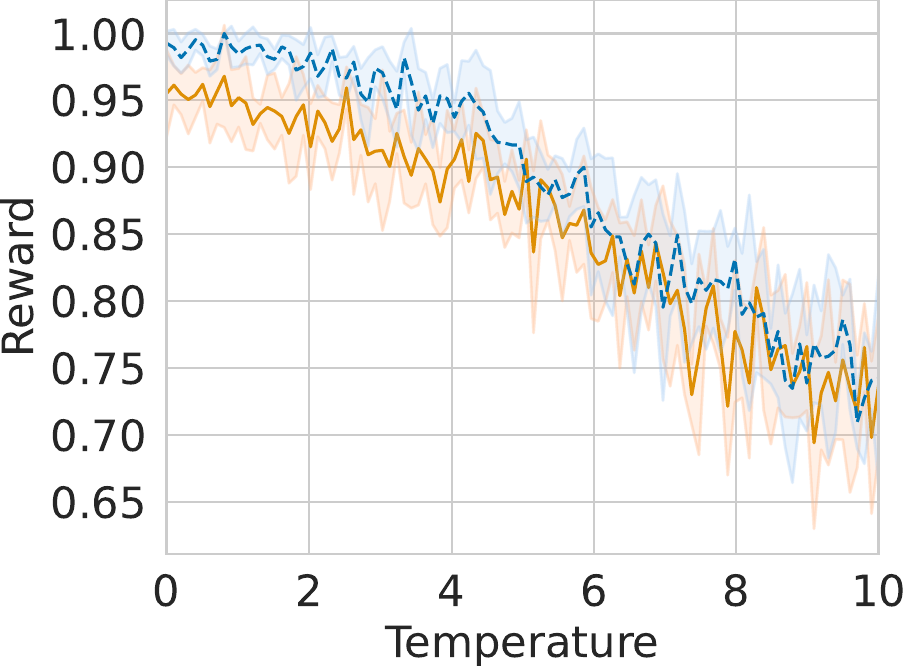}
    }
    \caption{
    Evaluation of Albatross versus the proxy model at different temperatures.
    }
    \label{fig:bs_proxy_temps}
\end{center}
\end{figure*}

In \cref{fig:bs_proxy_temps}, we test the performance of Albatross against the proxy model in the game of Battlesnake at different temperatures, akin to \cref{fig:proxy_temps}.
At low temperatures, the proxy model plays nearly random and consequently the response model wins nearly all games.
In contrast, at high temperatures the proxy model plays rationally and the reward achieved by the response model drops.
We denote Albatross with fixed ground truth temperature input as Albatross*, and compare its performance to standard Albatross with MLE.
In the game of Battlesnake, the reward difference between MLE and given ground truth is small, indicating that MLE is able to accurately estimate rationality.

\subsection{Behavior of Albatross in Repeated Matrix Games}
The behavior of Albatross and (S)BRLE as well as the temperature estimation can be explained using simple repeated matrix games.
For simplicity we use the BRLE, because SBRLE would require the computation of a softmax that only obfuscates the explanation.
Depending on the response temperature used, the results may vary, but the main points of this explanations are also applicable to the SBRLE.
Let's suppose we have the matrix of a Normal form game, which is given in \cref{tbl:nfg-example}.

\begin{table}[ht]
\caption{Example Normal form game.}
\label{tbl:nfg-example}
\vskip 0.15in
\begin{center}
\begin{small}
\begin{sc}
\begin{tabular}{lccccc}
\toprule
 & P2, A1 & P2, A2 \\
\midrule
P1, A1  & 4, 4  & 0, 0 \\
P1, A2   & 1, 1 & 2, 2 \\
\bottomrule
\end{tabular}
\end{sc}
\end{small}
\end{center}
\vskip -0.1in
\end{table}

Pareto optimal play would be the Nash equilibrium at (A1, A1) with utility of 4 for both players. 
Suppose P1 plays according to a BRLE and estimates the temperature of P2, who plays according to some specific pattern. 
For example, if P2 always plays A1, then their rationality is estimated as positive infinity (which we clip to some maximum temperature), since they always play the best action. 
Hence, the best response to a Logit equilibrium of maximum temperature would be A1 for P1. 
This dynamic is illustrated in the left part of \cref{tbl:behavior_optimal}.


\begin{table}[ht]
\caption{Behavior of Albatross with Pareto optimal player (left) and Tit-for-Tat agent (right)}
\label{tbl:behavior_optimal}
\vskip 0.15in
\begin{center}
\begin{small}
\begin{sc}
\begin{tabular}{lcccc|ccccccc}
\toprule
Time Step & 1 & 2 & 3 & ... & 1 & 2 & 3 & 4 & 5 & ... & k\\
\midrule
P1, Action  & 1 & 1 & 1 & ...& 1 & 2 & 2 & 2 & 2 & ... & 2\\
P2, Action   & 1 & 1 & 1 & ...& 2 & 1 & 2 & 2 & 2 & ... & 2\\
Temp. Estimate   & - & Max & Max & ... & - & Min & 0 & 0.17 & 0.27 & ...  & Max\\
Utility & 4 & 4 & 4 & ...& 0 & 1 & 4 & 4 & 4 & ... & 4 \\
\bottomrule
\end{tabular}
\end{sc}
\end{small}
\end{center}
\vskip -0.1in
\end{table}

In contrast, if P2 always plays A2, then the temperature estimate is negative infinite (clipped to some minimum temperature). 
Therefore, both players would continue to play A2, which is the other pure Nash equilibrium (A2, A2) of the game.

Now suppose P2 plays an adaptive "Tit for Tat" strategy, which always copies the last action of P1. 
Depending on the initial actions chosen by both players, different behaviors emerge. We use an optimistic initialization for BRLE, choosing A1 (corresponding to high rationality estimation of P2) at first. 
If P2 also plays A1 at first, then both players will continue to play A1 forever (see first table). 
In contrast, if P2 starts with A2, then the initial rationality estimate would be the minimum clipped temperature. 
Then, P2 copies A1 from the first turn. 
Having played A1 and A2 both once, the rationality estimate is zero as it corresponds to uniform random play. 
Then, both agents continue to play A2 and the temperature estimate rises to the maximum clipped temperature. 
This behavior can be seen in the right part of \cref{tbl:behavior_optimal}.
We should note, that Albatross was not designed to play adaptive agents. 
Instead, we make the standard assumption of opponent modelling that the other agents play a static strategy only depending on the environment state. 



\end{document}